    \algnewcommand\algorithmicinput{\textbf{Input:}}
    \algnewcommand\Input{\item[\algorithmicinput]}
    \algnewcommand\algorithmicinit{\textbf{Initialize:}}
    \algnewcommand\Init{\item[\algorithmicinit]}
\newcommand{\para}[1]{\paragraph{#1.}}
\newcommand{\R}{\mathbb{R}}                   
\newcommand{\B}{\mathbb{B}}                   
\newcommand{\I}{\mathbf{I}}                   
\newcommand{\cbr}[1]{\left\{#1\right\}}  
\newcommand{\br}[1]{\left(#1\right)}  
\newcommand{\sqbr}[1]{\left[#1\right]}  
\newcommand{\norm}[1]{\left\|#1\right\|}  
\newcommand{\abs}[1]{\left|#1\right|}  
\newcommand{\given}{\;\middle|\;}  
\DeclareMathOperator*{\argmax}{arg\,max}
\DeclareMathOperator*{\argmin}{arg\,min}
\renewcommand{\P}{\mathbb{P}}                 
\newcommand{\E}{\mathbb{E}}
\newcommand{\ind}[1]{\mathbf{1}{\left\{#1\right\}}}
\DeclareMathOperator{\Bern}{Bern}
\DeclareMathOperator{\Unif}{Unif}
\renewcommand{\v}[1]{\boldsymbol{#1}}         
\newcommand{\tp}{^{\mathsf{T}}}
\renewcommand{\cal}[1]{\mathcal{#1}}
\newcommand{\what}[1]{\widehat{#1}}
\newcommand{\wtilde}[1]{\widetilde{#1}}
\DeclareRobustCommand{\term}[1]{%
  \ifmmode
    B_{\text{#1}}%
  \else
    term $B_{\text{#1}}$%
  \fi
}
\newcommand{\uterm}[2]{\underbrace{#1}_{\term{#2}}}
\DeclareRobustCommand{\case}[1]{%
  \ifmmode
    {\text{(C#1)}}%
  \else
    ${\text{(C#1)}}$%
  \fi
}
\newcommand{\udef}[2]{\underbrace{#1}_{=:#2}}
\newcommand{\eps}{\epsilon}
\newcommand{\veps}{\varepsilon}
\newcommand{\bigO}[1]{\mathcal{O}\left(#1\right)}
\newtheorem{theorem}{Theorem}[section]
\newtheorem{lemma}[theorem]{Lemma}
\newtheorem{proposition}[theorem]{Proposition}
\newtheorem{remark}[theorem]{Remark}
\newtheorem{assumption}[theorem]{Assumption}
\crefname{assumption}{Assumption}{Assumptions}
\Crefname{assumption}{Assumption}{Assumptions}
\title{Queue Length Regret Bounds for\\Contextual Queueing Bandits}
\author{
Seoungbin Bae$^{1}$, Garyeong Kang$^{1}$, Dabeen Lee$^{2}$\thanks{Also affiliated with Research Institute of Mathematics, Seoul National University, Korea Institute for Advanced Study, and the Interdisciplinary Program in Artificial Intelligence, Seoul National University.}
\\
$^{1}$Department of Industrial \& Systems Engineering, KAIST \\
$^{2}$Department of Mathematical Sciences, Seoul National University
\\
\texttt{sbbae31@kaist.ac.kr, river\_if@kaist.ac.kr, dabeenl@snu.ac.kr}
}
\begin{document}

\maketitle

\begin{abstract}
We introduce \emph{contextual queueing bandits}, a new context-aware framework for scheduling while simultaneously learning unknown service rates. Individual jobs carry heterogeneous contextual features, based on which the agent chooses a job and matches it with a server to maximize the departure rate. The service/departure rate is governed by a logistic model of the contextual feature with an unknown server-specific parameter. To evaluate the performance of a policy, we consider \emph{queue length regret}, defined as the difference in queue length between the policy and the optimal policy. The main challenge in the analysis is that the lists of remaining job features in the queue may differ under our policy versus the optimal policy for a given time step, since they may process jobs in different orders. To address this, we propose the idea of policy-switching queues equipped with a sophisticated coupling argument. This leads to a novel queue length regret decomposition framework, allowing us to understand the short-term effect of choosing a suboptimal job-server pair and its long-term effect on queue state differences. We show that our algorithm, CQB-$\v\veps$, achieves a regret upper bound of $\wtilde{\cal O}(T^{-1/4})$. We also consider the setting of adversarially chosen contexts, for which our second algorithm, CQB-Opt, achieves a regret upper bound of $\cal O(\log^2 T)$. Lastly, we provide experimental results that validate our theoretical findings.

\end{abstract}

\section{Introduction}

 Queueing systems play an important role in modern service platforms such as cloud job schedulers, personalized recommendation systems, ride-hailing, delivery marketplaces, call centers, and large-scale LLM inference~\citep{neely2010stochastic,aksin2007modern,yang2024queueing,fu2024efficient,mitzenmacher2025queueing,lee2024design}. A central difficulty in these platforms is the necessity to account for individual jobs with diverse features and contexts, such as job sizes, power usage, user profiles, and compatible devices, when assigning them to processors. Providing such context-aware service is crucial to improving overall service quality. However, once heterogeneous contextual features are allowed for different jobs, assuming a priori knowledge of service (departure) rates for all job–server pairs is unrealistic. This motivates real-time scheduling algorithms that simultaneously learn unknown context-dependent service rates from observed queue dynamics while making job-server assignments in real time.

There has been a substantial body of work on scheduling in uncertain environments, where the scheduler must simultaneously learn unknown system parameters while making job–server allocation decisions. Among these efforts, approaches that model the problem of learning unknown service rates using multi-armed bandit formulations have received significant attention~\citep{krishnasamy2016regret,10.1145/3391403.3399491,choudhury2021job,stahlbuhk2021learning,sentenac2021decentralized,doi:10.1287/opre.2021.2100,freund2022efficient,yang2023learning,10479187,krishnakumar2025minimizingqueuelengthregret}. In particular, frameworks that minimize the notion of \emph{queue length regret}~\citep{krishnasamy2016regret,stahlbuhk2021learning,krishnakumar2025minimizingqueuelengthregret}, defined as the difference in queue length under a given policy versus the optimal policy, provide a useful lens for developing and analyzing algorithms that ensure stability even when service rates are unknown. However, these existing works on queue length regret do not take into account individual job contexts.



Recently, \cite{kim2024queueing} consider a context-aware approach in which each queue is assigned a fixed contextual vector, and all jobs within the same queue share the same context. Then a job in a queue is allocated to a server whose service rate is determined by the contextual vector and the unknown parameter of the server. However, since it is required for their model to fix the number of distinct queues, it does not fully support heterogeneous contexts for different jobs. Another issue is that they define regret to maximize the cumulative weight sum for the MaxWeight algorithm, which is far from capturing queue length regret.



 Motivated by these limitations, we propose \emph{contextual queueing bandits}, a new context-aware framework to learn unknown service rates where jobs arrive with heterogeneous contextual features, the agent selects a job–server pair for assignment in each time step, and the service/departure rate is determined by a logistic model with the contextual feature and the unknown server-specific parameter. We present two algorithms, CQB-$\v\veps$ and CQB-Opt, and evaluate the policies via the notion of queue length regret. Unlike previous work that assumes a fixed context for each queue, allowing heterogeneous contexts brings about a specific challenge. That is, the context features of the remaining jobs in a queue under our policy may differ from those under the optimal policy, since the two distinct policies may take different job processing orders. We call this phenomenon \emph{queue state misalignment}, illustrated in \Cref{fig:1}. Addressing this issue is the main challenge in analyzing algorithms designed to minimize queue length regret. Our contributions are summarized below in detail.

\begin{figure}[t] 
    \centering
    \includegraphics[width=0.5\linewidth]{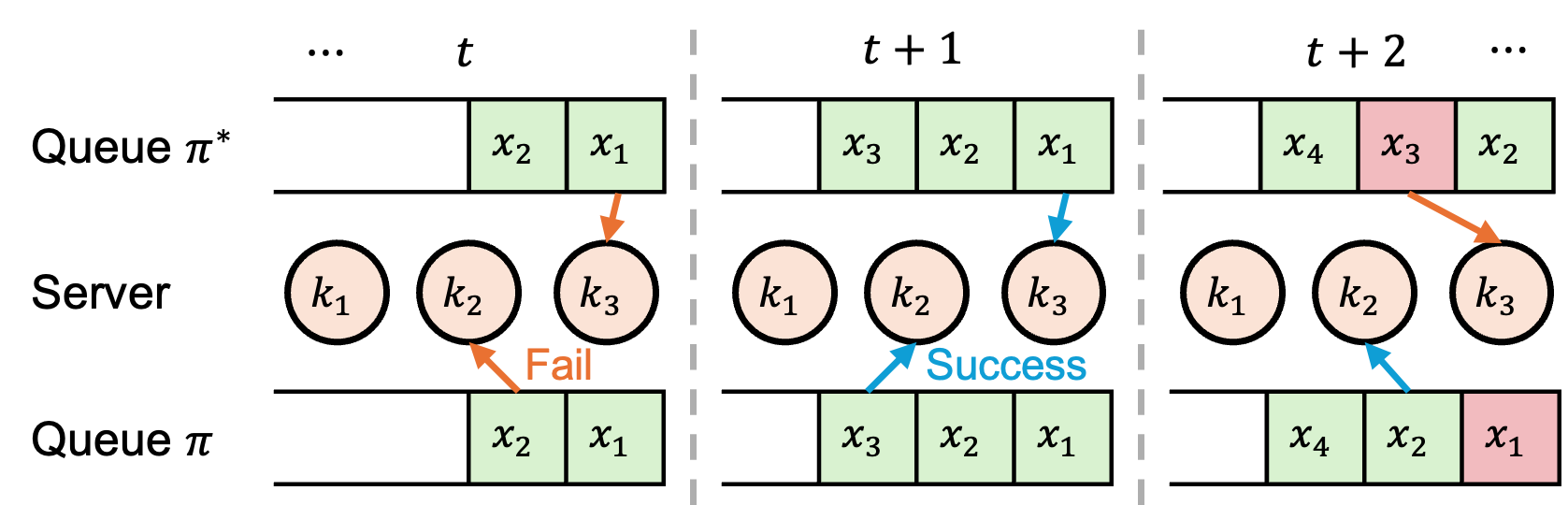}
    \caption{Illustration of the queueing processes under our policy $\pi$ and the optimal policy $\pi^*$ in \emph{contextual queueing bandits} with three servers. Due to a suboptimal choice by our policy $\pi$ in round $t+1$, the queue states diverge in round $t+2$, where we call this \emph{queue state misalignment}.}
    \label{fig:1}
\end{figure}


 \begin{itemize}    
     \item We introduce \emph{contextual queueing bandits}, a novel context-aware framework for scheduling and queueing system control while simultaneously learning unknown service rates. Jobs carry \emph{heterogeneous context} information, based on which the agent selects a job and assigns it to a server so that the departure rate is maximized. The departure rate is given by a feature-based logistic model, whose parameter is unknown to the agent. To evaluate policies, we take \emph{queue length regret}, which is defined as the difference in queue length under a given policy versus the optimal policy. This is the first work to establish a provable decay rate for queue length regret under contextual queueing bandit settings. 

     \item The main challenge in analyzing queue length regret is that, for a given time step, the list of remaining job features in the queue under our policy may differ from the remaining feature list under the optimal policy. This happens because our policy may process jobs in a different order from the optimal policy. We refer to this as \emph{queue state misalignment}, which makes it difficult to compare the queue states under two distinct policies for a given time step. To address this, we take \emph{policy-switching queues} which follow our policy up to a certain round and then switch to the optimal policy thereafter. This lets us decompose queue length regret into a telescoping sum, each of whose terms is the difference in queue length between two policy-switching queues whose moments of switching differ by exactly one round. Under this alignment technique, equipped with a sophisticated coupling argument, we can provide an upper bound on each term given by the product of (i) the difference in departure rates for the round when two consecutive policy-switching queues apply different policies and (ii) the long-term impact of queue state differences at the end of time horizon.

     \item We show that our  algorithm, CQB-$\v\veps$, achieves a queue length regret bound of $\wtilde{\cal O} (T^{-1/4})$, which vanishes for large $T$. To achieve the decaying bound, our algorithm proceeds with two phases; it goes through a \emph{pure-exploration} phase first and switches to an \emph{$\v\veps$-greedy} policy. We can argue that the gap between service rates for the job-server pairs under the algorithm and the optimal policy is nonincreasing. Furthermore, we show that the impact of policy switching in one round on queue state differences at the end of time horizon is nondecreasing. Combining these two via Chebyshev's sum inequality, we provide the desired queue length regret upper bound.

     \item We also consider the setting where job contexts are chosen by the adversary. For the adversarial setting, our second algorithm, CQB-Opt, achieves a queue length regret upper bound of $\cal O(\log^2 T)$. The main difficulty for the analysis is that it is hard to uniformly control the \emph{uncertainty term}, which is defined to capture the magnitude of the selected feature vector relative to the previously chosen ones and its directional deviation from them. 
     In contrast, for the stochastic setting, we observe a smooth transition from a phase where the uncertainty term is large to another phase where it is small, based on which we develop the two phase structure of CQB-$\v\veps$. However, for the adversarial setting, the uncertainty term can still be large even towards the end of time horizon. To get around this issue, we instead count the total number of such rounds and analyze the underlying randomness in their occurrence. This lets us apply the coupling-based queue length regret decomposition technique, subject to incurring a poly-logarithmic term in the regret upper bound. 
 \end{itemize}



We again emphasize that our analysis and proof techniques are novel, developed to characterize queue length regret under queue state misalignment. In particular, our queue length regret decomposition approach lets us understand the short-term effect of choosing a suboptimal job-server pair and its long-term effect on queue state differences, which is of independent interest.

\section{Related Work}


\para{Queueing Bandits}
\citet{krishnasamy2016regret} introduce the framework of queueing bandits for modeling queueing system control problems where learning unknown service rates is required while scheduling jobs. By leveraging connections with multi-armed bandits and, at the same time,  discovering queueing-specific dynamics, they establish a decaying upper bound on queue length regret. This work has motivated a significant body of follow-up work on designing algorithms for scheduling while learning unknown service rates based on bandit learning, such as learning with dispatching and MaxWeight-based algorithms~\citep{krishnasamy2016regret,10.1145/3391403.3399491,choudhury2021job,stahlbuhk2021learning,sentenac2021decentralized,doi:10.1287/opre.2021.2100,freund2022efficient,yang2023learning,10479187,krishnakumar2025minimizingqueuelengthregret}.
However, these do not consider heterogeneous contexts for individual jobs, limiting their applications in modern personalized service platforms.  Recently, \citet{kim2024queueing} consider a context-aware queueing bandit problem based on the multinomial logit model. However, their setting still limits the number of distinct contextual feature vectors, and they study a proxy notion of regret, missing a queue length regret analysis.

\para{Logistic Bandits}

We assume that the service rate of a server allocated to a job follows a logistic model of the job's contextual feature vector and the server-specific unknown parameter. Hence, the problem of learning unknown server parameters relates to logistic bandits. Starting from the seminal work of \citet{filippi2010parametric}, there has been a flurry of activities to characterize and improve regret bounds for logistic bandits \citep{faury2020improved,li2017provably,jun2021improved,abeille2021instance,lee2024a,bae2025neural}. However, there are fundamental differences between logistic bandits and our contextual queueing bandits framework, which makes it difficult to directly apply the regret analyses for logistic bandits to our setting. First, logistic bandits consider exogenous action sets shared by all policies, but action sets in our setting are endogenous and policy-dependent. To be more precise, actions taken in previous rounds affect the queue state in the current and future rounds. This leads to the phenomenon of queue state misalignment under our policy versus the optimal policy. Moreover, we investigate queue length regret instead of cumulative reward regret, reflecting the objective tailored to achieve queueing system control. These differences in the dynamics of action sets and the regret definition require new techniques.

\section{Problem Setting}

We consider a discrete-time contextual queueing system with a single queue and $K$ servers, given as follows. In each round $t\in[T]$, the agent observes a queue state $\cal X_t\in\cal X\subset\R^d$, given by the set of contextual feature vectors of remaining jobs, chooses a job (with feature) $x_t\in\cal X_t$, and matches it with a server $k_t\in[K]$. Let $Q(t)$ be the queue length at the beginning of round $t$, i.e. $Q(t) = |\cal X_t|$. Let $A(t)\in\{0,1\}$ indicate the random arrival of a job at time $t$ with mean $\lambda$, and let $D(t)\in\{0,1\}$ denote the random departure at time $t$ with mean $\mu(x_t\tp\theta_{k_t}^*)$ where $\mu(z):=(1+e^{-z})^{-1}$ is 
the logistic function and $\theta_{k_t}^*\in\R^d$ is the unknown parameter of server $k_t$. When $A(t)=1$, we denote by $x^{(t)}$ the feature of the job arriving at time $t$. Then 
\begin{align*}
    \cal X_{t+1} = \cal X_t \setminus \{x_t:D(t)=1\}\cup\{x^{(t)}:A(t)=1\}, \quad Q(t+1) = [Q(t) +A(t) - D(t)]^+,
\end{align*}
where $[q]^+=\max\{0,q\}^+$. For technical convenience, we assume that a dummy job $x_0\in\R^d$ is chosen when the queue is empty, while ignoring the feedback of the queueing process to avoid unfair advantage. 
We denote by $E(t)\in\{0,1\}$ the random variable that indicates whether we run random exploration in round $t+1$, used in \Cref{alg:1}. We define the arrival tuple and the departure tuple as $\v A(t):=(A(t),\wtilde x^{(t)})$, $\v D(t):=(D(t),(x_t,k_t))$, where $\wtilde x^{(t)}$ is a masked feature defined as $\wtilde x^{(t)}= \wtilde x$ if $A(t)=0$ where $\wtilde x\in\R^d$ is a fixed symbol for the sign of no arrival, and $\wtilde x^{(t)}=x^{(t)}$ if $A(t)=$1. Then we define the filtration ${\cal F_t} := \sigma(\cal X_1, \v A(1),\v D(1),E(1),\dots, \v A(t-1),\v D(t-1))$ for $t\in[T]$. For notational convenience, when we write $\v A(t)=0$ (or $\v A(t)=1$), it means the observation includes $A(t)=0$ (or $A(t)=1$) and $\wtilde x^{(t)}$. Similarly, $\v D(t)=0$ (or $\v D(t)=1$) denotes observing $D(t)=0$ (or $D(t)=1$) together with $(x_t,k_t)$. For the augmented $\sigma$-algebra $\cal G$ generated by a filtration $\cal F$ and a random variable $X$, i.e., $\cal G := \cal F \lor \sigma(X)$, we use the shorthand notation “$\cal F, X$”. For example, $\E[Z \mid \cal F, X]$ denotes the conditional expectation of $Z$ with respect to $\cal F \lor \sigma(X)$, and $\E[Z \mid \cal F, X=1]$ denotes the conditional expectation given $\cal F$ and the event ${X=1}$.

Our goal in this paper is to characterize how large the queue length can be under our policy $\pi$ at the end of the horizon, compared to the queue length under the optimal policy $\pi^*$ that runs with prior knowledge of $\theta_k^*$ for all $k\in[K]$. Here, given a set of remaining feature vectors $\mathcal{Y}\subseteq \mathcal{X}$, $\pi^*$ chooses a job-server pair that maximizes the departure rate given by $\max_{x\in \mathcal{Y},k\in[K]} \mu(x\tp \theta_k^*)$. If there is a tie, we assume that the job entering first is taken. Then \emph{queue length regret} is defined as $R_T = \E[Q(T) - Q^*(T)]$, where $Q^*(t)$ is the queue length at the beginning of time step $t$ under the optimal policy. Lastly, we state some standard assumptions for logistic and queueing bandits:
\begin{assumption} \label{ass:basic}
$\|x\|_2 \leq 1$ for all $x\in\cal X$, and for some known constant $S$, $\theta_k^* \in \Theta:=\{\theta:\|\theta\|_2\leq S\}$ for all $k\in[K]$.
\end{assumption}

\begin{assumption} \label{ass:constants}
    There exist $\kappa, R >0$ such that $1/ \kappa\leq  \dot\mu(x\tp\theta) \leq R$ for all $x\in\cal X$ and $\theta\in\Theta$.
\end{assumption}

\begin{assumption} \label{ass:iid}
The features of newly arriving jobs are assumed to be independently and identically distributed (i.i.d.) from an unknown distribution $\cal D$. Moreover, there exists $\Sigma \succ 0$ such that $\E_{x\sim\mathcal D}[x x^\top] \succeq \Sigma$ with $\sigma_0^2 := \lambda_{\min}(\Sigma) > 0$.
\end{assumption}

\begin{assumption} \label{ass:slackness}
There exists some traffic slack $\eps>0$ such that for each $x\in\cal X$, there exists a server $k^*\in [K]$ with $\mu(x\tp\theta_{k^*}^*)-\lambda\geq \epsilon$.
\end{assumption}

\Cref{ass:basic,ass:constants} are standard in the logistic bandit literature. Here, \Cref{ass:constants} provides problem-dependent parameters to control the local behavior of $\dot\mu(\cdot)$. 
\Cref{ass:iid} is also standard in sampling-based approaches for logistic bandits. \Cref{alg:1} works under \Cref{ass:iid}, but for the adversarial setting and \Cref{alg:2}, we lift the assumption. \Cref{ass:slackness} applies traffic slack to guarantee stability, which can also be found in \citet{kim2024queueing}.

\section{Challenges and New Techniques} \label{sec:3}

\para{Queue State Misalignment} To describe the challenge of the problem and motivate our approach, we start with the simplest case where all new jobs share a single fixed context $x_1$, which can be viewed as the setting of previous work due to \citet{krishnasamy2016regret,kim2024queueing}. Note that in such a case, the queue state $\cal X_t$ under our policy and the optimal queue state $\cal X_t^*$ have no difference in the types of features. 
Now, a key measure for assessing the performance of a policy is (the conditional expectation of) the gap between the departure rates $D^*(t)$ for the optimal queue and ours $D(t)$, given by $\E\sqbr{D^*(t) - D(t)\given \cal F_t} = \max_{x\in\cal X_t^*, k\in[K]} \mu\br{x\tp\theta_k^*} - \mu\br{x_t\tp \theta_{k_t}^*}$. 
An optimistic algorithm for the logistic bandit would choose $x_t, k_t$ maximizing the upper confidence bound (UCB) based on computing $\argmax_{x\in\cal X_t, k\in[K]} \mu(x\tp \what\theta_{t-1,k}) + b_t(x,k)$ where $\what \theta_{t-1,k}$ is the estimated parameter of server $k$, and $b_t(x,k)$ is a bonus term. Choosing the bonus term as an upper bound on the prediction error $|\mu (x\tp\what\theta_{t-1,k})-\mu\br{x\tp\theta_k^*}|$ for all $x\in\cal X$ and $k\in[K]$, 
the gap can be bounded from above as
\begin{align*}
    \max_{x\in\cal X_t^*, k\in[K]} \mu(x\tp\theta_k^*) - \mu(x_t\tp \theta_{k_t}^*) &\leq \max_{x\in\cal X_t^*, k\in[K]} (\mu (x\tp\what\theta_{t-1,k}) + b_t(x,k)) - \mu(x_t\tp \theta_{k_t}^*) \nonumber \\
    &= \max_{x\in\cal X_t, k\in[K]} (\mu(x\tp\what\theta_{t-1,k}) + b_t(x,k)) - \mu(x_t\tp \theta_{k_t}^*) \nonumber \\
    &= \mu(x_t\tp\what\theta_{t-1,k_t}) + b_t(x_t,k_t) - \mu(x_t\tp \theta_{k_t}^*) \leq 2 b_t(x_t,k_t) 
\end{align*}
where the first and last inequalities are due to the definition of $b_t(x,k)$, and the first equality holds because $\cal X_t^* = \cal X_t$. Then we may apply results on choosing $b_t(x,k)$ which leads to a sublinear upper bound on the cumulative sum of gap terms. However, the result is viable only when $\cal X_t^* = \cal X_t$ or $\cal X_t^* \subseteq \cal X_t$. The condition does not necessarily hold as soon as we allow two distinct features.

\para{Aligning Queue States via Policy-Switching Queues} Taking a detour from the issue of queue state misalignment, we introduce our new approach to analyze queue length regret. It is two-fold; we consider policy-switching queues, and to compare their queue length at the end of horizon, we develop a coupling argument. 

We define $Q(t_1,t_2)$ as the length of the queue at the beginning of time step $t_2$ under our policy applied from time steps $t=1$ to $t_1$ and the optimal policy applied from $t=t_1+1$ to $t_2-1$. In other words, for $Q(t_1,t_2)$, we switch from our policy to the optimal policy at time $t_1+1$. By definition, $Q(t_2-1,t_2)=Q(t_2)$ and $Q(0,t_2)=Q^*(t_2)$. Moreover, we may decompose queue length regret as $R_T = \E[Q(t) - Q^*(t)] = \sum_{t=1}^{T-1} \E[Q(t,T)-Q(t-1,T)]$. Here, $Q(t,T)-Q(t-1,T)$ is the length difference between two consecutive policy-switching queues whose moments of switching differ by exactly one round. 

To bound the gap $Q(t,T)-Q(t-1,T)$ for two consecutive policy-switching queues, we construct a coupling process for $Q(t,T)$ and $Q(t-1,T)$ to align them. We denote by $Q^+(t)$ and $Q^-(t)$ the coupled queue lengths of $Q(t,T)$ and $Q(t-1,T)$. We use notations $A^+(t), A^{-}(t)$ for their job arrivals and $D^+(t), D^{-}(t)$ for job departures. For job arrival in each round $i\in[T]$, we draw a shared random variable $U_{i,1}\sim\Unif(0,1)$. The two queues receive the same new job if $U_{i,1}\leq \lambda$, i.e., $\v A^+(i)=\v A^-(i)=1$, and if $U_{i,1}> \lambda$, they receive no job, i.e., $\v A^+(i)=\v A^-(i)=0$. Similarly, for job departure in each round $i\in[T]$, we draw a shared random variable $U_{i,2} \sim\Unif (0,1)$. The server $k_i^+$ assigned to the first queue succeeds, i.e., $D^{+}(t)=1$, if $U_{i,2} \leq \mu((x_i^+)\tp\theta_{k_i^+}^*)$, and $D^{+}(t)=0$ if $U_{i,2} >\mu((x_i^+)\tp\theta_{k_i^+}^*)$. Likewise, we have $D^{-}(t)=1$ if $U_{i,2} \leq \mu((x_i^-)\tp\theta_{k_i^-}^*)$ and $D^{-}(t)=0$ otherwise, where $k_i^-$ is the server assigned to the second queue. This coupling process preserves the marginals as $\E [Q(t,T)] = \E [Q^+(t)]$ and $\E[Q(t-1,T)] = \E[Q^-(t)]$, implying in turn that $R_T=\sum_{t=1}^{T-1}\E[Q^+(t)-Q^-(t)]$. 

Therefore, to establish an upper bound on $R_T$, it suffices to consider
$$\psi(t,T) := Q^+(t) - Q^-(t).$$
As the two coupled queues follow the same policy up to round $t-1$, their queue states at time step $t$ are identical. 
With this alignment, we can characterize $\psi(t,T)$ as follows.
\begin{lemma} \label{lem:diff_bound}
    We have $\psi(t,T) \in\{-1,0,1\}$ for all $t\in[T-1]$.
\end{lemma}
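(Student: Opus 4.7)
My plan is to prove a stronger structural invariant on the two coupled queue states by induction on the round $s = t, t+1, \ldots, T$, from which $|Q^+(t) - Q^-(t)| \le 1$ follows by taking $s = T$. Specifically, I will show that at every $s$ the pair $(\cal X^+_s, \cal X^-_s)$ always satisfies one of four configurations: (I) $\cal X^+_s = \cal X^-_s$; (II+) $\cal X^+_s = \cal X^-_s \sqcup \{y\}$ for a single extra element $y$; (II-) $\cal X^-_s = \cal X^+_s \sqcup \{y\}$; or (III) $\cal X^+_s = (\cal X^-_s \setminus \{y^-\}) \sqcup \{y^+\}$ with $y^+ \neq y^-$. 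Each of these forces $|Q^+_s - Q^-_s| \in \{0,1\}$, so evaluating the invariant at $s = T$ gives the claim.

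\textbf{Base case and the divergent round.} At the start of round $t$ the two coupled queues have executed our policy $\pi$ from round $1$ through round $t-1$ under the same arrival and service uniforms $U_{i,1}, U_{i,2}$, so their states must coincide; we are in Type (I). In round $t$, $Q^+$ uses $\pi$ while $Q^-$ uses $\pi^*$; the shared $U_{t,1}$ makes the arrival identical (or absent) in both, and the only source of divergence is a possibly different job-server choice. A direct case analysis on $(D^+(t), D^-(t)) \in \{0,1\}^2$ together with the identities of the removed jobs shows that $(\cal X^+_{t+1}, \cal X^-_{t+1})$ lands in one of the four types.

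\textbf{Inductive step and main obstacle.} For $s \ge t+1$ both queues run $\pi^*$ on their respective states, and the coupled arrival always preserves the type, so only the departure step needs work. In Type (I) the optimal policies make the identical choice (FIFO tie-breaking agrees because common jobs entered in the same order under the coupling), and the shared $U_{s,2}$ then forces identical departures, preserving Type (I). For Type (II+)/(II-) I split on whether the single exclusive element is the optimal choice: if so, its service rate dominates the best rate on the other side and the shared $U_{s,2}$ yields the one-sided implication $D^-(s) = 1 \Rightarrow D^+(s) = 1$, after which a short case analysis on $(D^+, D^-)$ lands in one of the four types; if not, the optimal choice lies in the common part and is the same on both sides, preserving the type. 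The main obstacle is Type (III), where both exclusives $y^+$ and $y^-$ may simultaneously be optimal on their own sides and the rates $\mu((y^+)^{\top}\theta^*_{k^+})$ and $\mu((y^-)^{\top}\theta^*_{k^-})$ are not naturally ordered; here I will enumerate four sub-cases based on whether each side's optimal lies in the common part $C$ or in an exclusive, and use the optimality inequalities within each queue (each chosen rate dominates every other rate on the same side) together with the shared uniform $U_{s,2}$ to rule out any transition whose symmetric-difference size exceeds two. Empty-queue boundary cases are handled by the convention $D = 0$ when the queue is empty, which can only shrink the symmetric difference; combining all the cases closes the induction and gives $\psi(t, T) \in \{-1, 0, 1\}$.
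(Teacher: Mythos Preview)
Your proof is correct and follows essentially the same state-invariant induction as the paper: the paper's five states $S_{i,0}, S_{i,1}^\pm, S_{i,2}^\pm$ are exactly your types (I), (II$\pm$), (III), with the paper additionally splitting your Type~(III) according to which of the two exclusive jobs has higher priority under $\pi^*$'s ordering $\succ$. That extra priority tracking is not needed to conclude $\psi(t,T)\in\{-1,0,1\}$ (your four-sub-case analysis of Type~(III) closes without it), but the paper uses it to establish the sharper statement that $\psi(t,T)\in\{-1,0\}$ when $D^+(t)=D^-(t)$ and $\psi(t,T)\in\{0,1\}$ when $D^+(t)=0,\,D^-(t)=1$; this refinement is what is actually invoked in the proof of \Cref{lem:decomposition}, so you will need to recover it later even though your argument suffices for \Cref{lem:diff_bound} as stated.
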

Moreover, the expected value of $\psi(t,T)$ can be bounded from above as follows.
\begin{lemma} \label{lem:decomposition}
Let $(x_t^*,k_t^*)\in \argmax_{x\in\cal X_t,k\in[K]} \mu(x\tp\theta_k^*)$, let $\cal F_t^+ :=\cal F_t\lor\sigma(E(t-1),\v A(t))$, and let $\wtilde \psi(t,T) := \E [\psi(t,T) \mid \cal F_t^+,\v D^+(t)=0, \v D^-(t)=1]$. Then for all $t\in[T-1]$, $$\E[\psi(t,T)]\leq \underbrace{\sqrt{\E\left[\left(\mu\left((x_t^*)\tp \theta_{k_t^*}^*\right) - \mu\left(x_t\tp \theta_{k_t}^*\right)\right)^2\right]}}_{=:m_t}\underbrace{\sqrt{ \E \left[ \wtilde \psi(t,T)\right]}}_{=:\delta_t}.$$
\end{lemma}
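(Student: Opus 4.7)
The plan is to decompose $\E[\psi(t,T)]$ by the law of total expectation, conditioning on $\cal F_t^+$ together with the coupled uniform $U_{t,2}$ that drives departures at round $t$. Given $\cal F_t^+$, the common state $\cal X_t$, the optimal pair $(x_t^*, k_t^*)$, and our own pair $(x_t, k_t)$ are all measurable (the exploration bit $E(t-1)$ is included), so the joint $(D^+(t), D^-(t))$ falls into exactly three events: both equal to $1$ with conditional probability $\mu(x_t\tp\theta_{k_t}^*)$; only $D^-(t)=1$ with conditional probability $p_t := \mu((x_t^*)\tp\theta_{k_t^*}^*) - \mu(x_t\tp\theta_{k_t}^*)$; and both equal to $0$ with conditional probability $1-\mu((x_t^*)\tp\theta_{k_t^*}^*)$. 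I will argue that the first and third events contribute nonpositively to $\E[\psi(t,T) \mid \cal F_t^+]$, while the middle event contributes exactly $p_t\,\wtilde\psi(t,T)$.

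For the third event, no job is removed and the arrival is coupled, so $\cal X^+_{t+1} = \cal X^-_{t+1}$; running the optimal policy under shared randomness thereafter leaves the two coupled queues identical forever, so $\psi(t,T) = 0$. For the first event, if our policy picks $x_t = x_t^*$ the same reasoning gives $\psi(t,T) = 0$; if $x_t \neq x_t^*$, the states at $t+1$ differ by the swap $x_t \leftrightarrow x_t^*$ but have equal length. Here I would invoke a path-coupling invariant: because $(x_t^*, k_t^*)$ is an argmax of $\max_k \mu(\cdot\tp\theta_k^*)$ over $\cal X_t$, the ``better'' queue $Q^+$ stochastically dominates $Q^-$ from $t+1$ onward under the optimal policy, yielding $\E[\psi(t,T) \mid \cal F_t^+, \text{this subcase}] \leq 0$.

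For the middle event, $\cal X^+_{t+1} = \cal X^-_{t+1} \cup \{x_t^*\}$ and $Q^+(t+1) = Q^-(t+1)+1$. I would show by induction, using the same coupling under the optimal policy, that the containment $\cal X^+_i \supseteq \cal X^-_i$ with length gap in $\{0,1\}$ is preserved for all $i \geq t+1$: at each round either both queues pick the same common job (gap and containment preserved) or $Q^+$ alone removes $x_t^*$ (gap shrinks to zero and the queues merge, after which they stay identical). Consequently $\psi(t,T) \in \{0,1\}$ on this event, so $\wtilde\psi(t,T) \in [0,1]$. Combining the three cases gives $\E[\psi(t,T) \mid \cal F_t^+] \leq p_t\,\wtilde\psi(t,T)$. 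Taking the outer expectation and using that $p_t$ and $\wtilde\psi(t,T)$ are $\cal F_t^+$-measurable, Cauchy--Schwarz yields $\E[\psi(t,T)] \leq \sqrt{\E[p_t^2]}\,\sqrt{\E[\wtilde\psi(t,T)^2]}$; since $\wtilde\psi(t,T) \in [0,1]$ we have $\wtilde\psi(t,T)^2 \leq \wtilde\psi(t,T)$, and the claimed bound $\E[\psi(t,T)] \leq m_t \delta_t$ follows.

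The hard part will be the swap subcase of the first event: unlike the middle event, $\cal X^+_{t+1}$ and $\cal X^-_{t+1}$ differ by an exchange rather than an inclusion, so the dominance of $Q^+$ over $Q^-$ is not immediate and must be verified by tracking how the optimal policy responds to the swap round by round. The driving mechanism is $f(x_t^*) \geq f(x_t)$, where $f(x) := \max_k \mu(x\tp\theta_k^*)$, which implies that at every round $Q^+$ enjoys a weakly higher maximal departure rate until the swap is absorbed (either both queues remove the same common job and the swap is retained but neutral, or one of $x_t, x_t^*$ is removed and the states merge). Because Lemma~\ref{lem:diff_bound} already guarantees $|\psi(t,T)| \leq 1$, only the sign of this dominance is actually needed for our bound.
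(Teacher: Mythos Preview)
Your proposal is correct and follows essentially the same route as the paper: condition on $\cal F_t^+$ and the three departure outcomes, discard the $D^+=D^-$ cases as nonpositive, identify the disagreement probability as $p_t$, then apply Cauchy--Schwarz together with $\wtilde\psi^2\le\wtilde\psi$. The paper packages the swap subcase into a separately proved refinement of Lemma~\ref{lem:diff_bound} (a five-state transition argument giving $\psi(t,T)\in\{-1,0\}$ whenever $D^+(t)=D^-(t)$); your inline sketch of that case is slightly oversimplified---after one swapped job departs the two systems need not merge but can pass through a state where $Q^-$ alone carries an extra job---yet your conclusion $\psi\le 0$ and the overall bound are unaffected.
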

We prove these lemmas in \Cref{sec:3_proof}. From the bound in \Cref{lem:decomposition}, $m_t$ represents the immediate error incurred when choosing a suboptimal job-server pair, and $\delta_t$ captures the long-term effect of the difference in the queue states at time step $t+1$ of the two consecutive policy-switching queues. Note that $R_T\leq \sum_{t=1}^{T-1}m_t\delta_t$, and we may deduce an upper bound on the right-hand side by taking the Cauchy-Schwarz inequality, applying the elliptical potential lemma \citep[Lemma 10]{abbasi2011improved} on $\sum_{t=1}^{T-1} m_t^2$, and using \Cref{lem:diff_bound} to get $\sum_{t=1}^{T-1} \delta_t^2\leq T$. This approach would give us an upper bound of $\wtilde {\cal O} (\sqrt{T})$ on $R_T$, which matches typical regret upper bounds for contextual bandits. However, such a bound is not sufficient, as we hope for a decaying bound. 

In the following section, we take a more refined analysis and characterize some monotonic behaviors of the sequences of $m_t$ and $\delta_t$, based on which we prove a decaying upper bound of $\wtilde {\cal O} (T^{-1/4})$ on queue length regret. This establishes that the queue length difference vanishes as $T$ gets large.


\section{Decaying Regret for Contextual Queueing Bandits} \label{sec:4}


\begin{algorithm} [t] 
\caption{CQB-$\v\veps$} \label{alg:1}
\begin{algorithmic} [1]
    \Init $\v\veps=T^{-1/2}$, $p=0$, $V_{0,k} = \kappa \lambda_0 \I$, $k=1,\dots, K$
    \For{$t=1,\dots, T$}
        \If{$t\in[1, \tau]$ \textbf{and} $A(t-1)=1$} \Comment{Pure-exploration}
            \State $x_t\leftarrow x^{(t-1)},~~ k_t\leftarrow p+1$ 
            \State $p\leftarrow p+1 \pmod{K}$ 
        \ElsIf{$t\in[\tau+1,T]$ \textbf{and} $E(t-1)=1$ \textbf{and} $A(t-1)=1$} \Comment{$\v\veps$-exploration}
            \State $x_t \leftarrow x^{(t-1)},~~ k_t \sim \Unif([K])$ 
        \Else \Comment{UCB-based rule}
            \State $(x_t, k_t) \leftarrow \argmax_{x\in\cal X_t,\; k\in[K]} \mu(x\tp\what\theta_{t-1,k}) + \beta_{t-1,k}\,\|x\|_{V_{t-1,k}^{-1}}$  \label{eq:ucb} 
        \EndIf
        \State Match $(x_t, k_t)$ and receive $r_t$
        \For{$k=1,\dots, K$}
            \If{$k=k_t$}
                \State Update $\what\theta_{t,k}$ as in \Cref{para:mle}, and $\beta_{t,k}$ as in \Cref{eq:beta}
                \State $V_{t,k}\leftarrow V_{t-1,k}+x_t x_t\tp$ 
            \Else
                \State $\what\theta_{t,k}\leftarrow \what\theta_{t-1,k}$, $\beta_{t,k}\leftarrow \beta_{t-1,k}$, $V_{t,k}\leftarrow V_{t-1,k}$
            \EndIf
        \EndFor
        \State Sample $E(t) \sim \Bern(\v\veps)$
    \EndFor
\end{algorithmic}
\end{algorithm}

\para{Idea and Outline}

Recall that $m_t$ represents instantaneous regret, so as we keep updating our estimators close to the true parameters, we expect to reduce it as $t$ increases. $\delta_t$ captures the long-term effect of the disagreement between two consecutive policy-switching queues, so one can anticipate that $\delta_t$ will increase in $t$ since an early disagreement (a small $t$) will wear off as they follow the same optimal policy for the remaining $T-t-1$ rounds. In fact, we will show that our algorithm, described in \Cref{alg:1}, guarantees that $m_t\leq M_t$ where $\{M_t\}_{t\in [T]}$ is a nonincreasing sequence and that $\delta_t\leq \Delta_t$ where $\{\Delta_t\}_{t\in [T]}$ is a nondecreasing sequence. Then we apply Chebyshev's sum inequality to deduce $R_T \leq (\sum_{t=1}^{T-1} M_t) (\sum_{t=1}^{T-1} \Delta_t)/(T-1)$. Lastly, we show that $\sum_{t=1}^{T-1} M_t=\wtilde {\cal O}(T^{3/4})$ and $\sum_{t=1}^{T-1} \Delta_t=\cal O(\log (T))$, which leads to a decaying upper bound on queue length regret.

\para{Algorithm} \label{para:mle}

\Cref{alg:1} consists of two phases. It starts with a \emph{pure-exploration} phase where, if a new job arrives, we select it while choosing a server in a round-robin manner. After the pure-exploration phase, we apply the \emph{$\v\veps$-exploration policy} which, if a new job arrives, explores it with probability $\v\veps$ and chooses a job-server pair optimistically by maximizing the UCB term as in \Cref{eq:ucb}. For both phases, we take an exploration step only when a new job arrives, in which case the new job has to be chosen for exploration. After a job-server matching, we receive binary feedback $r_t$ on whether the server completed the job. Then we update the estimator $\what \theta_{t,k}$ by maximizing the regularized cross-entropy loss as $\what\theta_{t,k}^{(1)} = \argmax_\theta \{\sum_{i=1}^t \ind{k_i=k} \sqbr{r_i \log \mu(x_i\tp \theta) + (1-r_i)\log(1-\mu(x_i\tp\theta))} - ({\lambda}/{2})\|\theta\|_2^2\}$ and then projecting it onto the parameter set as $\what\theta_{t,k} = \argmin_{\theta\in\Theta} \|\sum_{i=1}^t \ind{k_i=k} [\mu(x_i\tp \theta) - \mu(x_i\tp \what\theta_{t,k}^{(1)})]x_i\|_{V_{t,k}^{-1}}$. Lastly, we update the confidence radius $\beta_{t,k}$ as
\begin{align}
    \beta_{t,k} =  \frac{\kappa}{2}\sqrt{2d\log\br{1+\frac{1}{\kappa \lambda_0 d}\sum_{i=1}^{t}\ind{k_i=k}} +\log(K/\delta)} + \frac{\kappa S \sqrt{\lambda_0}}{2} = \bigO{\sqrt{d\log(T)}}. \label{eq:beta}
\end{align}
We note that the choice of estimators for the logistic model parameters and the confidence radius is due to \citet{faury2020improved}, thus we may obtain the following prediction error bound.
\begin{lemma} \label{lem:prediction_error}
It holds with probability at least $1-\delta$ that $|\mu(x\tp\what\theta_{t-1,k})-\mu(x\tp\theta_k^*)|\leq \beta_{t-1,k} \|x\|_{V_{t-1,k}^{-1}}$ for all $t\in[T]$, $x\in\cal X$, and $k\in[K]$.
\end{lemma}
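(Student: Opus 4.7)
The plan is to import the self-concordance-based analysis of \citet{faury2020improved}, applied server-by-server and then combined via a union bound over $k\in[K]$. Fix a server $k$ and restrict attention to the subsequence of rounds $\{i\leq t-1 : k_i=k\}$, whose regularized design matrix is precisely $V_{t-1,k}=\kappa\lambda_0\I+\sum_{i\leq t-1}\ind{k_i=k}x_ix_i\tp$. With respect to the filtration $\{\cal F_i\}$, the centered feedback $r_i-\mu(x_i\tp\theta_{k_i}^*)$ is a conditionally zero-mean, bounded martingale difference: even though $(x_t,k_t)$ depends on the queue state and therefore on all past arrivals and departures, conditional on $\cal F_i\lor\sigma(x_i,k_i)$ the reward $r_i$ is a fresh Bernoulli draw with mean $\mu(x_i\tp\theta_{k_i}^*)$. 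So Faury et al.'s concentration machinery applies with the server-specific design $V_{t-1,k}$ unchanged.

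First I would apply a self-concordant concentration bound on the regularized score $g_{t-1,k}(\theta) := \lambda_0\theta+\sum_{i\leq t-1}\ind{k_i=k}\mu(x_i\tp\theta)x_i$ to show that the unprojected maximum-likelihood estimator $\what\theta_{t-1,k}^{(1)}$ satisfies $\snorm{g_{t-1,k}(\what\theta_{t-1,k}^{(1)})-g_{t-1,k}(\theta_k^*)}_{V_{t-1,k}^{-1}}\leq \gamma_{t-1,k}$ with $\gamma_{t-1,k}=\bigO{\sqrt{d\log(t/\delta)}+\sqrt{\lambda_0}S}$. The self-concordance of the logistic link is what lets the local curvature $\dot\mu(x_i\tp\theta_k^*)$ absorb the Bernoulli variance and avoid a $\kappa$ factor inside the radius. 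The projection step then transfers this bound to the constrained estimator: by the minimizing property of the projection and $\theta_k^*\in\Theta$ (guaranteed by \Cref{ass:basic}), the triangle inequality yields $\snorm{g_{t-1,k}(\what\theta_{t-1,k})-g_{t-1,k}(\theta_k^*)}_{V_{t-1,k}^{-1}}\leq 2\gamma_{t-1,k}$ up to a constant.

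Next I would convert this score-function bound into the desired prediction error bound. Writing $\ol\theta_s:=\theta_k^*+s(\what\theta_{t-1,k}-\theta_k^*)$ and using the fundamental theorem of calculus, $\mu(x\tp\what\theta_{t-1,k})-\mu(x\tp\theta_k^*)=\alpha(x)\,x\tp(\what\theta_{t-1,k}-\theta_k^*)$ with $\alpha(x):=\int_0^1\dot\mu(x\tp\ol\theta_s)\,ds\leq R$, and similarly $g_{t-1,k}(\what\theta_{t-1,k})-g_{t-1,k}(\theta_k^*)=H_{t-1,k}(\what\theta_{t-1,k}-\theta_k^*)$ with $H_{t-1,k}:=\lambda_0\I+\int_0^1\sum_{i\leq t-1}\ind{k_i=k}\dot\mu(x_i\tp\ol\theta_s)x_ix_i\tp\,ds$. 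Solving for $\what\theta_{t-1,k}-\theta_k^*$ and applying Cauchy--Schwarz in the $H_{t-1,k}^{-1}$-geometry gives $\sabs{x\tp(\what\theta_{t-1,k}-\theta_k^*)}\leq \snorm{x}_{H_{t-1,k}^{-1}}\snorm{g_{t-1,k}(\what\theta_{t-1,k})-g_{t-1,k}(\theta_k^*)}_{H_{t-1,k}^{-1}}$. Since $\dot\mu\geq 1/\kappa$ by \Cref{ass:constants}, $H_{t-1,k}\succeq V_{t-1,k}/\kappa$, so both $H^{-1}$-norms are bounded by $\sqrt{\kappa}$ times the corresponding $V^{-1}$-norms, producing a net factor of $\kappa$ out front. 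Plugging in the preceding $\gamma_{t-1,k}$ bound yields $\sabs{\mu(x\tp\what\theta_{t-1,k})-\mu(x\tp\theta_k^*)}\leq \beta_{t-1,k}\snorm{x}_{V_{t-1,k}^{-1}}$ with $\beta_{t-1,k}$ exactly as in \Cref{eq:beta}, and a union bound over $k\in[K]$ produces the $\log(K/\delta)$ term and the stated probability guarantee.

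The hard part will be rigorously importing the self-concordant Bernstein-style bound on $g_{t-1,k}(\what\theta_{t-1,k}^{(1)})-g_{t-1,k}(\theta_k^*)$ under our queue-dependent filtration. Even though the rounds in which server $k$ is active are determined by the queue state and all past bandit decisions, the standard adaptive-design concentration for logistic regression continues to apply once one verifies that the centered feedback is a martingale difference with respect to $\cal F_i\lor\sigma(x_i,k_i)$ --- a short check in our setting, since the environment draws $r_i$ independently of $\cal F_i$ given $(x_i,k_i)$. The remaining steps --- the projection argument, the Hessian manipulation, and the derivative control of $\alpha(x)$ --- are routine. Since the lemma is attributed to \citet{faury2020improved}, the cleanest write-up is to reduce the statement to their main concentration result after a short paragraph verifying these filtration hypotheses.
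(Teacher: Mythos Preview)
Your proposal is correct and follows essentially the same route as the paper: reduce to the $V$-norm confidence set of \citet{faury2020improved}, then convert to a prediction-error bound via Lipschitzness of $\mu$ and Cauchy--Schwarz, with a union bound over $k\in[K]$. The paper's write-up is considerably more abbreviated than your plan: it does not reconstruct the self-concordant score-function argument but simply cites Faury et al.'s Lemma 12 (their \Cref{lem:faury}) for $\snorm{\what\theta_{t-1,k}-\theta_k^*}_{V_{t-1,k}}\leq 2\kappa\beta_t$, applies $R$-Lipschitzness directly (rather than your integral representation $\alpha(x)$), and uses $R\leq 1/4$ to absorb constants into $\beta_{t-1,k}$. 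Your explicit verification that the centered feedback is a martingale difference under the queue-dependent filtration is a point the paper silently takes for granted, so that paragraph is a worthwhile addition even if you ultimately cite Faury et al.\ as you suggest.
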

In fact, we may take more recent parameter estimation frameworks developed for logistic bandits, such as those that avoid a projection step and guarantee tighter confidence bounds. Nevertheless, we take the basic estimation method for simpler presentation, letting us focus on the queueing part.

\para{Regret Analysis}

Let $\beta_t:=(\kappa/2)\sqrt{2d\log(1+t/(\kappa\lambda_0 d))+\log(K/\delta)}+\kappa S \sqrt{\lambda_0}/2$, and let the length of pure-exploration $\tau$ be given by
\begin{align}
    \tau := \frac{2C_3 K}{\lambda}\br{\frac{d+\log(K/\delta)}{\sigma_0^4} + \frac{16\beta_T^2}{\sigma_0^2 (\eps-2\v\veps)^2}} + \frac{\log(1/\delta)}{2\lambda^2}= \bigO{\frac{d\log(T)}{\sigma_0^4 \eps^2}} \label{eq:tau}
\end{align}
for some absolute constant $C_3>0$. Then we can argue that after the pure-exploration phase, the \emph{uncertainty term}, defined as $\|x\|_{V_{t-1,k}^{-1}}$, can be uniformly bounded.
\begin{lemma} \label{lem:burnin}
 It holds with probability at least $1-2\delta$ that $\|x\|_{V_{t-1,k}^{-1}} \leq \frac{\eps - 2\v\veps}{4\beta_{t-1,k}}$ for all $t\in[\tau+1, T]$, $x\in\cal X$, and $k\in[K]$.
\end{lemma}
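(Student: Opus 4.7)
The plan is to exploit the fact that during pure-exploration the algorithm processes every arriving job and cycles through the $K$ servers in round-robin order, so each design matrix $V_{\tau,k}$ accumulates enough i.i.d.\ feature samples from $\cal D$ to be well-conditioned. Combined with the monotonicity $V_{t-1,k}\succeq V_{\tau,k}$ for $t\geq \tau+1$ (a consequence of the rank-one PSD updates), this yields a uniform bound on $\|x\|_{V_{t-1,k}^{-1}}$ across all subsequent rounds, all $x\in\cal X$, and all servers.

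The argument proceeds in three steps. First, I would lower bound the number of arrivals $N_\tau := \sum_{t=1}^{\tau} A(t-1)$ during pure-exploration. Since $A(t-1)\iid \Bern(\lambda)$, Hoeffding's inequality gives $N_\tau \geq \lambda\tau - \sqrt{(\tau/2)\log(1/\delta)}$ with probability at least $1-\delta$; the third component $\log(1/\delta)/(2\lambda^2)$ in \Cref{eq:tau} is calibrated so that the right-hand side is at least $\lambda\tau/2$. The round-robin rule ensures each server $k$ gets $n_k \geq \lfloor N_\tau/K\rfloor \geq \lambda\tau/(4K)$ samples, each of which is an independent draw from $\cal D$ under \Cref{ass:iid}. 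Second, I would apply a matrix Bernstein inequality: since $\E_{x\sim\cal D}[xx^\top]\succeq \sigma_0^2 I$ and $\|x\|_2\leq 1$, a standard calculation yields
$$\sum_{i:k_i=k} x_i x_i^\top \succeq \frac{\sigma_0^2 n_k}{2} I$$
with failure probability at most $\delta/K$ whenever $n_k \geq C_3(d+\log(K/\delta))/\sigma_0^4$ (the $\sigma_0^4$ arises from inverting the Bernstein deviation at scale $\sigma_0^2 n_k/2$). The first component of \Cref{eq:tau} is tailored to meet this minimum-sample condition. A union bound over $k\in[K]$ then produces, with probability at least $1-\delta$,
$$V_{\tau,k}\succeq \frac{\sigma_0^2 \lambda\tau}{8K}I\qquad\text{for every } k\in[K].$$
Third, by monotonicity and $\|x\|_2\leq 1$, for all $t\geq \tau+1$, all $x$, and all $k$,
$$\|x\|_{V_{t-1,k}^{-1}}^2\leq \|x\|_{V_{\tau,k}^{-1}}^2 \leq \frac{8K}{\sigma_0^2 \lambda\tau}.$$
The second component $32C_3K\beta_T^2/(\sigma_0^2\lambda(\eps-2\v\veps)^2)$ of $\tau$ ensures this is at most $(\eps-2\v\veps)^2/(16\beta_T^2)\leq (\eps-2\v\veps)^2/(16\beta_{t-1,k}^2)$, using the monotonicity $\beta_{t-1,k}\leq \beta_T$ implicit in \Cref{eq:beta}. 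Taking square roots completes the argument, with the Hoeffding and matrix Bernstein steps each contributing at most $\delta$ to the total failure probability of $2\delta$.

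The main obstacle is the matrix concentration step, because the per-server sample count $n_k$ is itself random (it depends on $N_\tau$ and on where the round-robin pointer lands at time $\tau$), so the Bernstein bound must trigger on a high-probability event uniformly across $k\in[K]$. The three-term additive form of $\tau$ in \Cref{eq:tau} is designed with this in mind: the first term supplies enough samples per server to activate the Bernstein bound at precision $\sigma_0^2/2$, the second pushes $\lambda_{\min}(V_{\tau,k})$ large enough to meet the target accuracy $(\eps-2\v\veps)/(4\beta_{t-1,k})$, and the third handles the Hoeffding concentration of $N_\tau$. A subtlety worth noting is that the Bernstein bound applies cleanly because the features collected during pure-exploration are drawn i.i.d.\ from $\cal D$ independently of the round-robin assignment; this independence would break down in the $\v\veps$-exploration phase, which is why monotonicity is invoked rather than reapplying concentration.
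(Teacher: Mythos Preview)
Your proposal is correct and follows essentially the same approach as the paper's proof: Hoeffding concentration for the number of arrivals during pure exploration, a matrix concentration inequality (the paper invokes Proposition~1 of \citet{li2017provably} rather than matrix Bernstein, but these play the same role) to lower-bound $\lambda_{\min}(V_{\tau,k})$ per server, a union bound over the $K$ servers, and then the monotonicity $V_{t-1,k}\succeq V_{\tau,k}$ for $t\geq\tau+1$ together with $\beta_{t-1,k}\leq\beta_T$ to extend the bound uniformly over all subsequent rounds. The two failure events (arrival concentration and matrix concentration) each contribute $\delta$, yielding the stated $1-2\delta$.
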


Next, we argue that random exploration steps by the $\v\veps$-exploration policy reduce the uncertainty term while its optimistic exploitation rounds successfully control instantaneous regret with the uncertainty term. Combining these, we show that (conditional) \emph{expected instantaneous regret} can be upper bounded by a nonincreasing function in $t$. We define the \emph{good event} $\cal E_g$ as the event that both \Cref{lem:prediction_error,lem:burnin} hold for all $t\in[T]$. We also define the truncated good event $\cal E_g^{\le t}$ by restricting $\cal E_g$ up to round $t$, i.e., the event that \Cref{lem:prediction_error,lem:burnin} hold for all rounds $s\in[t]$. By construction, $\cal E_g^{\le t}$ is $\cal F_t$-measurable. 
\begin{lemma} \label{lem:nonincreasing}
    Under the $\v\veps$-exploration policy, and on the good event $\cal E_g$, the expected instantaneous regret is bounded from above as
    \begin{align*}
        \E[(\mu((x_t^*)\tp \theta_{k_t^*}^*) - \mu(x_t\tp \theta_{k_t}^*))^2] \leq \min\cbr{1,~\lambda \v\veps + 4\beta_T^2 \nu(t-1)}
    \end{align*}
    $\forall t\in[\tau+1, T]$, where $\nu(t) := (\lambda_0 + \frac{\lambda \v\veps (t-\tau) \sigma_0^2}{4K})^{-1} + \frac{1}{\lambda_0} \exp(-\frac{(t-\tau)\lambda\v\veps }{8K}) + \frac{d}{\lambda_0} \exp (-\frac{(t-\tau)\lambda \v\veps  \sigma_0^2}{16K})$.
\end{lemma}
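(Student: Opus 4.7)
The plan is to partition the expectation over whether round $t$ is a random $\v\veps$-exploration step or a UCB-based exploitation step. Under the $\v\veps$-exploration policy (i.e., for $t\in[\tau+1,T]$), an exploration step occurs iff both $A(t-1)=1$ and $E(t-1)=1$, which has probability $\lambda\v\veps$. Using the trivial bound that the squared regret is at most $1$ (since $\mu\in(0,1)$), exploration rounds contribute at most $\lambda\v\veps$ to the expectation, accounting for the first linear term in the bound.

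For UCB rounds, on the good event $\cal E_g$, combining \Cref{lem:prediction_error} with the UCB selection rule in \Cref{eq:ucb} yields the standard optimistic inequality
\[
\mu((x_t^*)\tp\theta_{k_t^*}^*)-\mu(x_t\tp\theta_{k_t}^*)\leq 2\beta_{t-1,k_t}\|x_t\|_{V_{t-1,k_t}^{-1}}.
\]
A key observation is that $(x_t^*,k_t^*)$ is the argmax over the very same queue state $\cal X_t$ that the algorithm optimizes over, so the optimism argument is unaffected by queue-state misalignment between our policy and the optimal policy. Squaring and using that $\beta_{t,k}$ is nondecreasing in $t$ (so $\beta_{t-1,k_t}\leq\beta_T$), the contribution of UCB rounds is bounded by $4\beta_T^2\,\E[\|x_t\|_{V_{t-1,k_t}^{-1}}^2]$.

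The main technical step will then be to show $\E[\|x_t\|_{V_{t-1,k_t}^{-1}}^2]\le\nu(t-1)$. Since $\|x_t\|\le 1$, it suffices to control $1/\lambda_{\min}(V_{t-1,k_t})$. For each server $k\in[K]$, let $N_k(t-1)$ denote the number of $\v\veps$-exploration rounds in $[\tau+1,t-1]$ that were assigned to server $k$; the three independent events of arrival, $\v\veps$-flag, and uniform server selection imply that $N_k(t-1)$ stochastically dominates a $\Bin(t-1-\tau,\lambda\v\veps/K)$ random variable. A scalar multiplicative Chernoff bound then yields $\P(N_k(t-1)<\lambda\v\veps(t-1-\tau)/(2K))\leq\exp(-\lambda\v\veps(t-1-\tau)/(8K))$, which explains the second term of $\nu(t-1)$. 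Conditioning on $N_k(t-1)$ being at least this threshold, a matrix Chernoff inequality applied to the $N_k$ i.i.d.\ rank-one matrices $x_ix_i\tp$ (using $\E[xx\tp]\succeq\Sigma$ with $\lambda_{\min}(\Sigma)\ge\sigma_0^2$ from \Cref{ass:iid}) gives $\lambda_{\min}\big(\sum_{i=1}^{N_k}x_ix_i\tp\big)\ge N_k\sigma_0^2/2$ with failure probability at most $d\exp(-cN_k\sigma_0^2)$, which after substituting the high-probability lower bound on $N_k$ produces the dimension-dependent third term of $\nu(t-1)$.

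Combining the pieces will give the final bound. On the intersection of the two concentration events (over all $k$ via a union bound absorbed into constants), $\lambda_{\min}(V_{t-1,k_t})\ge\lambda_0+\lambda\v\veps(t-1-\tau)\sigma_0^2/(4K)$, giving the leading term of $\nu(t-1)$; on the complementary bad events, the crude bound $\|x_t\|_{V_{t-1,k_t}^{-1}}^2\le 1/\lambda_{\min}(V_{0,k_t})\le 1/\lambda_0$ weighted by the respective failure probabilities contributes the two exponential terms. Adding the $\lambda\v\veps$ exploration contribution to $4\beta_T^2\nu(t-1)$ yields the stated bound, and $\min\{1,\cdot\}$ is immediate since $\mu\in(0,1)$. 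The main obstacle I anticipate is the careful combination of the scalar and matrix Chernoff steps through a tower-law argument, which requires disentangling the randomness of arrivals, $\v\veps$-flags, server selection, and feature draws, and verifying that the random selection of $k_t$ itself does not inflate the clean form of $\nu(t-1)$.
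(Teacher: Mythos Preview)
Your proposal is correct and follows essentially the same approach as the paper: split into exploration versus UCB rounds (the paper calls this the event $\cal E_{t,2}$), bound exploration by $\lambda\v\veps$, bound UCB rounds via optimism on $\cal E_g$ by $4\beta_T^2\|x_t\|_{V_{t-1,k_t}^{-1}}^2$, and then control $\E[\|x_t\|_{V_{t-1,k_t}^{-1}}^2]$ by combining a scalar multiplicative Chernoff bound on the count of $\v\veps$-exploration rounds hitting server $k$ with a matrix Chernoff bound on the resulting i.i.d.\ outer products, using the crude $1/\lambda_0$ bound on the failure events. The paper packages the eigenvalue step as a separate lemma (its Lemma~\ref{lem:min_eig}) and applies it to the selected server $k_t$ rather than union-bounding over all $k$, but since the failure probabilities in that lemma are uniform in $k$ this makes no difference to the form of $\nu(t-1)$.
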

Lastly, the following lemma shows that the expected difference between $Q^+$ and $Q^-$ is upper-bounded by a \emph{clipped exponential ramp}, exhibiting an exponential growth until a certain round (the threshold round) and then clipped to $1$, which is nondecreasing in $t$. We carefully choose the threshold round based on the length $\tau$ of the pure-exploration phase, where the uncertainty term can be large. Consequently, if the number of remaining rounds $T-t-1$ is large compared to $\tau$, the impact of disagreement in $D^+$ and $D^{-}$ will disappear with high probability. When $T-t-1$ is small, we still have that $\wtilde \psi(t,T)\leq 1$ by \Cref{lem:diff_bound}.

\begin{lemma} \label{lem:queue_diff_exp}
 Let $\omega := {4\tau}/{\eps}  $, and let $C_\rho := 1 + 16/\eps^2$. On the good event $\cal E_g$, we have
\begin{align*}
    \E \left[\wtilde \psi(t,T)\right] \leq
    \begin{cases}
        \min \cbr{1,~2 C_\rho \exp \br{- \eps^2 \br{T-t-1 - \omega}/32}}  & \text{if } t\leq T - \omega - 1 \\
        1 & \text{if } t> T - \omega - 1
    \end{cases}.
\end{align*}
\end{lemma}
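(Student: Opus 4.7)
The case $t > T - \omega - 1$ is immediate since $\wtilde\psi(t, T) \leq 1$ by \Cref{lem:diff_bound}, so the substantive work is for $t \leq T - \omega - 1$. My plan is a coupling argument. Conditional on $\v D^+(t) = 0$, $\v D^-(t) = 1$, and $\cal F_t^+$, we have $Q^+(t+1) - Q^-(t+1) = 1$ with $\cal X^+_{t+1} = \cal X^-_{t+1} \cup \{x_t^-\}$ at the start of round $t+1$. From that round on, both coupled queues run the optimal policy under the shared randomness $U_{i,1}, U_{i,2}$. I will upper-bound $\wtilde\psi(t, T)$ by the probability that the queues fail to \emph{coalesce}, i.e., to reach a common empty state by round $T$, since once both queues are empty the shared arrivals and identical optimal selections force them to remain synced for the rest of the horizon.

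The engine of decay is the negative drift of $Q^-$ under optimal scheduling. By \Cref{ass:slackness}, whenever $Q^-(i) > 0$, the optimal job--server pair has success probability at least $\lambda + \eps$, so $Q^-(i) + \eps\,(i - t - 1)$ is a supermartingale with $\pm 1$ bounded increments. Hoeffding's inequality applied to this supermartingale gives, conditional on $Q^-(t+1) = q$,
\[
    \P\br{\tau_0 - (t+1) > s \given Q^-(t+1) = q} \leq \exp\br{-\frac{\eps^2 (s - q/\eps)^2}{2s}},
\]
where $\tau_0 := \min\{i \geq t+1 : Q^-(i) = 0\}$; for $s \geq 2q/\eps$ this reduces to $\exp(-\eps^2 s / 8)$. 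To apply the bound uniformly I control $Q^-(t+1)$: during pure-exploration $i \in [1,\tau]$ the queue grows by at most one per round, while on the good event $\cal E_g$ the $\v\veps$-greedy phase delivers a departure rate close to $\lambda + \eps$ via \Cref{lem:burnin}, producing a geometric tail $\P(Q^-(t+1) \geq 2\tau + q) \lesssim \exp(-\Omega(\eps)\,q)$. The choice $\omega = 4\tau/\eps$ is precisely what is needed to drain an initial queue of size $\sim 2\tau$ at rate $\sim \eps/2$.

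Once $Q^-$ hits zero, \Cref{lem:diff_bound} forces $Q^+ \leq 1$, and any round satisfying $A(i) = 0$ and $D^+(i) = 1$---which occurs with probability at least $(1-\lambda)(\lambda + \eps) = \Omega(\eps)$---drains $Q^+$; thereafter the coupled arrivals and identical optimal selections keep the two queues identical, so $\psi(t,T) = 0$ on this event. Integrating the hitting-time tail against the geometric tail of $Q^-(t+1)$, and accounting for the additional $O(1/\eps)$ waiting time for a favorable $(A=0, D^+=1)$ round after $\tau_0$, yields the stated bound with exponent $\eps^2 (T - t - 1 - \omega)/32$ and prefactor $C_\rho = 1 + 16/\eps^2$, where the geometric series in $q$ contributes the $1/\eps^2$ term.

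The principal obstacle will be making the coalescence step rigorous. The drift argument controls $\tau_0$ cleanly, but $|\cal X^+_i| = |\cal X^-_i|$ does not entail matching feature sets, so a common empty state is the cleanest certificate of genuine coupling. Moreover, the joint state $(Q^-, Q^+)$ is not absorbed when $Q^- = 0$---an arrival immediately lifts $Q^-$ back to $1$ while possibly leaving $Q^+$ larger---so the proof must track a specific favorable window $(A(i) = 0, D^+(i) = 1)$ within which $Q^+$ drains while $Q^-$ stays at zero. Carefully stitching these events into a single geometric-tail bound, while ensuring that excursions of $Q^-$ away from zero do not invalidate the earlier drift estimate on $\tau_0$, is the delicate piece of the argument.
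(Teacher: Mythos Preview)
Your high-level plan (negative drift under the optimal policy after round $t$, plus a geometric tail on the initial queue length) matches the paper's structure, but you introduce an unnecessary complication by tracking $Q^-$. The paper tracks $Q^+$ instead, and the observation you are missing is that the state analysis behind \Cref{lem:diff_bound} (specifically the transitions among $S_{i,0}$ and $S_{i,1}^+$) yields $Q^-(i)\le Q^+(i)$ for every $i\ge t+1$ under the conditioning $D^+(t)=0,\ D^-(t)=1$. Consequently the event $\{Q^+(i)=0\}$ already forces both queues empty --- coalescence is automatic the instant $Q^+$ hits zero. The ``delicate stitching'' you describe (waiting for a favorable $A=0,\ D^+=1$ window after $\tau_0$, and handling excursions of $Q^-$ away from zero) is entirely self-inflicted and disappears once you switch from $Q^-$ to $Q^+$.

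Concretely, the paper upper-bounds $\wtilde\psi(t,T)$ by $\P\bigl(Q^+(i)>0\text{ for all }i\in[t+1,T]\bigr)$ and applies Azuma--Hoeffding to the optimal-policy drift of $Q^+$, obtaining
\[
2\exp\!\left(-\frac{(Q^+(t+1)-1-(T-t-1)\eps)^2}{8(T-t-1)}\right).
\]
It then splits on whether $Q^+(t+1)\le\eps(T-t-1)/2$, and controls the upper tail of $Q^+(t+1)=Q(t)+A(t)$ under policy $\pi$ via an exponential Lyapunov process $V(t)=(\beta/\rho)^{-\cal B(t-1)}e^{\eta Q(t)}$ with $\eta=\eps/2$; this is exactly your ``geometric tail $\P(Q\ge 2\tau+q)\lesssim e^{-\Omega(\eps)q}$'' step and is what produces the prefactor $C_\rho=1+16/\eps^2$. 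Your simpler route (deterministic bound $Q(\tau{+}1)\le\tau$ plus Hoeffding thereafter) would also work here, since on $\cal E_g$ every round after $\tau$ is good, but the Lyapunov argument is what the paper uses to get the stated constants cleanly.
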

Here, $T-\omega-1$ corresponds to the threshold round. Now we are ready to show an upper bound on queue length regret under \Cref{alg:1}.
\begin{theorem} \label{thm:regret}
    Let $\delta\in(0,T^{-2}]$, let $\tau$ be given as in \Cref{eq:tau}. For $T\geq \max\{\tau, 4/\eps^2\}$, the queue length regret of \Cref{alg:1} is bounded from above as
    \begin{align*}
        R_T = \cal O \br{ \frac{d^2T^{-1}\log^2(T)}{\sigma_0^8\eps^5}  + \frac{dT^{-1/4}\log(T)}{\sigma_0^4\eps^3} + \frac{d^{3/2}T^{-1/4}\log^{3/2}(T)}{\sigma_0^5\eps^3} + \frac{d^2 T^{-1/2}\log^{3/2}(T)}{\sigma_0^6\eps^3}}.
    \end{align*}
\end{theorem}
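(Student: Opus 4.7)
The plan is to execute the coupling-based decomposition $R_T\le\sum_{t=1}^{T-1}\E[m_t\delta_t]$ from \Cref{lem:decomposition}, feed in the monotonicity bounds of \Cref{lem:nonincreasing,lem:queue_diff_exp}, and close out with Chebyshev's sum inequality as foreshadowed in the outline. First I would peel off the bad event $\cal E_g^c$: by a union bound over \Cref{lem:prediction_error} and \Cref{lem:burnin} it has probability at most $2\delta\le 2T^{-2}$, and since $|\psi(t,T)|\le 1$ by \Cref{lem:diff_bound}, its contribution to $R_T$ is at most $2\delta T=\cal O(T^{-1})$. All subsequent steps condition on $\cal E_g$.

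Next I would split the sum at $\tau$. For $t\in[1,\tau]$, I use $m_t\le 1$ and $\delta_t\le \Delta_t$ from \Cref{lem:queue_diff_exp}; because $T\ge\max\{\tau,4/\eps^2\}$ and $\omega=4\tau/\eps$, these indices lie well inside the exponential regime of \Cref{lem:queue_diff_exp}, so this block contributes an exponentially small term that is absorbed into $\cal O(T^{-1})$. For $t\in[\tau+1,T-1]$, \Cref{lem:nonincreasing} supplies $m_t\le M_t:=\sqrt{\min\{1,\lambda\v\veps+4\beta_T^2\nu(t-1)\}}$, which is nonincreasing in $t$ by direct inspection of $\nu$, and \Cref{lem:queue_diff_exp} supplies $\delta_t\le \Delta_t$, which is nondecreasing in $t$. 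Chebyshev's sum inequality for oppositely monotonic sequences then gives
\begin{align*}
\sum_{t=\tau+1}^{T-1} M_t\Delta_t \le \frac{1}{T-\tau-1}\left(\sum_{t=\tau+1}^{T-1}M_t\right)\left(\sum_{t=\tau+1}^{T-1}\Delta_t\right).
\end{align*}

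For $\sum\Delta_t$, I would split at the threshold $T-\omega-1$: beyond it $\Delta_t\le 1$ contributes at most $\omega$, while before it the geometric series in $\exp(-\eps^2/64)$ sums to $\cal O(1/\eps^2)$, giving $\sum\Delta_t=\cal O(\omega)=\cal O(d\log T/(\sigma_0^4\eps^3))$ by the choice of $\tau$ in \Cref{eq:tau}. For $\sum M_t$, I would apply $\sqrt{a+b}\le\sqrt a+\sqrt b$ to split it into a $\sqrt{\lambda\v\veps}$ part summing to $T\sqrt{\lambda\v\veps}=\cal O(T^{3/4})$ under $\v\veps=T^{-1/2}$, plus a $2\beta_T\sqrt{\nu(t-1)}$ part whose three components (one algebraic with effective rate $c=\lambda\v\veps\sigma_0^2/(4K)$ and two exponentials in $(t-\tau)\lambda\v\veps$) are handled by integral and geometric comparisons; the algebraic piece dominates and yields $\cal O(\beta_T T^{3/4}/\sigma_0)$. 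Separately, the short initial window in which $\lambda\v\veps+4\beta_T^2\nu(t-1)>1$ forces the truncation $M_t\le 1$; its length, controlled via $\nu(t-1)\le 1/(4\beta_T^2)$, is $\cal O(d\beta_T^2/(\lambda\v\veps\sigma_0^2))$, and after Chebyshev this feeds the subdominant $d^2 T^{-1/2}\log^{3/2}T/(\sigma_0^6\eps^3)$ error, while the interaction between this window length and $\omega$ (through $\sum M_t\cdot\sum\Delta_t/T$) produces the $d^2 T^{-1}\log^2 T/(\sigma_0^8\eps^5)$ error. Multiplying the two sums, dividing by $T-\tau-1=\Theta(T)$, substituting $\v\veps=T^{-1/2}$ and $\beta_T=\cal O(\sqrt{d\log T})$, and adding the $\cal O(T^{-1})$ bad-event term gives the claimed rate.

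The main obstacle will be the careful bookkeeping inside $\sum M_t$: the sequence $\nu(t-1)$ mixes an algebraic $1/(\lambda_0+c(t-\tau))$ term with two exponentials having different rate constants, and the $M_t\le 1$ truncation must be treated separately on the short initial window where $\nu$ is still large. Getting the correct dependence on $d,\sigma_0,\lambda_0,K,\eps$ across these pieces to match the stated bound, once $\v\veps=T^{-1/2}$ is substituted, requires a careful case split on which component of $\nu$ dominates; everything else (verifying monotonicity of $M_t$ and $\Delta_t$, the geometric sum for $\Delta_t$, and the integral comparison for the algebraic piece of $\nu$) is routine.
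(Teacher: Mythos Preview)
Your plan matches the paper's approach: couple via \Cref{lem:decomposition}, upper-bound $m_t$ and $\delta_t$ by oppositely monotone sequences using \Cref{lem:nonincreasing,lem:queue_diff_exp}, and close with Chebyshev's sum inequality. The execution differs in one place that creates a small gap. The paper does \emph{not} split off the $[1,\tau]$ block; instead it sets $M_t=1$ for $t\le\tau$ and applies Chebyshev over the full range $[1,T-1]$, so that $\sum_{t}M_t$ picks up an additive $\tau$ and the product $\tau\cdot\sum_t\Delta_t/(T-1)\sim\tau\omega/T$ is precisely the $d^2T^{-1}\log^2 T/(\sigma_0^8\eps^5)$ term in the final bound. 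Your claim that the $[1,\tau]$ block is ``exponentially small, absorbed into $\cal O(T^{-1})$'' needs $T-\tau-1-\omega\gtrsim(1/\eps^2)\log T$, which the stated hypothesis $T\ge\max\{\tau,4/\eps^2\}$ does not provide, and your attempt to recover that first error term from a post-$\tau$ ``truncation window'' misidentifies its source (the window where $4\beta_T^2\nu(t-1)>1$ has length $\Theta(\beta_T^2/(\v\veps\sigma_0^2))\sim T^{1/2}d\log T$, not $\tau$, so its interaction with $\omega$ does not reproduce the $\sigma_0^{-8}\eps^{-5}T^{-1}$ scaling). The paper also sidesteps the truncation-window analysis entirely: it simply drops the $\min\{1,\cdot\}$ and bounds $M_t\le\sqrt{3\delta}+\sqrt{\lambda\v\veps}+2\beta_T\sqrt{\nu(t-1)}$ for $t>\tau$, summing each piece by integral and geometric comparison; the bad event is folded into $M_t,\Delta_t$ via the $\sqrt{3\delta}$ additive term rather than peeled off up front.
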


\begin{proof} [Proof sketch]

Starting with $m_t^2$, we split the analysis into two cases depending on whether the good event $\cal E_g$ holds. By \Cref{lem:prediction_error,lem:burnin}, we have $\P(\cal E_g^c) \leq 3\delta$. On the event $\cal E_g$, for all $t\in[\tau+1,T]$, we can apply \Cref{lem:nonincreasing}. Moreover, we have the trivial bound $m_t^2 \leq 1$ for all $t\in[1,\tau]$. Combining these bounds across the two cases yields
\begin{align*}
m_t \leq M_t:= \begin{cases}
    1 & \text{if } t\leq \tau \\
    \min\cbr{1,~ \sqrt{3\delta}+\sqrt{\lambda\v\veps + 4\beta_T^2 \nu(t-1)}} & \text{if } t> \tau
\end{cases}.
\end{align*}
Next, for $\delta_t^2$, we follow a similar procedure by splitting into the two cases $\cal E_g$ and $\cal E_g^c$. On the event $\cal E_g$, we apply \Cref{lem:queue_diff_exp} to obtain
\begin{align*}
\delta_t\leq  \Delta_t:=
 \begin{cases}
       \min \cbr{1,~\sqrt{3\delta}+\sqrt{2 C_\rho} \exp \br{- \eps^2 \br{T-t-1 - \omega}/16}}  & \text{if } t\leq T - \omega - 1 \\
        1 & \text{if } t> T - \omega - 1
    \end{cases}.
\end{align*}
Since $\{M_t\}_{t\in[T]}$ is nonincreasing in $t$ and $\{\Delta_t\}_{t\in[T]}$ is nondecreasing in $t$, it follows from Chebyshev's sum inequality that $R_t\leq \sum_{t=1}^{T-1} m_t \delta_t \leq \sum_{t=1}^{T-1} M_t \Delta_t \leq (\sum_{t=1}^{T-1} M_t) (\sum_{t=1}^{T-1} \Delta_t)/(T-1)$.
For the summation of $M_t$'s, we split it into two parts. Regret incurred from the first $\tau$ rounds is at most $\tau=\cal O(d\log(T)/(\sigma_0^4\eps^2))$. For $t>\tau$, we have $M_t=\wtilde {\cal O}(T^{-1}+\sqrt{\v\veps} +1/\sqrt{\v\veps t})$. Hence, the sum is bounded from above by $\wtilde{\cal O}(T\sqrt{\v\veps} + \sqrt{T/\v\veps})$, and taking $\v\veps=T^{-1/2}$ yields $\wtilde{\cal O}(T^{3/4})$. For the summation of $\Delta_t$'s, the first $T-\omega-1$ terms give rise to a geometric sum, which we show is bounded by  $O(1/\eps^3)$, and for the rest of $\omega$ rounds, $\Delta_t\leq 1$. Hence, in total, the sum is bounded above by $\cal O(d\log(T)/(\sigma_0^4\eps^3))$. Combining these, we obtain the desired bound on queue length regret, while the full proof is presented in \Cref{sec:regret_1}.
\end{proof}
\section{Polylogarithmic Regret in Adversarial Contexts} \label{sec:5}

\begin{algorithm} [t] 
\caption{CQB-Opt} \label{alg:2}
\begin{algorithmic} [1]
    \Init $V_{0,k} = \kappa \lambda_0 \I$, $k=1,\dots, K$
    \For{$t=1,\dots, T$}
        \State $x_t, k_t \leftarrow \argmax_{x\in\cal X_t, k\in[K]} \mu(x\tp\what\theta_{t-1,k}) + \beta_{t-1,k} \|x\|_{V_{t-1,k}^{-1}}$
        \State Match $(x_t, k_t)$ and receive $r_t$
        \For{$k=1,\dots, K$}
            \If{$k=k_t$}
                \State Update $\what\theta_{t,k}$ as in \Cref{para:mle}, and $\beta_{t,k}$ as in \Cref{eq:beta}
                \State $V_{t,k}\leftarrow V_{t-1,k}+x_t x_t\tp$ 
            \Else
                \State $\what\theta_{t,k}\leftarrow \what\theta_{t-1,k}$, $\beta_{t,k}\leftarrow \beta_{t-1,k}$, $V_{t,k}\leftarrow V_{t-1,k}$
            \EndIf
        \EndFor
    \EndFor
\end{algorithmic}
\end{algorithm}

In this section, we present \Cref{alg:2} for the setting of adversarial contexts, without \Cref{ass:iid}, and show that it achieves a polylogarithmic regret bound.

\para{Bad Rounds} We say that $t\in[T]$ is a \emph{bad round} if $\|x_t\|_{V_{t-1,k_t}^{-1}} > {\eps}/({4 \beta_{t-1,k_t})}$ and take $\cal B'$ as the collection of bad rounds. We call $[T]\setminus \cal B'$ \emph{good rounds}. Hence, in a bad round, the uncertainty term is large. Under \Cref{ass:iid},  \Cref{lem:burnin} shows that the uncertainty term can be uniformly bounded after $\tau$ rounds of pure exploration. However, without the assumption, bad rounds can arise even toward the end of horizon. As a result, for the adversarial context setting, we have to \emph{count} the number of bad rounds. For this, we use the counting version of the elliptical potential lemma.
\begin{proposition} \label{prop:bad_round_2}
    We have $
        |\cal B'| \leq 32 \beta_T^2 Kd\log(1+T/(dK\kappa \lambda_0)) / \eps^2 = \cal O(d^2 \log^2(T)/\eps^2)$
\end{proposition}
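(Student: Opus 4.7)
The plan is to apply a counting version of the elliptical potential lemma on a per-server basis and then aggregate across the $K$ servers, using the fact that $\beta_{t,k}$ is nondecreasing in $t$ to convert the round-dependent threshold $\eps/(4\beta_{t-1,k_t})$ into a uniform lower bound. First, I would decompose $\cal B' = \bigcup_{k=1}^{K} \cal B'_k$, where $\cal B'_k := \{t\in\cal B' : k_t = k\}$, and note that since $V_{t,k}$ only accumulates rank-one updates $x_t x_t\tp$ on rounds where $k_t = k$, the per-server analysis decouples cleanly.

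Next, on any bad round $t\in\cal B'_k$ I would use the monotonicity $\beta_{t-1,k_t} \leq \beta_T$ (immediate from the definition in \Cref{eq:beta}, since $V_{t,k} \succeq V_{t-1,k}$ implies the log-det term is nondecreasing in $t$) to obtain the uniform lower bound
\begin{equation*}
    \|x_t\|_{V_{t-1,k}^{-1}}^2 \;>\; \frac{\eps^2}{16\beta_T^2} \;=:\; \gamma^2.
\end{equation*}
Then I would invoke the standard log-determinant potential inequality applied to server $k$: writing $N_k := \sum_{s=1}^{T}\ind{k_s = k}$ and using $\det(V_{t,k}) = \det(V_{t-1,k})(1+\|x_t\|_{V_{t-1,k}^{-1}}^2)$ when $k_t=k$, a telescoping argument gives
\begin{equation*}
    \sum_{t\in\cal B'_k} \log\bigl(1 + \|x_t\|_{V_{t-1,k}^{-1}}^2\bigr) \;\leq\; \sum_{t:\,k_t=k} \log\bigl(1 + \|x_t\|_{V_{t-1,k}^{-1}}^2\bigr) \;\leq\; d\log\!\left(1+\frac{N_k}{d\kappa\lambda_0}\right).
\end{equation*}

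Combining the two estimates with the elementary inequality $\log(1+u) \geq u/2$ valid for $u\in[0,1]$ (and $\gamma^2 \leq 1$ for the regime of interest, else the bound is vacuous) yields
\begin{equation*}
    |\cal B'_k| \cdot \frac{\gamma^2}{2} \;\leq\; d\log\!\left(1+\frac{N_k}{d\kappa\lambda_0}\right),
\end{equation*}
so that $|\cal B'_k| \leq 32\beta_T^2 d\log(1+N_k/(d\kappa\lambda_0))/\eps^2$. Finally, summing over $k\in[K]$ and applying Jensen's inequality (concavity of $\log$) together with $\sum_k N_k = T$ gives $\sum_k \log(1+N_k/(d\kappa\lambda_0)) \leq K\log(1+T/(dK\kappa\lambda_0))$, which delivers the claimed bound $|\cal B'| \leq 32\beta_T^2 Kd\log(1+T/(dK\kappa\lambda_0))/\eps^2$. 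Plugging in $\beta_T = \cal O(\sqrt{d\log T})$ from \Cref{eq:beta} recovers the stated $\cal O(d^2\log^2(T)/\eps^2)$ rate.

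The only subtlety I anticipate is the verification that the per-server log-determinant identity still telescopes correctly when $V_{t,k}$ is frozen on rounds with $k_t\neq k$ — but this is immediate because a frozen update contributes a factor of $1$ to the determinant ratio, so restricting the product to the rounds $\{t : k_t = k\}$ is equivalent. Everything else is bookkeeping, and the main conceptual step is simply recognizing that the round-dependent threshold in the definition of a bad round can be uniformly weakened via $\beta_{t-1,k_t}\leq \beta_T$ without losing more than constants.
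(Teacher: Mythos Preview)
Your proposal is correct and follows essentially the same approach as the paper: uniformize the bad-round threshold via $\beta_{t-1,k_t}\le\beta_T$, apply the elliptical potential (log-determinant telescoping) per server, and aggregate across $k$ using concavity of $\log\det$. The only cosmetic difference is that the paper packages the per-server telescoping and the Jensen step into an auxiliary lemma (\Cref{lem:yadkori_new}) and works with $\sum_t \min\{1,\|x_t\|_{V_{t-1,k_t}^{-1}}^2\}$ directly, whereas you decompose $\cal B'=\bigcup_k\cal B'_k$ explicitly and use $\log(1+u)\ge u/2$; the underlying argument and the resulting constant $32$ are identical.
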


Another issue is to deal with the underlying randomness of whether or not a given round is a bad round. Under \Cref{ass:iid}, we designed the pure-exploration phase so that after $\tau$ rounds, each time slot is a good round deterministically. The randomness complicates the derivation of a tail bound for $Q(t)$, while such a bound is crucial to determine how many jobs are backlogged in the round of policy-switching for $Q(t,T)$. To handle the randomness, we define a $\cal G_t$-measurable weighted process, given by $V(t)=\alpha^{-\cal B'(t-1)} e^{\eta Q(t)}$ for some constant $\alpha >1, \eta>0$, where
$\cal G_t:=\sigma(\cal X_1, \v A(1), \v D(1),\dots, \v A(t-1),\v D(t-1))$ and $\cal B'(t)=|\{[t]\cap\cal B'\}|$.

\para{Regret Analysis}
As in the stochastic setting, we take $\omega'$ to define the threshold round for studying the expected difference in the queue lengths of two consecutive policy-switching queues, based on the number of bad rounds in \Cref{prop:bad_round_2}:
\begin{align}
    \omega' := 128\beta_T^2 Kd\log(1+T/(dK\kappa \lambda_0))/\eps^3  = \bigO{d^2\log^2(T)/\eps^3} \label{eq:omega}
\end{align}
Let us define the good event $\cal E_g'$ for the adversarial setting as the event when \Cref{lem:prediction_error} holds.

\begin{lemma} \label{lem:queue_diff_exp_2}
Let $\cal G_t^+:=\cal G_t\lor \sigma(\v A(t))$, and let $\wtilde \psi'(t,T) := \E [\psi(t,T) \mid \cal G_t^+, \v D^+(t)=0, \v D^-(t)=1]$. Then on the good event $\cal E_g'$, we have
\begin{align*}
    \E \left[\wtilde \psi'(t,T)\right] \leq
    \begin{cases}
        \min \cbr{1,~2 C_\rho \exp \br{- \eps^2 \br{T-t-1 - \omega'}/8}}  & \text{if } t\leq T - \omega' - 1 \\
        1 & \text{if } t> T - \omega' - 1
    \end{cases}.
\end{align*}
\end{lemma}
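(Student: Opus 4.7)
The plan is to mirror the argument of Lemma \ref{lem:queue_diff_exp} from the stochastic setting, with the deterministic pure-exploration period $[1,\tau]$ replaced by a probabilistic control of the random collection of bad rounds $\cal B'$ via the $\cal G_s$-measurable weighted process $V(s) = \alpha^{-\cal B'(s-1)} e^{\eta Q(s)}$ introduced for the adversarial setting. The trivial case $t > T-\omega'-1$ follows immediately from Lemma \ref{lem:diff_bound}, which gives $\psi(t,T) \in \{-1,0,1\}$ and hence $\wtilde\psi'(t,T) \leq 1$ almost surely.

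For the main case $t \leq T-\omega'-1$, I would first condition on $\cal G_t^+$ together with the divergence event $\{\v D^+(t)=0,\v D^-(t)=1\}$, so that $Z(s) := Q^+(s) - Q^-(s)$ satisfies $Z(t+1)=1$ and, by Lemma \ref{lem:diff_bound}, $Z(s) \in \{-1,0,1\}$ for all $s$; both coupled queues subsequently apply the optimal policy. Consequently, it suffices to bound $\P[Z(T)=1 \mid \cal G_t^+, \v D^+(t)=0, \v D^-(t)=1]$. The shared draws $U_{s,1},U_{s,2}$ force $Z$ to decay whenever the optimal policy applied to both coupled queues sees the distinguishing job or the coupled departures sync up; Assumption \ref{ass:slackness} together with Lemma \ref{lem:prediction_error} then yields a geometric contraction $\P[Z(s+1)=1 \mid Z(s)=1, \cal G_s^+] \leq 1 - c\eps$ on good rounds, paralleling the drift step in the proof of Lemma \ref{lem:queue_diff_exp}.

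The principal obstacle, relative to the stochastic analysis, is that good rounds no longer form the deterministic tail $[\tau+1,T]$: bad rounds can occur anywhere in $[1,T]$ and their occurrence is entangled with the queueing dynamics. This is where $V$ enters. I would tune $\alpha > 1$ and $\eta > 0$ so that $V$ is a $\cal G_s$-supermartingale: on good rounds the slackness-induced negative drift of $Q$ contracts $e^{\eta Q}$ in conditional expectation, while on bad rounds the factor $\alpha^{-1}$ absorbs any inflation of $e^{\eta Q}$. Optional stopping on $V$, combined with the deterministic cap $|\cal B'| \leq 32 \beta_T^2 K d \log(1+T/(dK\kappa\lambda_0))/\eps^2$ from Proposition \ref{prop:bad_round_2}, replaces the role played in the stochastic proof by the pure-exploration-driven queue-length control.

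Assembling the pieces, the geometric contraction compounded over $T-t-1$ rounds, multiplied by the penalty $\alpha^{|\cal B'|}$ from the at most $|\cal B'|$ bad rounds, yields a tail bound of the form $\P[Z(T)=1 \mid \cdots] \leq C_\rho \exp(-\eps^2(T-t-1)/8) \cdot \exp(\eps^2 \omega'/8)$, where the choice \eqref{eq:omega} of $\omega'$ is calibrated precisely so that the bad-round penalty is absorbed into a shift of the exponent by $\omega'$. Clipping by $1$ via Lemma \ref{lem:diff_bound} produces the stated bound $\min\{1, 2C_\rho \exp(-\eps^2(T-t-1-\omega')/8)\}$. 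The most delicate part is jointly tuning $\alpha$ and $\eta$ so that the supermartingale property of $V$ and the per-bad-round penalty both scale with $\eps^2$ in a manner compatible with Proposition \ref{prop:bad_round_2}, ensuring that the exponential contraction dominates once the horizon exceeds the threshold round $T-\omega'-1$.
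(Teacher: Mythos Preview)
Your high-level plan is on the right track: you correctly identify that the argument mirrors Lemma~\ref{lem:queue_diff_exp}, that the weighted process $V(s)=\alpha^{-\cal B'(s-1)}e^{\eta Q(s)}$ replaces the deterministic pure-exploration phase, and that $\omega'$ is calibrated via Proposition~\ref{prop:bad_round_2} to absorb the bad-round penalty into a shift of the exponent. The trivial case is also handled correctly.

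The gap is in your mechanism for bounding $\P[Z(T)=1\mid\cdots]$. The per-step contraction $\P[Z(s+1)=1\mid Z(s)=1,\cal G_s^+]\le 1-c\eps$ is \emph{false} in general. When $Z(s)=1$, the two coupled queues are in state $S_{s,1}^+$ of Lemma~\ref{coro:diff_bound}: $\cal X_s^+=\cal X_s^-\cup\{x^+\}$. If $x^+$ is \emph{not} the highest-priority job in $\cal X_s^+$, both coupled queues select the same job from $\cal X_s^-$ and the coupling forces $D^+(s)=D^-(s)$, so $Z(s+1)=1$ with probability one. There is no geometric decay in such a round, and nothing in Assumption~\ref{ass:slackness} prevents $x^+$ from being dominated for arbitrarily many rounds. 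Your invocation of Lemma~\ref{lem:prediction_error} is also misplaced here: for $s\ge t+1$ both coupled queues follow the optimal policy, so prediction error plays no role in the post-divergence dynamics.

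What the paper actually does is different and avoids this issue. It observes that $Z(T)=1$ forces $Q^+(j)>0$ for all $j\in[t+1,T]$ (once $Q^+$ empties, the coupling re-synchronizes the two queues). Since $Q^+$ evolves under the optimal policy with drift $\E[A(j)-D^+(j)\mid\cal G_j]\le-\eps$ by Assumption~\ref{ass:slackness}, Azuma--Hoeffding gives a bound of the form $2\exp\bigl(-(Q(t+1)-1-(T-t-1)\eps)^2/(8(T-t-1))\bigr)$, which depends on the \emph{initial} queue length $Q(t+1)$. The argument then splits on $\{Q(t+1)\le\eps(T-t-1)/2\}$: on this event the Hoeffding bound already gives the exponential decay in $T-t-1$, while on the complement one needs the tail bound $\P[Q(t+1)>\eps(T-t-1)/2,\cal E_g']\le C_\rho e^{-\eta b}$, and it is \emph{this} tail bound (via Markov's inequality on $V$, not optional stopping) that consumes the $\cal G_s$-supermartingale structure of $V$ and the cap on $|\cal B'|$. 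Your proposal collapses these two distinct steps into a single per-step contraction on $Z$, which does not hold; the route through the hitting time of $Q^+$ to zero is essential.
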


Now we state a polylogarithmic upper bound on queue length regret under \Cref{alg:2}.

\begin{theorem} \label{thm:regret_2}
    Set $\delta\in(0,T^{-1}]$. The queue length regret of \Cref{alg:2} is bounded from above as \begin{align*}
        R_T = \bigO{\frac{d^2\log^2 (T)}{\eps^{1.5}} + \frac{d\log(T)}{\eps^2}}.
    \end{align*}
\end{theorem}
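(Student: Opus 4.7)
The plan is to extend the policy-switching coupling framework of Section 3 to the adversarial setting, combining it with Proposition 5.1 and Lemma 5.2. The coupling construction from Section 3 does not rely on the i.i.d.\ assumption, so I would reuse it to write $R_T = \sum_{t=1}^{T-1}\E[\psi(t,T)]$. I would then establish an adversarial analog of Lemma 3.3: on the good event $\cal E_g'$ (on which Lemma 4.3 holds, with $\P(\cal E_g'^c)\le\delta$),
\[
\E\bigsqbr{\psi(t,T)\mathbf{1}_{\cal E_g'}} \le m_t\,\delta_t', \qquad \delta_t' := \sqrt{\E[\wtilde\psi'(t,T)]},
\]
which is proved by the same Cauchy--Schwarz argument as Lemma 3.3 with $\cal F_t^+$ replaced by $\cal G_t^+$. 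Taking $\delta = 1/T$ makes the $\cal E_g'^c$ contribution $O(1)$.

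Next, I would partition $[T-1]$ into bad rounds $\cal B'$ and good rounds. For bad rounds, I apply $|\psi(t,T)|\le 1$ (Lemma 3.2) and Proposition 5.1 to obtain $\sum_{t\in\cal B'}\E[\psi(t,T)]\le \E[|\cal B'|] = O(d^2\log^2(T)/\eps^2)$, which is dominated by the target $\eps^{-1.5}$ term. On good rounds, the standard UCB argument together with Lemma 4.3 yields $m_t \le 2\beta_{t-1,k_t}\|x_t\|_{V_{t-1,k_t}^{-1}}$; note that queue state misalignment does not obstruct this step because $m_t$ in Lemma 3.3 is defined via the optimum over $\cal X_t$ rather than $\cal X_t^*$.

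I then split the good-round sum at the threshold $T-\omega'-1$ with $\omega'$ as in (7). For $t\le T-\omega'-1$, Lemma 5.2 gives $\delta_t' \le \sqrt{2C_\rho}\exp\bigbr{-\eps^2(T-t-1-\omega')/16}$. Applying Cauchy--Schwarz in $t$, using the elliptical potential bound $\sum_t\beta_{t-1,k_t}^2\|x_t\|_{V_{t-1,k_t}^{-1}}^2 = O(\beta_T^2 Kd\log(T)) = O(d^2\log^2(T))$ (valid on good rounds since $\|x_t\|_{V_{t-1,k_t}^{-1}}^2\le 1$ there), the geometric sum $\sum_s\exp(-\eps^2 s/8)=O(\eps^{-2})$, and $\sqrt{C_\rho}=O(\eps^{-1})$ gives a contribution of $O(d\log(T)/\eps^2)$. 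For $t>T-\omega'-1$ we have only $\delta_t'\le 1$; Cauchy--Schwarz restricted to the last $\omega'+1$ rounds, combined again with the elliptical potential lemma, yields
\[
\sum_{t>T-\omega'-1,\,\text{good}} 2\beta_{t-1,k_t}\|x_t\|_{V_{t-1,k_t}^{-1}} \le O\bigbr{\sqrt{(\omega'+1)\cdot d^2\log^2(T)}} = O\bigbr{d^2\log^2(T)/\eps^{1.5}},
\]
using $\omega'=O(d^2\log^2(T)/\eps^3)$.

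Combining the three contributions gives $R_T = O(d^2\log^2(T)/\eps^{1.5} + d\log(T)/\eps^2)$ as claimed. The principal technical obstacle is the Cauchy--Schwarz--plus--elliptical-potential bound over the last $\omega'+1$ rounds: the polylogarithmic $\omega'$ (rather than a constant threshold as in the stochastic case) is the price of having no pure-exploration phase, and it is precisely what dictates the final $\eps^{-1.5}$ rate. A secondary difficulty is that $\cal B'$ is a random set coupled to the UCB dynamics, so the counting step must invoke the \emph{deterministic} bound in Proposition 5.1 rather than a concentration argument, and the adversarial analog of Lemma 3.3 must be proved working only with $\cal G_t$, without access to the $\v\veps$-exploration variable $E(t)$ or the i.i.d.\ contexts used in Section 3.
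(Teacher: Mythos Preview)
Your three-way split has a genuine arithmetic error in the bad-round piece. You claim that $\sum_{t\in\cal B'}\E[\psi(t,T)]\le |\cal B'|=O(d^2\log^2(T)/\eps^2)$ is dominated by the target $d^2\log^2(T)/\eps^{1.5}$ term, but since $\eps\in(0,1)$ we have $1/\eps^2>1/\eps^{1.5}$, so the inequality goes the wrong way. Nor is it absorbed by the second target term $d\log(T)/\eps^2$, which carries one fewer factor of $d\log T$. Consequently your decomposition as written only yields $R_T=O(d^2\log^2(T)/\eps^2)$, which is strictly weaker than the stated theorem.

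The split into bad and good rounds is in fact unnecessary for the $m_t$ side, and your parenthetical ``valid on good rounds since $\|x_t\|_{V_{t-1,k_t}^{-1}}^2\le 1$ there'' suggests you think this restriction is needed for the elliptical potential lemma. It is not: the UCB inequality $m_t\le\min\{1,2\beta_{t-1,k_t}\|x_t\|_{V_{t-1,k_t}^{-1}}\}$ holds on \emph{all} rounds under $\cal E_g'$, and since $4\beta_T^2\ge 1$ one has $m_t^2\le 4\beta_T^2\min\{1,\|x_t\|_{V_{t-1,k_t}^{-1}}^2\}$, after which the elliptical potential lemma gives $\sum_{t=1}^{T-1}m_t^2=O(d^2\log^2 T)$ over the full horizon with no bad-round exclusion. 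The paper exploits exactly this and applies a single Cauchy--Schwarz $R_T\le\sqrt{\sum m_t^2}\sqrt{\sum(\delta_t')^2}$ over all of $[T-1]$; the second factor is then bounded by $O(d^2\log^2 T/\eps^3+1/\eps^4)$ via Lemma~5.2 (the $\omega'$ last rounds contribute the first term, the geometric tail the second), and the product gives the claimed rate. Your treatment of the two good-round blocks is correct but more elaborate than needed; dropping the bad-round carve-out and merging everything into one Cauchy--Schwarz both fixes the gap and simplifies the argument.
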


\begin{proof} [Proof sketch]
The adversarial nature of the incoming contexts makes it difficult to endow $m_t$ with monotonic behavior, which in turn makes it hard to apply Chebyshev's inequality as in the regret analysis of \Cref{alg:1}. Instead, we apply the Cauchy-Schwarz inequality to \Cref{lem:decomposition}, which yields $R_T\leq  \sqrt{\sum_{t=1}^{T-1} m_t^2}\sqrt{\sum_{t=1}^{T-1} \delta_t^2}$. For $m_t$, since we always choose the job-server pair following the optimistic rule, the difference in departure rates can be bounded from above by 2 times the bonus term. 
Applying the elliptical potential lemma yields $\sum_{t=1}^{T-1}m_t^2=\cal O(d \log(T))$. To bound $\sum_{t=1}^{T-1}\delta_t^2$ with $\wtilde \psi'(t,T)$, we apply \Cref{lem:queue_diff_exp_2} as before. The full proof can be found in \Cref{sec:regret_2}. 
\end{proof}
\section{Experiments} \label{sec:exp}

In this section, we empirically evaluate the performance of our algorithms. 
\begin{figure}[h]
    \centering
    \begin{subfigure}[t]{0.33\textwidth}
        \centering
        \includegraphics[width=\textwidth]{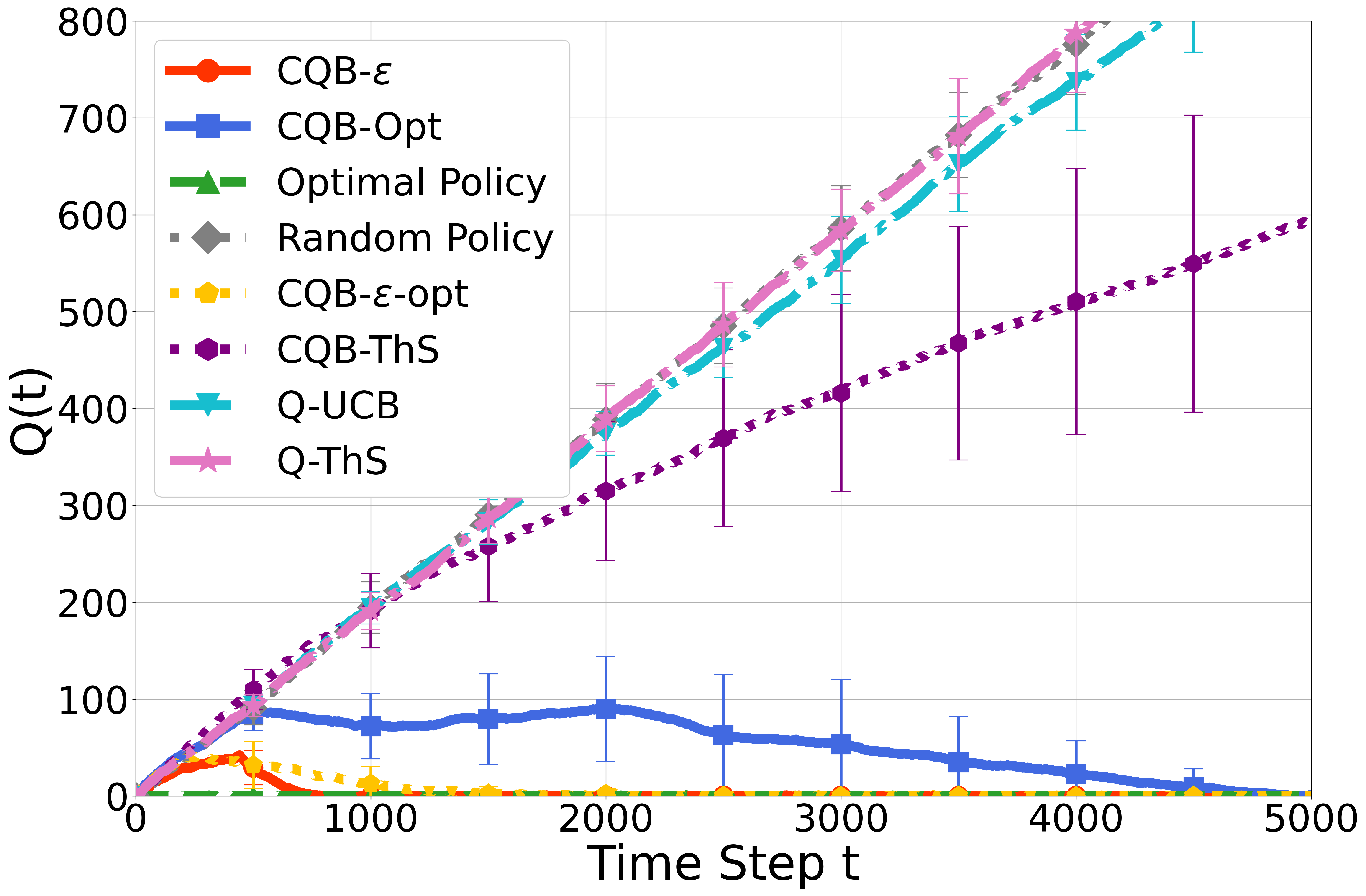}
        \label{~}
    \end{subfigure}
    \hfill
    \begin{subfigure}[t]{0.32\textwidth}
        \centering
        \includegraphics[width=\textwidth]{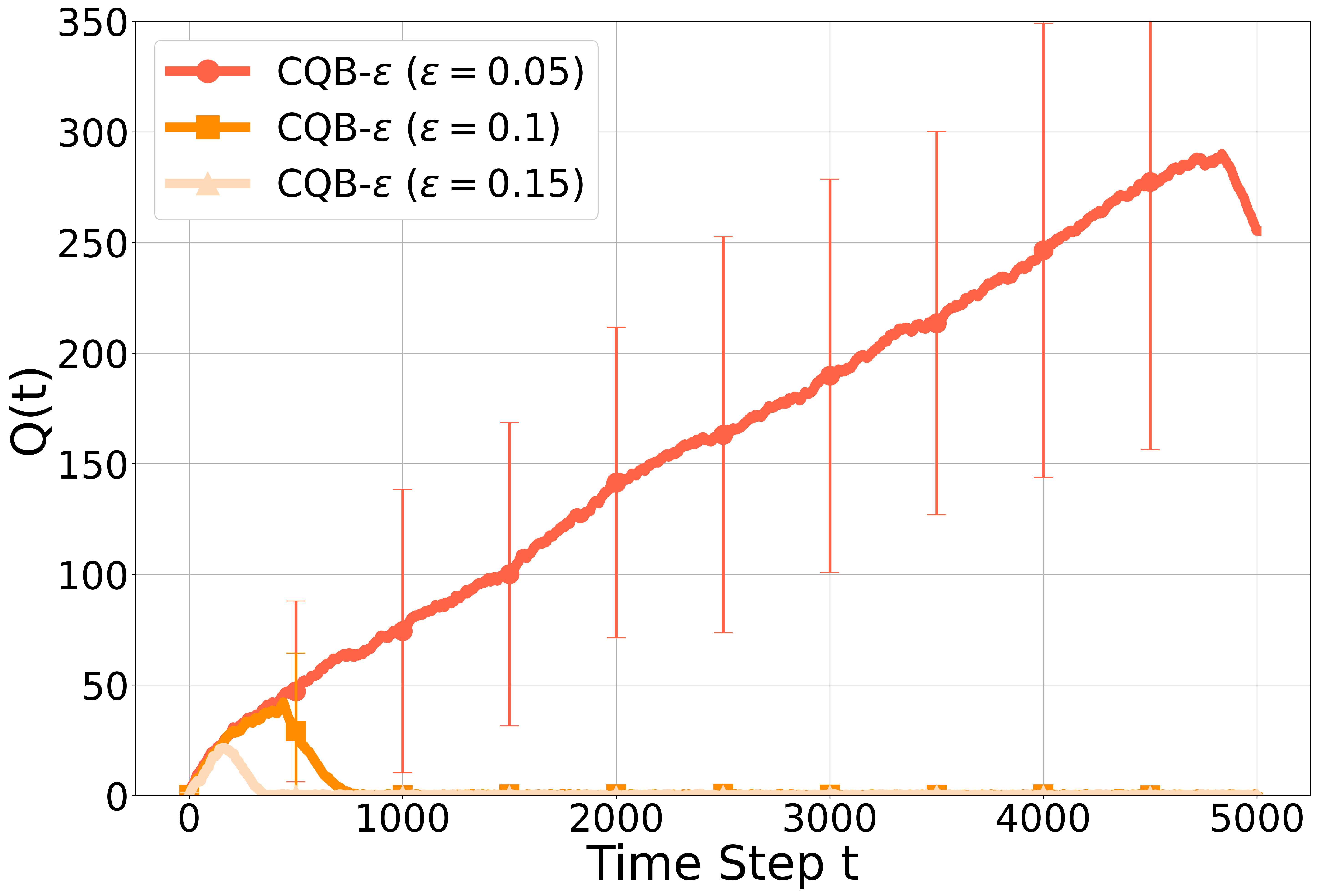}
        \label{~}
    \end{subfigure}
    \hfill
    \begin{subfigure}[t]{0.32\textwidth}
        \centering
        \includegraphics[width=\textwidth]{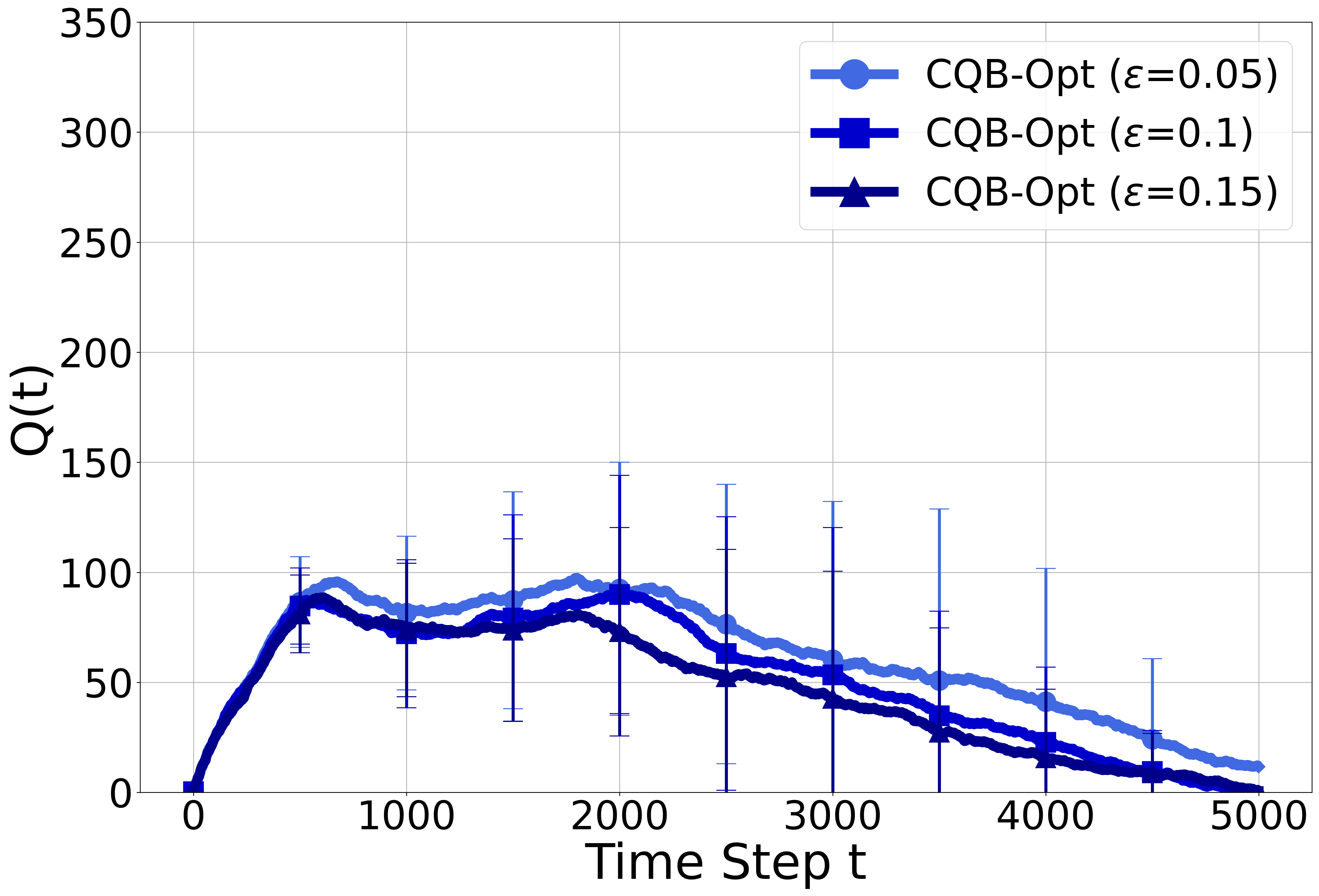}
        \label{~}
    \end{subfigure}
    \caption{Average queue length across algorithms and settings. (Left) CQB-$\v\veps$ and CQB-Opt versus a random policy, the optimal policy, and additional baselines. (Middle) and (right) performance of CQB-$\v\veps$ and CQB-Opt, respectively, for $\eps\in\{0.05,0.1,0.15\}$}
    \label{fig:all_figures}
\end{figure}

We generate random instances with $\lambda=0.7$, $\eps=0.1$, $K=5$, $d=5$, and $\kappa=10$. Feature vectors $x\in\R^d$ and server-specific parameters $\theta_k^*\in\R^d$ for $k\in[K]$ are sampled from $\Unif(-1,1)$. For each algorithm, we evaluate $N=10$ instances over $T=5000$ rounds and report the average queue length at time $T$ with $\pm 1$ standard deviation across runs. For \Cref{alg:1}, we set $\tau=Cd^3\log(T)K\lambda^{-1}(\eps-2\v\veps)^{-2}$ with constant factor $C=3e-4$. The first plot of \Cref{fig:all_figures} compares our algorithms (\Cref{alg:1,alg:2}) against (i) a random policy and (ii) the optimal policy (iii) four additional baseline algorithms; further details are provided in \Cref{sec:add_exp}. The random policy chooses a job–server pair uniformly at random, while the optimal policy selects, in every round, the job–server pair with the maximum departure rate. We observe a linear increase in queue length under the random policy, whereas both of our algorithms decrease toward the optimal level after a certain time. The second and third plots show how the performance of \Cref{alg:1} and \Cref{alg:2}, respectively, varies with $\eps\in\{0.05,0.1,0.15\}$. In the second plot, \Cref{alg:1} exhibits longer pure-exploration rounds for small $\eps$, as dictated by $\tau$, followed by a sharp decrease in queue length. Our results in \Cref{fig:all_figures} demonstrate that as $\eps$ increases (i.e., under lower load), our algorithms converge faster toward the optimal queue length, consistent with our theoretical results. Additional experiments varying $K$ and $d$ are provided in \Cref{sec:add_exp} due to space constraints.

\section{Conclusion}

We introduced \emph{contextual queueing bandits}, a new context-aware framework for learning-while-scheduling with logistic service models. Using policy-switching queues and a coupling argument, we decompose queue length regret into the short-term effect of choosing a suboptimal job-server pair and its long-term effect on queue state differences. We proved that CQB-$\v\veps$ attains $\wtilde{\cal O}(T^{-1/4})$ regret under stochastic contexts and CQB-Opt achieves $\cal O(\log^2 T)$ regret against adversarially chosen contexts, corroborated by experiments. Future directions include (i) establishing lower bounds for queue length regret, (ii) extending the framework to multiple queues, and (iii) incorporating operational constraints such as a maximum waiting time (time in queue) constraint.

\section*{Acknowledgements}

We would like to thank the review team for their careful review and valuable feedback. This work was supported by the National Research Foundation of Korea (NRF) grant (No. RS-2024-00350703) and the Institute of Information \& communications Technology Planning \& evaluation (IITP) grants (No. IITP-2026-RS-2024-00437268) and (No. RS-2021-II211343, Artificial Intelligence Graduate School Program (Seoul National University)) funded by the Korea government (MSIT).

\bibliography{iclr2026_conference}
\bibliographystyle{iclr2026_conference}

\newpage

\appendix

\section{Additional Experiments} \label{sec:add_exp}


\begin{figure}[h] 
    \centering
    \begin{subfigure}[t]{0.32\textwidth}
        \centering
        \includegraphics[width=\textwidth]{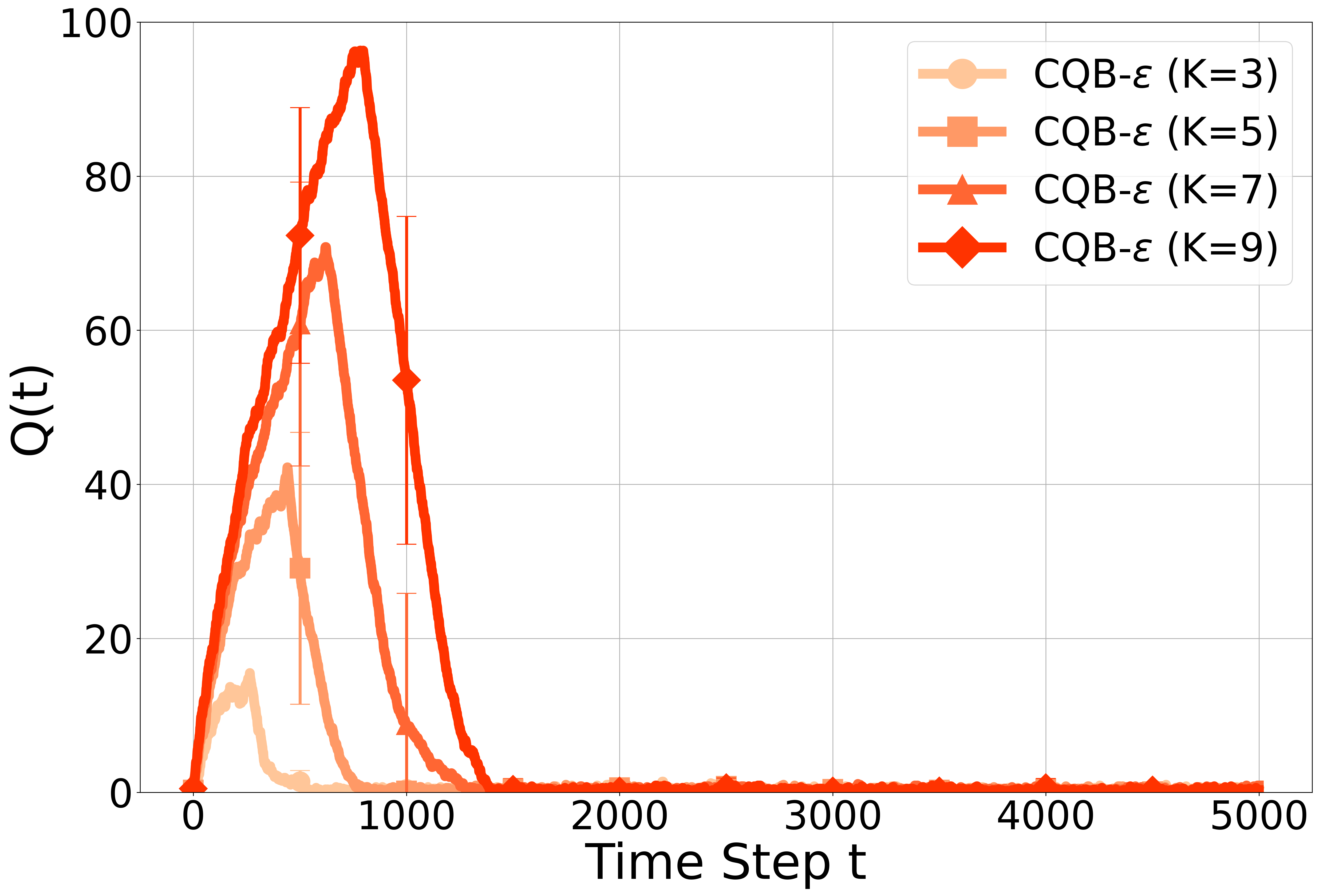}
        \label{fig:a}
    \end{subfigure}
    \begin{subfigure}[t]{0.32\textwidth}
        \centering
        \includegraphics[width=\textwidth]{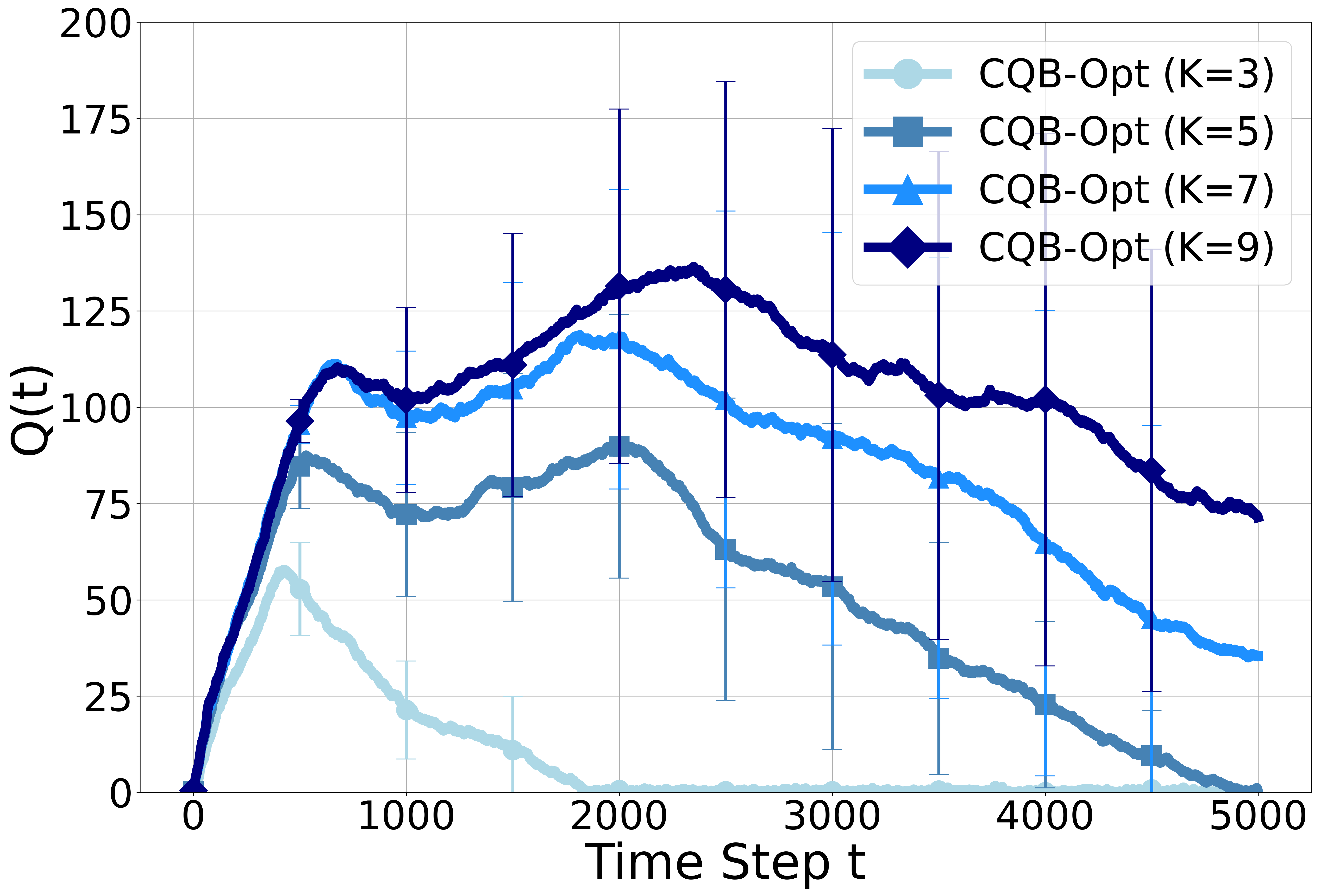}
        \label{fig:b}
    \end{subfigure}
    
    \begin{subfigure}[t]{0.32\textwidth}
        \centering
        \includegraphics[width=\textwidth]{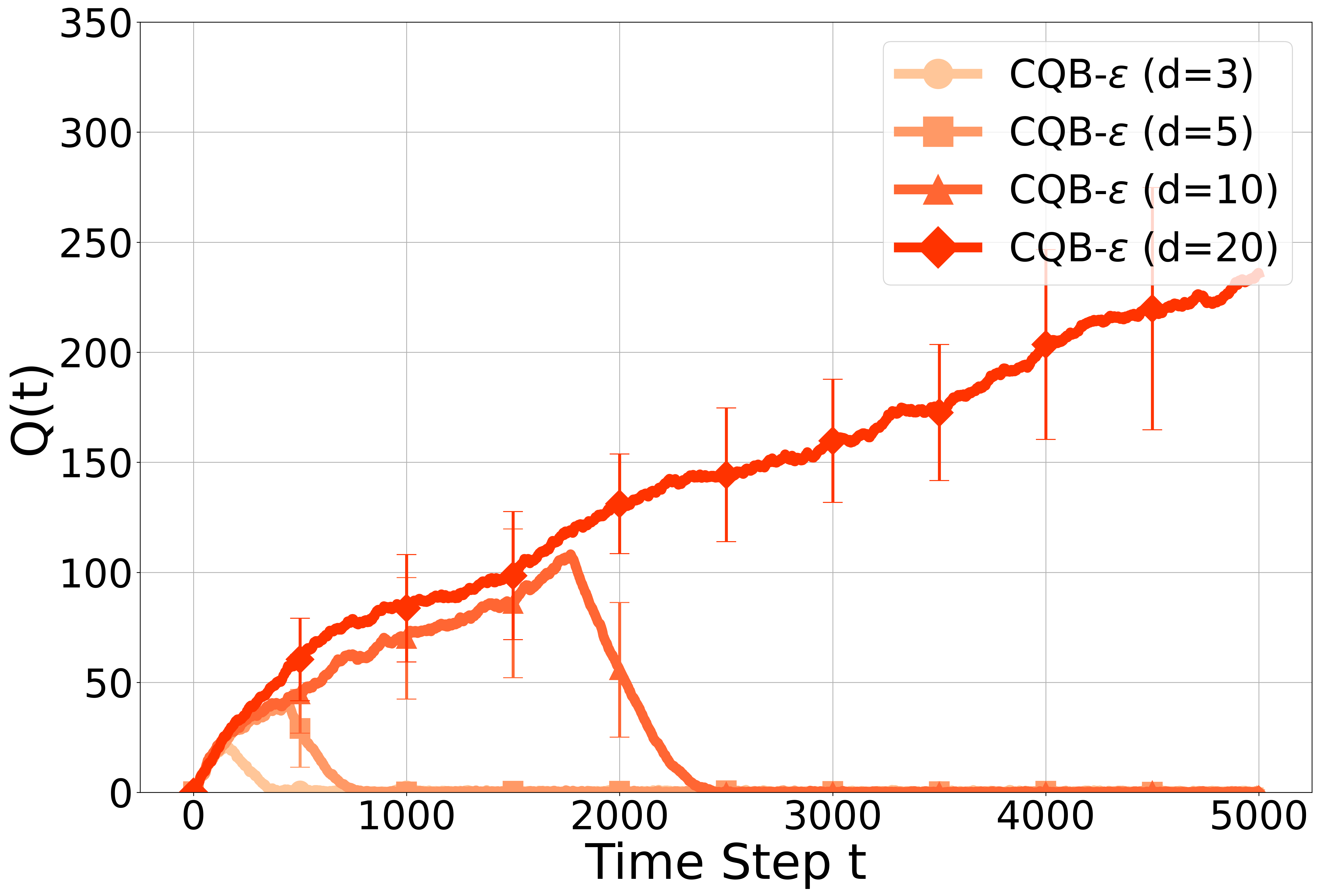}
        \label{fig:c}
    \end{subfigure}
    \begin{subfigure}[t]{0.32\textwidth}
        \centering
        \includegraphics[width=\textwidth]{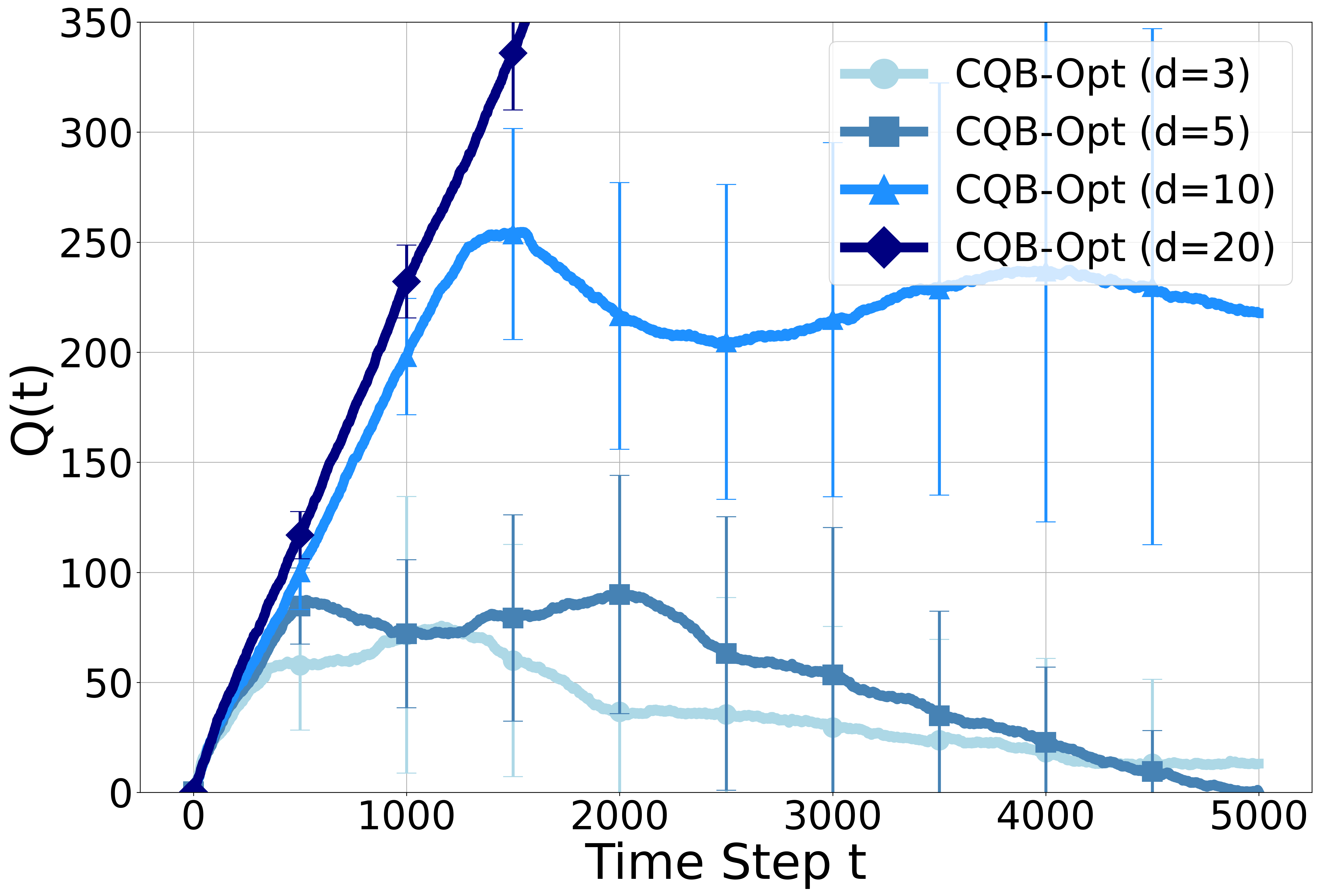}
        \label{fig:d}
    \end{subfigure}
    \caption{Average queue length across varying $K$ and $d$. (Top-left/right) CQB-$\v\veps$/CQB-Opt for $K\in\{3,5,7,9\}$. (Bottom-left/right) CQB-$\v\veps$/CQB-Opt for $d\in\{3,5,10,20\}$.}
    \label{fig:2}
\end{figure}

\begin{figure}[h]
    \centering
        \begin{subfigure}[t]{0.32\textwidth}
        \centering
        \includegraphics[width=\textwidth]{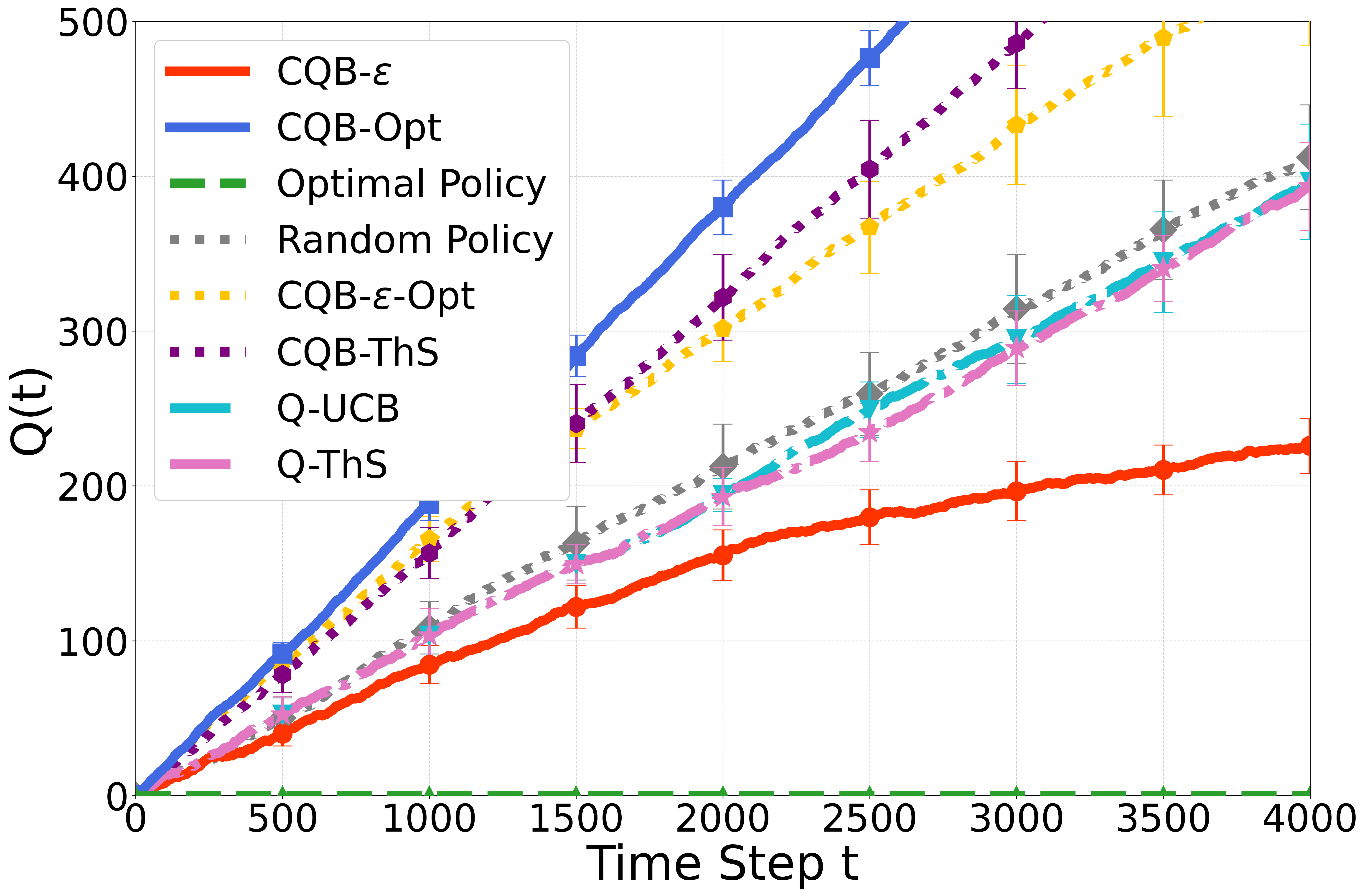}
        \label{~}
    \end{subfigure}
    \centering
    \begin{subfigure}[t]{0.32\textwidth}
        \centering
        \includegraphics[width=\textwidth]{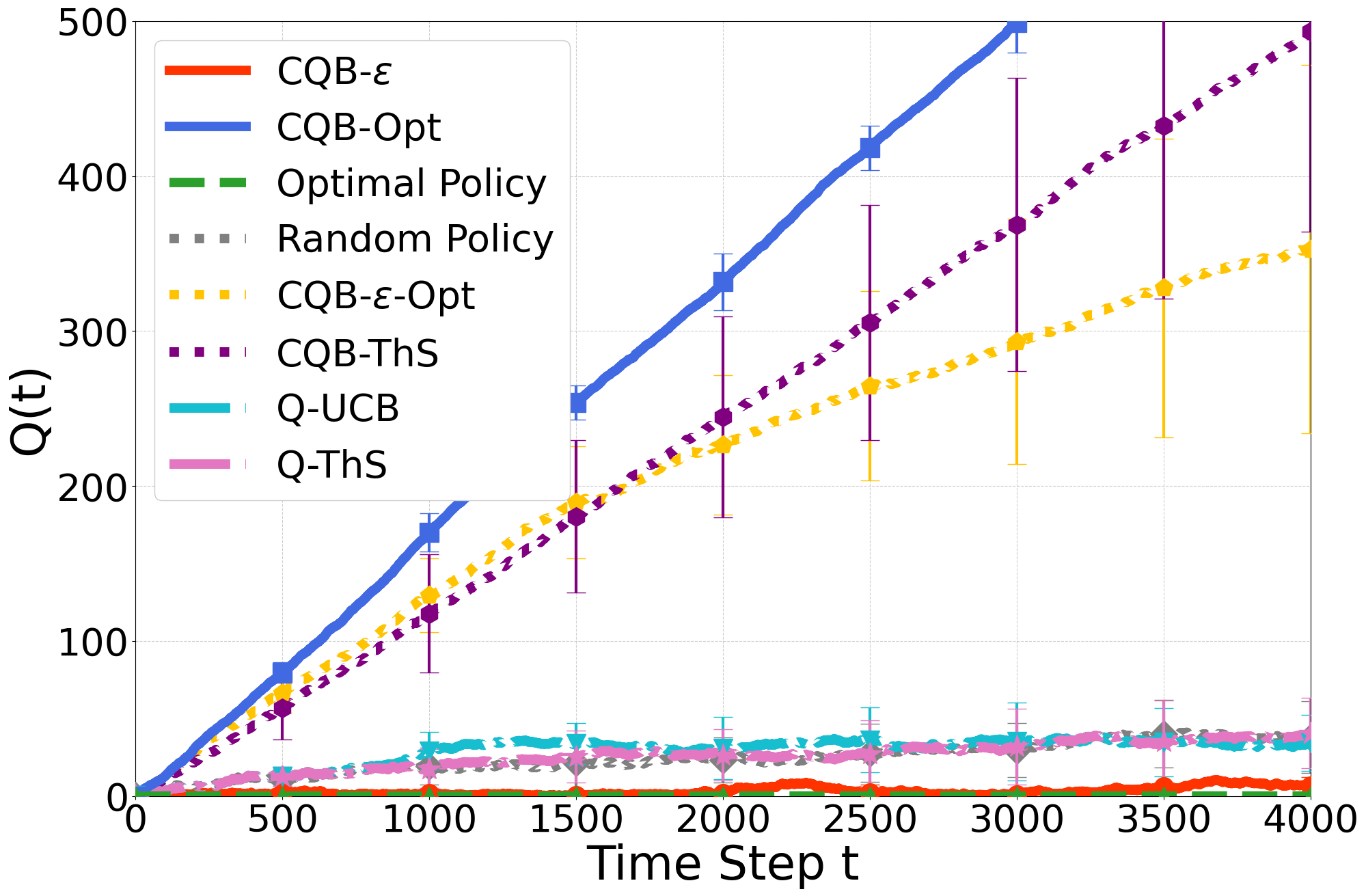}
        \label{~}
    \end{subfigure}
    \hspace{-0.01\textwidth}
    \begin{subfigure}[t]{0.32\textwidth}
        \centering
        \includegraphics[width=\textwidth]{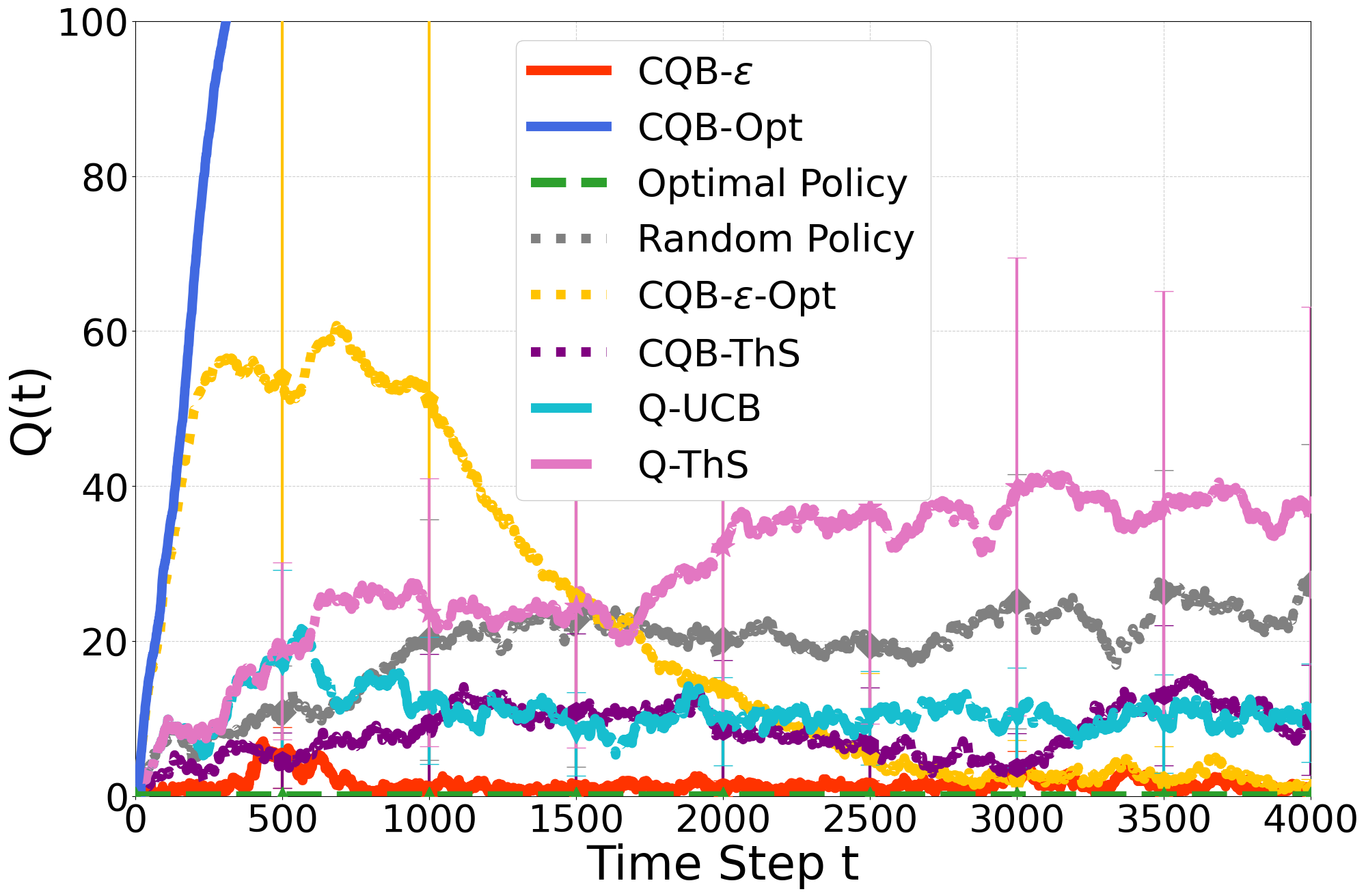}
        \label{~}
    \end{subfigure}
    \caption{Average queue length across algorithms and settings. (Left) shows the average queue length for MNIST. (Middle) and (Right) show the performance for Heart Disease and In-Vehicle Coupon Recommendation, respectively.}
    \label{fig:all_figures}
\end{figure}

\para{Baseline algorithms} We introduce four additional baselines as follows:
\begin{itemize}
    \item CQB-$\varepsilon$-Opt: We follow the same algorithm as \Cref{alg:2}, while performing random exploration in every round with probability $\v\veps = T^{-1/2}$.
    \item CQB-TS: We follow the same algorithm as \Cref{alg:2}, except that we replace the decision rule by sampling rewards for all $x\in\cal X_t, k\in[K]$ as
    \begin{align*}
        \wtilde r_t(x,k) \sim \cal N\br{x\tp \theta_{t-1,k}, R^{-2}\beta_{t-1,k} \|x\|_{V_{t-1,k}^{-1}}^2}
    \end{align*}
    and then choosing the job–server pair as $(x_t,k_t)=\argmax_{x\in\mathcal X_t, k\in[K]} \wtilde r_t(x,k)$.
    \item Q-UCB (Algorithm 1 of \citet{krishnasamy2021learning}): In every round $t$, we explore with probability $\Bern(\min\{1, 3K(\log^2t)/t\})$. We choose $x_t$ as the first-in job, and then choose
    \begin{align*}
        k_t:=\argmax_{k\in[K]} \what\mu_{k}(t) + \sqrt{\frac{\log^2 t}{2 T_k(t-1)}},
    \end{align*}
    where $\what\mu_k(t) = \sum_{i=1}^{t-1} \ind{k_i=k} r_i / T_k(t-1)$ and $T_k(t-1) = \sum_{i=1}^{t-1} \ind{k_i = k}$.
    \item Q-ThS (Algorithm 2 of \citet{krishnasamy2021learning}): In every round $t$, we explore with probability $\Bern(\min\{1, 3K(\log^2t)/t\})$. We choose $x_t$ as the first-in job. For every $k\in[K]$, we sample
    \begin{align*}
        \wtilde r_t(x, k) \sim \text{Beta}(\what \mu_k(t) T_k(t-1) + 1, (1-\what\mu_k(t)) T_k(t-1) + 1),
    \end{align*}
    and choose $k_t:=\argmax_{k\in[K]} \wtilde r_t(x, k)$, where we use the same definitions of $\what\mu_k(t)$ and $T_k(t-1)$ as above.  
\end{itemize}
Notice that Q-UCB and Q-ThS are the algorithms proposed by \citet{krishnasamy2021learning}, which is the first work to study the queueing bandit problem and queue length regret in a multi-armed bandit framework without contextual information.

\para{Varying $K$ and $d$} \Cref{fig:2} illustrates how varying values of $K$ and $d$ affect the performance of \Cref{alg:1,alg:2}. In \Cref{fig:2} (top-left) and (top-right), we vary $K\in\{3,5,7,9\}$ for CQB-$\v\veps$ and CQB-Opt, respectively holding all other parameters fixed. In \Cref{fig:2} (bottom-left) and (bottom-right), we vary $d\in\{3,5,10,20\}$, for CQB-$\v\veps$ and CQB-Opt, holding all other parameters fixed. Consistent with our theoretical expectations, performance deteriorates as $K$ and $d$ increase.

\para{Real-world dataset}

For MNIST, we normalize each $28\times 28$ image by dividing pixel values by 255, downsample by averaging over non-overlapping $4\times 4$ blocks to obtain a $7\times 7$ feature map, and flatten this map into a 49-dimensional feature vector used as the context $X$. We set the average arrival rate to $\lambda = 0.15$. For the Heart Disease dataset (UCI), we start from 297 records, remove rows with missing values, apply one-hot encoding to categorical features, and standardize numerical features. The class labels are imbalanced (from 160 samples in Class~0 to 13 in Class~4), so we apply the synthetic minority oversampling technique (SMOTE) to obtain approximately balanced $K=5$ classes and then sample 4000 instances with replacement for our simulations. We set $\lambda = 0.2$. For the In-Vehicle Coupon Recommendation dataset (UCI), we remove rows with missing values, one-hot encode all categorical features (120 features in total), standardize numerical features, and convert the target into a binary label (accepted $=1$, rejected $=0$). We randomly sample 4000 instances from the processed data and set $\lambda = 0.5$. We comfortably used $\tau = T/10$ for \Cref{alg:1} because real-world datasets typically have high dimensionality, which would cause unnecessarily large exploration. The results in \Cref{fig:all_figures} show that \Cref{alg:1} achieves the best performance on all three datasets (MNIST, Heart Disease, and In-Vehicle Coupon Recommendation).



\section{Proofs for \Cref{sec:3}} \label{sec:3_proof}

In this section, we prove \Cref{lem:diff_bound,lem:decomposition}. 

\subsection{Proof of \Cref{lem:diff_bound}}

In fact, we prove the following lemma, which is a refined version of \Cref{lem:diff_bound}. Note that $D^{+}(t)\leq D^{-}(t)$ holds for each $t\in[T]$ by definition of the coupling process, so \Cref{lem:diff_bound} is a direct consequence of \Cref{coro:diff_bound}.
\begin{lemma} \label{coro:diff_bound}
    If $D^+(t)=D^-(t)=0$ or $D^+(t)=D^-(t)=1$, we have $\psi(t,T) \in\{-1,0\}$, and if $D^+(t)=0$, $D^-(t)=1$, we have $\psi(t,T) \in\{0,1\}$ for all $t\in[T-1]$.
\end{lemma}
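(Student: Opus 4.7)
I would induct on $s \ge t$ and show that the coupled pair $(\cal X^+_s, \cal X^-_s)$ stays in a restricted family of ``configurations'', from which both the sign and the magnitude of $\psi(t,T) = Q^+(T) - Q^-(T)$ follow. A preliminary observation is that, since $-$ uses the optimal policy at time $t$ and departures are coupled via $U_{t,2}$, $D^+(t) \le D^-(t)$ pathwise, which rules out $(D^+(t),D^-(t))=(1,0)$ and leaves only the three cases in the statement.

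\textbf{Case $D^+(t) = D^-(t) = 0$.} No job leaves either queue and both receive the same arrival, so $\cal X^+_{t+1} = \cal X^-_{t+1}$. From then on, both queues run the optimal policy on identical states with shared randomness, so the trajectories coincide and $\psi(t,T) = 0 \in \{-1,0\}$.

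\textbf{Case $D^+(t) = D^-(t) = 1$.} The queues remove $x_t^+$ and $x_t^-$ respectively, giving equal-cardinality $\cal X^+_{t+1}, \cal X^-_{t+1}$ that differ by a single ``swap''. Write $\mu^+_x := \max_k \mu(x\tp \theta_k^*)$; the optimality of $x_t^-$ on the shared state $\cal X^+_t = \cal X^-_t$ yields $\mu^+_{x_t^-} \ge \mu^+_{x_t^+}$, i.e., the $+$-side extra is ``preferred''. I then maintain by induction on $s \ge t+1$ the invariant that $(\cal X^+_s, \cal X^-_s)$ lies in one of three configurations: (i) identical; (ii) a swap with $+$-extra $a$ and $-$-extra $b$ satisfying $\mu^+_a \ge \mu^+_b$; or (iii) $\cal X^+_s \subsetneq \cal X^-_s$ with a single $-$-extra. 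In (i) and (ii) the preference yields $+$'s optimal rate $\ge -$'s and hence $D^+(s) \ge D^-(s)$; in (iii) the set inclusion yields $D^-(s) \ge D^+(s)$. A case analysis on where the two coupled optima fall (both in the shared part $S = \cal X^+_s \cap \cal X^-_s$, both on the extras, or mixed) shows that transitions stay in (i)--(iii) and preserve the preference, with the preference in particular killing the only problematic sub-case ``$+$ picks from $S$, $-$ picks $b$'', which would require $\mu^+_b > \mu^*_S > \mu^+_a$. Since $Q^+_s - Q^-_s = 0$ in (i) and (ii) and equals $-1$ in (iii), we get $\psi(t,T) \in \{-1,0\}$. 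The case $D^+(t)=0,\ D^-(t)=1$ starts at $\cal X^-_{t+1} \subsetneq \cal X^+_{t+1}$ with a single $+$-extra (namely $x_t^-$), and a symmetric but simpler induction (no preference invariant is needed, since $\cal X^-_s \subsetneq \cal X^+_s$ alone already makes $+$'s optimal rate dominate $-$'s) shows the reachable configurations are only ``identical'' or ``$+$-extra'', giving $\psi(t,T) \in \{0,1\}$.

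The main obstacle is the bookkeeping in the swap case: one must enumerate all sub-cases of the coupled optimal picks together with the corresponding sub-intervals of $U_{s,2}$, and check in each that both the configuration list (i)--(iii) and the preference invariant in (ii) survive the transition. The first-in-first-out tie-breaking rule also needs a brief check, but the shared jobs have identical arrival orders in the two coupled queues (arrivals and departures within $S$ being coupled), so ties within $S$ are broken the same way on both sides, and ties involving the extras $a$ or $b$ fall into the sub-cases already enumerated.
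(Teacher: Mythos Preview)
Your approach is essentially the same as the paper's: both arguments track the coupled pair $(\cal X_s^+,\cal X_s^-)$ through a finite list of configurations (identical, single extra on one side, or a one-job swap with the $+$-extra having higher priority), prove by induction on $s$ that this list is closed under the coupled optimal step, and read off $\psi(t,T)$ from the terminal configuration. The paper packages this as five states $S_{i,0}, S_{i,1}^\pm, S_{i,2}^\pm$ and a priority order $\succ$ that incorporates the first-in tie-break, but the content of the case analysis matches your (i)--(iii). Your one loose end is the tie-breaking: your invariant in (ii) is the weak inequality $\mu_a^+ \ge \mu_b^+$, which does not by itself exclude the ``$+$ picks from $S$, $-$ picks $b$'' sub-case when all three rates coincide; the paper avoids this by making $\succ$ a strict total order via arrival time, and you should do the same rather than leave it to the informal remark at the end.
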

\begin{proof}
For jobs $x_1, x_2$, we say that $x_1$ has higher priority than $x_2$, denoted as $x_1 \succ x_2$, if 
\begin{itemize}
    \item $\max_{k\in[K]} \mu(x_1\tp \theta_{k}^*) > \max_{k\in[K]} \mu(x_2\tp \theta_{k}^*)$, or
    \item $\max_{k\in[K]} \mu(x_1\tp \theta_{k}^*) =\max_{k\in[K]} \mu(x_2\tp \theta_{k}^*)$ but job $x_1$ enters the queue earlier than job $x_2$.
\end{itemize}
In particular, the optimal policy chooses the job with the highest priority with respect to the binary order $\succ$.

Recall that $Q^+(t)$ and $Q^-(t)$ are what are obtained after coupling the two consecutive policy-switching queues defined for $Q(t,T)$ and $Q(t-1,T)$. To characterize $\psi(t,T)$, we understand the dynamics of the coupled queues for time steps $i\in[t+1,T]$. Note that $\cal X_t^+=\cal X_t^-$. For $i\in[t+1,T]$, let $\cal X_i^+$ and $\cal X_i^-$ denote the queue states of the two coupled policy-switching queues for round $i$. 
For $i\in[t+1,T]$, let us consider the following five states.
\begin{align*}
    S_{i,0} &= \cbr{\cal X_i^+ = \cal X_i^-}, \\
    S_{i,1}^+ &= \cbr{\cal X_i^+ = \cal X_i^- \cup \{x_i^+\}}, \\
    S_{i,1}^- &= \cbr{\cal X_i^- = \cal X_i^+ \cup \{x_i^-\}}, \\
    S_{i,2}^+ &= \cbr{\cal X_i^+ \setminus \cal X_i^- = \{x_i^+\},~~ \cal X_i^- \setminus \cal X_i^+ = \{x_i^-\}, ~~ x_i^+ \succ x_i^-}, \\
    S_{i,2}^- &= \cbr{\cal X_i^+ \setminus \cal X_i^- = \{x_i^+\},~~ \cal X_i^- \setminus \cal X_i^+ = \{x_i^-\},~~ x_i^+ \prec x_i^-}.
\end{align*}
Then we show that for $\cal X_i^+$ and $\cal X_i^-$, it is sufficient to consider transitions between these five states. 

For round $t$, note that $\cal X_t^+=\cal X_t^-$. As $D^+(t)\leq D^-(t)$, we consider two cases: (i) $D^+(t)=D^-(t)$ and (ii) $D^+(t)=0$, $D^-(t)=1$. If $D^+(t)=D^-(t)=0$, then
 $\cal X_{t+1}^+=\cal X_t^+=\cal X_t^-=\cal X_{t+1}^-$, in which case we observe $S_{t+1,0}$ at time $t+1$. If $D^+(t)=D^-(t)=1$, the two policy-switching queues have the same number of jobs at time $t+1$, and moreover, $\mathcal{X}_{t+1}^-$ is obtained from $\mathcal{X}_{t}^-$ after the optimal policy processing a job in $\mathcal{X}_{t}^-$. Therefore, when $D^+(t)=D^-(t)=1$, the two possible states for round $t+1$ are $S_{t+1,0}$ and $S_{t+1,2}^+$. If $D^+(t)=0$ and $D^-(t)=1$, then round $t+1$ would be in state $S_{t+1,1}^+$.

Next, we consider round $i\in[t+1,T]$ for which $\cal X_i^+$ and $\cal X_i^-$ are given. Assume that round $i$ is in one of the above five states. Then we will argue that so is round $i+1$. As $i\geq t+1$, both $\cal X_i^+$ and $\cal X_i^-$ take the optimal policy.

\begin{itemize}
    \item[(C1)] If round $i$ is in state $S_{i,0}$, then $\cal X_i^+=\cal X_i^-$, so the optimal policy would choose the same job. As a result, round $i+1$ would be in state $S_{i+1,0}$.
    \item[(C2)] If round $i$ is in state $S_{i,1}^+$, it falls into the following two cases, based on whether the optimal policy chooses $x_i^+$ for $\cal X_i^+$.
    \begin{itemize}
        \item[(C2-1)] If the optimal policy selects $x_i^+$ for $\cal X_i^+$, this means that $D^+(i)\geq D^-(i)$. Therefore, there are three possibilities. When $D^+(i)=D^-(i)=0$, as the queues keep the same sets of jobs for round $i+1$, we have $S_{i+1,1}^+$ for round $i+1$. If $D^+(i)=D^-(i)=1$, as the optimal policy would choose another job in $\cal X_i^-$, we still have state $S_{i+1,1}^+$ for round $i+1$. When $D^+(i)=1$ and $D^-(i)=0$, round $i_1$ would be in state $S_{i+1,0}$. 
        
\item[(C2-2)]: If the optimal policy does not choose $x_i^+$ from $\cal X_i^+$, then it chooses the same job for $\cal X_i^+$ and $\cal X_i^-$, which means that round $i+1$ would be in state $S_{i+1,1}^+$.
    \end{itemize}
    To summarize, for case (C2), we have $S_{i+1,0}$ or $S_{i+1,1}^+$ in round $i+1$.
    \item[(C3)] If round $i$ is in state  $S_{i,1}^-$, by the symmetry between $S_{i,1}^+$ and $S_{i,1}^-$, we may argue that we have $S_{i+1,0}$ or $S_{i+1,1}^-$ in round $i+1$ with a similar argument as in case (C2).

 \item[(C4)] If round $i$ is in state $S_{i,2}^+$, it falls into the following two cases, based on whether the optimal policy chooses $x_i^+$ for $\cal X_i^+$.
 \begin{itemize}
     \item[(C4-1)] If the optimal policy selects $x_i^+$ for $\cal X_i^+$, this means that $D^+(i)\geq D^-(i)$. Therefore, there are three possibilities. When $D^+(i)=D^-(i)=0$, as the queues keep the same sets of jobs for round $i+1$, we have $S_{i+1,2}^+$ for round $i+1$. If $D^+(i)=D^-(i)=1$, the optimal policy would choose another job in $\cal X_i^-$. If the optimal policy chooses $x_i^-$ from $\cal X_i^-$, we have state $S_{i+1,0}$ in round $i+1$. If not, round $i+1$ would be in state $S_{i+1,2}^+$. When $D^+(i)=1$ and $D^-(i)=0$, round $i_1$ would be in state $S_{i+1,1}^-$.
     \item[(C4-2)] If the optimal policy does not choose $x_i^+$ from $\cal X_i^+$, then it would not choose $x_i^-$ from $\cal X_i^-$ either. Hence, the optimal policy chooses the same job for $\cal X_i^+$ and $\cal X_i^-$, so round $i+1$ would be in state $S_{i+1,2}^+$.
 \end{itemize}
 In summary, for case (C4), we have $S_{i+1,0}$ or $S_{i+1,1}^-$ or $S_{i+1,2}^+$ in round $i+1$.
 \item[(C5)] If round $i$ is in state  $S_{i,2}^-$, by the symmetry between $S_{i,2}^+$ and $S_{i,2}^-$, we may argue that we have $S_{i+1,0}$ or $S_{i+1,1}^+$ or $S_{i+1,2}^-$ in round $i+1$ with a similar argument as in case (C4).
\end{itemize}

Recall that if $D^+(t)=0$ and $D^-(t)=1$, then round $t+1$ would be in state $S_{t+1,1}^+$. Due to our case analysis above, we have $S_{t+2,0}$ or $S_{t+2,1}^+$ for round $t+2$. If the state of round $t+2$ is $S_{t+2,0}$, then we have $S_{i,0}$ for each round $i\geq t+2$, in which case $\psi(t,T)=0$. If we have $S_{t+2,1}^+$ for round $t+2$, we repeat the same argument as for state $t+1$. If we observe $S_{T,1}^+$ for round $T$, then we have $\psi(t,T)=1$. Otherwise, round $T$ would be in state $S_{T,0}$, in which case $\psi(t,T)=0$.

Moreover, if $D^+(t)=D^-(t)$, then round $t+1$ would be in state $S_{t+1,0}$ or $S_{t+1,2}^+$. By our case analysis above, we have $S_{t+2,0}$ or $S_{t+2,1}^-$ or $S_{t+2,2}^+$ for round $t+2$. If the state of round $t+2$ is $S_{t+2,0}$, then we have $S_{i,0}$ for each round $i\geq t+2$, in which case $\psi(t,T)=0$. If we have $S_{t+2,1}^-$ for round $t+2$, we have $S_{t+3,0}$ or $S_{t+3,1}^-$ for round $t+3$. If $S_{t+3,0}$ is the state of round $t+3$, then as before, we deduce $\psi(t,T)=0$. It the state is $S_{t+3,1}^-$, we repeat the same argument as for state $t+2$. If we observe $S_{T,1}^-$ for round $T$, then we have $\psi(t,T)=-1$. Otherwise, round $T$ would be in state $S_{T,0}$, in which case $\psi(t,T)=0$. If we observe $S_{t+2,2}^+$ for round $t+2$, then we again repeat the argument as for round $t+1$ to argue that $\psi(t,T)\in\{0,-1\}$.

This finishes the proof of \Cref{coro:diff_bound}.
\end{proof}

\subsection{Proof of \Cref{lem:decomposition}}

Recall the definition of filtration given by $\cal F_t^+:= \cal F_t \lor \sigma\br{E(t-1), \v A(t)}$ for $t\in[T]$, 
and notice that $x_t, k_t$ are $\cal F_t^+$-measurable.

By the regret decomposition $R_T= \sum_{t=1}^{T - 1}  \E\sqbr{\psi(t,T)}$, we deduce that
\begin{align*}
    R_T &= \sum_{t=1}^{T - 1}  \E\sqbr{\E\sqbr{\psi(t,T)\given \cal F_t^+}} \\
    &=  \sum_{t=1}^{T - 1} \E\sqbr{\P\br{ \v D^+(t) =0, \v D^-(t)=0\given \cal F_t^+} \E\sqbr{\psi(t,T)\given \cal F_t^+,  \v D^+(t)=0,   \v D^-(t)=0}} \\
    &\quad +  \sum_{t=1}^{T - 1} \E\sqbr{\P\br{ \v D^+(t) = 1,  \v D^-(t)=1\given \cal F_t^+} \E\sqbr{\psi(t,T)\given \cal F_t^+,  \v D^+(t) = 1,  \v D^-(t) = 1}} \\
    &\quad + \sum_{t=1}^{T- 1} \E\sqbr{\P\br{ \v D^+(t)=0,  \v D^-(t)=1 \given \cal F_t^+}\E\sqbr{\psi(t,T)\given \cal F_t^+, \v D^+(t)=0, \v D^-(t)=1}}
\end{align*}
where the first equality holds due to the tower rule and the second equality holds since $D^+(t)\leq D^-(t)$ by our coupling process, which prevents the case where $D^+(t)=1$ and $D^-(t)=0$. For the first part of the right-hand side, it follows from \Cref{coro:diff_bound} that
\begin{align*}
    &\E\sqbr{\psi(t,T)\given \cal F_t^+,  \v D^+(t)=0,   \v D^-(t)=0} \leq 0,\\
    &\E\sqbr{\psi(t,T)\given \cal F_t^+,  \v D^+(t)=1,   \v D^-(t)=1} \leq 0.
\end{align*}
Next, for the second part, notice that the departure disagreement event with $ D^+(t)=0$, $ D^-(t)=1$ occurs when 
\begin{align*}
    \mu\br{(x_t^+)\tp \theta_{k_t^+}^*} \leq U_{t,2} \leq \mu\br{(x_t^-)\tp \theta_{k_t^-}^*}, \quad U_{t,2} \sim\Unif(0,1).
\end{align*}
Moreover, as $Q^+(t)=Q(t)$ and  $(x_t^*,k_t^*)\in \argmax_{x\in\cal X_t,k\in[K]} \mu(x\tp\theta_k^*)$, we know that $x_t^+ = x_t$, $k_t^+=k_t$ and $x_t^- = x_t^*$, $k_t^-=k_t^*$.
Therefore, we have
\begin{align*}
    \P\br{ \v D^+(t)=0,  \v D^-(t)=1 \given \cal F_t^+} = \mu\br{(x_t^*)\tp \theta_{k_t^*}^*} - \mu\br{x_t\tp \theta_{k_t}^*}.
\end{align*}
Plugging in these observations to the above decomposition of $R_T$, we obtain
\begin{align*}
    R_T &\leq   \sum_{t=1}^{T - 1} \E\sqbr{\br{\mu\br{(x_t^*)\tp \theta_{k_t^*}^*} - \mu\br{x_t\tp \theta_{k_t}^*}}\E\sqbr{\psi(t,T)\given \cal F_t^+, \v D^+(t)=0,  \v D^-(t)=1}}  \\
    &\leq   \sum_{t=1}^{T- 1} \sqrt{\E\sqbr{\br{\mu\br{(x_t^*)\tp \theta_{k_t^*}^*} - \mu\br{x_t\tp \theta_{k_t}^*}}^2}} \sqrt{ \E\sqbr{\E\sqbr{\psi(t,T)\given  \cal F_t^+, \v D^+(t)=0, \v D^-(t)=1}^2}} 
\end{align*}
where the second inequality follows from the Cauchy-Schwarz inequality. 
For the second square root term,
\begin{align*}
    &\E\sqbr{\E\sqbr{\psi(t,T)\given \cal F_t^+,  \v D^+(t) =0,  \v D^-(t)=1}^2} \\
    &\quad \leq \E\sqbr{\E\sqbr{\br{\psi(t,T)}^2\given \cal F_t^+,  \v D^+(t) =0,  \v D^-(t)=1}} \tag{$\E[X|\cal F]^2 \leq \E[X^2|\cal F]$} \\
    &\quad =\E\sqbr{\E\sqbr{\psi(t,T)\given \cal F_t^+,  \v D^+(t) =0,  \v D^-(t)=1}} \tag{\Cref{coro:diff_bound}} \\
    &\quad= \E\sqbr{\wtilde\psi(t,T)} 
\end{align*}
This finishes the proof. 

\begin{remark}\label{remark:regret-adv}
Notice that the same proof can be applied for the adversarial setting by replacing the filtration $\cal F_t^+$ with $\cal G_t^+$. To be more precise, we may prove that $$\E[\psi(t,T)]\leq \sqrt{\E\left[\left(\mu\left((x_t^*)\tp \theta_{k_t^*}^*\right) - \mu\left(x_t\tp \theta_{k_t}^*\right)\right)^2\right]}\sqrt{ \E \left[ \wtilde \psi'(t,T)\right]}$$
where $\cal G_t^+:=\cal G_t\lor \sigma(\{\v A(t)\})$, $\wtilde \psi'(t,T) := \E [\psi(t,T) \mid \cal G_t^+, \v D^+(t)=0, \v D^-(t)=1]$, and $(x_t^*,k_t^*)\in \argmax_{x\in\cal X_t,k\in[K]} \mu(x\tp\theta_k^*)$.
\end{remark}

\section{Proofs for the Lemmas in \Cref{sec:4}} \label{sec:4_proof}

In this section, we provide our proofs of \Cref{lem:prediction_error,lem:burnin,lem:nonincreasing,lem:queue_diff_exp}. The proof of \Cref{thm:regret} is deferred to \Cref{sec:regret_1}.

\subsection{Proof of \Cref{lem:prediction_error}}

Recall the definition of $\what\theta_{t-1,k}$ which is the projection of the maximum likelihood estimator $\what\theta_{t-1,k}^{(1)}$ following
\begin{align*}
    \what\theta_{t-1,k} = \argmin_{\theta\in\Theta} \norm{\sum_{i=1}^{t-1} \sqbr{\mu\br{x_i\tp \theta} - \mu\br{x_i\tp \what\theta_{t-1,k}^{(1)}}}x_i}_{V_{t-1,k}^{-1}}.
\end{align*}
Set $k\in[K]$. For all $x\in\cal X$, $t\in[T]$, with probability at least $1-\delta/K$, we have
\begin{align*}
    &\abs{\mu\br{x\tp \what\theta_{t-1,k}} - \mu\br{x\tp \theta_k^*}} \\ 
    &\quad \leq R \abs{x\tp\br{ \what\theta_{t-1,k} - \theta_k^*}} \tag{$R$-Lipschitz} \\
    &\quad \leq R \norm{x}_{V_{t-1,k}^{-1}} \cdot \norm{\what\theta_{t-1,k} - \theta_k^*}_{V_{t-1,k}^{-1}} \tag{Cauchy-Schwarz} \\
    &\quad \leq \udef{\frac{\kappa}{2} \br{\sqrt{2d\log\br{1+\frac{1}{\kappa \lambda_0 d}\sum_{i=1}^{t}\ind{k_i=k}} +\log(K/\delta)} + S\sqrt{\lambda_0}}}{\beta_{t-1,k}} \norm{x}_{V_{t-1,k}^{-1}} \tag{\Cref{lem:faury}, $R\leq 1/4$}
\end{align*}
where for the last inequality, we replaced $t$ with the actual number of the update for server $k$, which is $\sum_{i=1}^t\ind{k_i=k}$ and $\delta$ with $\delta/K$. Finally, taking the union bound for all servers $k\in[K]$ finishes the proof.

\subsection{Proof of \Cref{lem:burnin}}

For all $t\in[\tau+1,T]$, we have
\begin{align}
    \|x\|_{V_{t-1,k}^{-1}} \leq \frac{\|x\|_2}{\lambda_{\min}^{1/2}\br{V_{t-1,k}}} \leq \frac{\|x\|_2}{\lambda_{\min}^{1/2}\br{V_{\tau,k}}}. \label{eq:t1}
\end{align}
Let $N>0$ be the number of jobs matched to server $k\in[K]$ during the pure-exploration period $[1,\tau]$. We first show that if $N$ is large enough, the minimum eigenvalue of $V_{\tau,k}$ is reduced enough to show the proof.

Recall that in the pure-exploration phase, we immediately choose the newly arriving job with the server in a round-robin manner. Then, we can see that $N$ features inside $V_{\tau,k}$ are i.i.d. samples from the unknown distribution $\cal D$. Here, if
\begin{align}
    N \geq \br{\frac{C_1\sqrt{d} + C_2\sqrt{\log(K/\delta)}}{\sigma_0^2}}^2 + \frac{2B}{\sigma_0^2} \label{eq:tau_cond}
\end{align}
for some $B>0$, it follows from \Cref{prop:li} that \begin{align*}
    \lambda_{\min}(V_{\tau,k}) \geq B
\end{align*}
holds with probability at least $1-\delta/K$. Based on this, set $B = 16\beta_T^2/(\eps-2\v\eps)^2$. Then, if
\begin{align}
    N \geq C_3\br{\frac{d+\log(K/\delta)}{\sigma_0^4} + \frac{16\beta_T^2}{\sigma_0^2 (\eps-2\v\veps)^2}} = \bigO{\frac{d\log(T)}{\sigma_0^4\eps^2}} \label{eq:N}
\end{align}
for some absolute constant $C_3>0$, we have
\begin{align*}
    \lambda_{\min}(V_{\tau,k})\geq 16\beta_T^2(\eps-2\v\veps)^2
\end{align*}

Then, considering \Cref{eq:t1}, with probability at least $1-\delta/K$ and for all $t\in[\tau+1,T]$,
\begin{align}
    \|x\|_{V_{t-1,k}^{-1}} \leq \frac{\|x\|_2}{\lambda_{\min}^{1/2}\br{V_{\tau,k}}} \leq \frac{\eps-2\v\veps}{4\beta_T} \leq \frac{\eps-2\v\veps}{4\beta_{t-1,k_t}} \label{eq:t3}
\end{align}
and the desired result is achieved.

Now, we show that by our definition of $\tau$, each of the $K$ servers is guaranteed to get at least $\what \tau>0$ i.i.d. features ($K\what\tau$ in total) up to round $\tau$ with probability at least $1-\delta$, where 
\begin{align*}
    \what \tau := C_3\br{\frac{d+\log(K/\delta)}{\sigma_0^4} + \frac{16\beta_T^2}{\sigma_0^2 (\eps-2\v\veps)^2}}.
\end{align*}
Since in a pure exploration round, we always choose the newly arriving job for the random exploration. Therefore, the total number of arrivals until round $\tau$ becomes the number of the i.i.d. features inside $V_{\tau,k}$. Now, consider the sum of $A(t)$. Since $\sum_{t=1}^\tau A(t)$ are the summation of i.i.d. Bernoulli random variables with mean $\lambda \tau$, applying Hoeffding's inequality
\begin{align*}
\P\br{\sum_{t=1}^\tau A(t) - \lambda \tau \leq K \what\tau - \lambda \tau} \leq \exp\br{\uterm{\frac{-2\br{\lambda \tau - K \what\tau}^2}{\tau}}{1}} 
\end{align*}
if $K\what \tau \leq \lambda \tau $ holds. To represent the probability in $\delta$, setting $\term 1\leq \log(\delta)$, we have
\begin{align*}
    \lambda^2 \tau^2 - \br{2\lambda K \what\tau + \frac{1}{2} \log(1/\delta)} \tau + K^2 \what\tau^2 \geq 0,
\end{align*}
Solving the quadratic inequality for $\tau$, and choosing $\tau$ as
\begin{align*}
    \tau := \frac{A_1}{\lambda^2} \geq \frac{A_1 + \sqrt{A_1^2 - 4 \lambda^2 A_2}}{2\lambda^2} = \bigO{\frac{d\log(T)}{\sigma_0^4\eps^2}},
\end{align*}
where 
\begin{align*}
    A_1 = 2\lambda K \what\tau + \frac{1}{2}\log(1/\delta), \quad A_2 = K^2 \what\tau^2
\end{align*}
We can also see that $\tau$ satisfies the condition since $\lambda \tau \geq \frac{A_1 + \sqrt{A_1^2 - 4 \lambda^2 A_2}}{2\lambda} \geq \frac{2 \lambda K \what\tau}{2\lambda } = K\what\tau$. Therefore by our choice of $\tau$ and $\what \tau$
\begin{align}
    \P\br{\sum_{t=1}^\tau A(t)  \geq K \what\tau} \geq 1-\delta. \label{eq:t2}
\end{align}
Finally, we gather the results. By \Cref{eq:t2} we get at least $K\what \tau$ i.i.d. features during the initial $\tau$ rounds with probability at least $1-\delta$. By round-robin assignment during the pure-exploration, each server receives at least $\what \tau$ i.i.d. features. Since $\what\tau$ satisfies the condition in \Cref{eq:N}, for each server $k\in[K]$, with probability at lest $1-\delta/K$, \Cref{eq:t3} holds. Lastly, taking the union bound for all servers and for  \Cref{eq:t2} finishes the proof.

\subsection{Proof of \Cref{lem:nonincreasing}}

Before we prove \Cref{lem:nonincreasing}, we need to establish several technical results first. By our $\v\veps$-exploration policy and \Cref{ass:iid}, we deduce following proposition.

\begin{proposition} Under the $\v\veps$-exploration policy
    \begin{align*}
        \E\sqbr{x_t x_t\tp} \succeq \lambda\v\veps \Sigma, \quad \E\sqbr{\ind{k_t = k} x_t x_t\tp} \succeq \frac{\lambda \v\veps}{K} \Sigma.
    \end{align*}
\end{proposition}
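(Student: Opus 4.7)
The plan is to condition on whether the round is an actual exploration round. Under the $\v\veps$-exploration policy, a round $t$ is an \emph{exploration round} precisely when $A(t-1)=1$ and $E(t-1)=1$; these two events are independent (the arrival indicator is independent of the Bernoulli coin $E(t-1)$ we flip at the end of the previous round), so the probability of being an exploration round is $\lambda \v\veps$. On this event, $x_t = x^{(t-1)}$ is a fresh draw from $\cal D$ (by \Cref{ass:iid}) that is independent of the history and of the uniform draw of $k_t$.

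For the first inequality, I would split the expectation as
\[
\E[x_t x_t\tp] = \lambda \v\veps \cdot \E[x_t x_t\tp \mid A(t-1)=E(t-1)=1] + (1-\lambda\v\veps)\cdot \E[x_t x_t\tp \mid \text{not exploration round}].
\]
The conditional expectation in the first term equals $\E_{x\sim\cal D}[x x\tp] \succeq \Sigma$ by \Cref{ass:iid}, and the second term is automatically PSD since $x_t x_t\tp \succeq 0$ pointwise. Hence $\E[x_t x_t\tp] \succeq \lambda\v\veps \Sigma$.

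For the second inequality, note that on the exploration event, $k_t\sim\Unif([K])$ is drawn independently of $x_t=x^{(t-1)}$, so $\P(k_t=k \mid \text{exploration round}) = 1/K$ and
\[
\E[\ind{k_t=k} x_t x_t\tp \mid \text{exploration round}] = \tfrac{1}{K}\E_{x\sim\cal D}[xx\tp] \succeq \tfrac{1}{K}\Sigma.
\]
Off the exploration event, $\ind{k_t=k} x_t x_t\tp \succeq 0$ deterministically, so dropping that non-negative contribution yields $\E[\ind{k_t=k} x_t x_t\tp]\succeq \frac{\lambda\v\veps}{K}\Sigma$.

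I do not anticipate a serious obstacle here: the proof is essentially a one-line conditioning argument, and the only subtle points are (i) that $A(t-1)\perp E(t-1)$ so their joint probability factors as $\lambda\v\veps$, and (ii) that conditional on exploration, the new feature $x^{(t-1)}$ is drawn fresh from $\cal D$ independently of the uniform server selection, which follows directly from how the algorithm generates $x_t$ and $k_t$ in the exploration branch together with \Cref{ass:iid}.
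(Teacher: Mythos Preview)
Your proposal is correct and follows essentially the same approach as the paper: condition on the exploration event $\{A(t-1)=1,\,E(t-1)=1\}$, use that on this event $x_t=x^{(t-1)}$ is a fresh draw from $\cal D$ (and $k_t\sim\Unif([K])$ independently), and drop the remaining PSD contribution. Your write-up is in fact slightly more explicit than the paper's about the independence of $A(t-1)$ and $E(t-1)$ and about why the non-exploration term can be discarded.
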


\begin{proof}
    $\v\veps$-exploration occurs with probability of $\v\veps$ when the new job $x^{(t-1)}$ arrives and $A(t-1)=1$. Therefore, by \Cref{ass:iid},
    \begin{align*}
        \E\sqbr{x_t x_t\tp} &\succeq \P(A(t-1)=1, E(t-1)=1) \E\sqbr{x_t x_t\tp \given \P(A(t-1)=1, E(t-1)=1)}  \\
        & = \P(A(t-1)=1, E(t-1)=1) \E\sqbr{x^{(t-1)}\br{x^{(t-1)}}\tp \given \P(A(t-1)=1, E(t-1)=1)} \\
        & \succeq \lambda \v\veps \Sigma.
    \end{align*}
    Also, since we choose the server uniformly,
    \begin{align*}
        \E\sqbr{\ind{k_t = k} x_t x_t\tp } \succeq (\lambda \v\veps/K) \Sigma,
    \end{align*}
    as required.
\end{proof}

Now we provide a high probability lower bound on the minimum eigenvalue of the design matrix.

\begin{lemma} \label{lem:min_eig}
    For $t\in[\tau+1, T],k\in[K]$, with probability at least $1 - \exp\br{-\frac{\lambda\v\veps (t-\tau)}{8K}} - d \exp \br{-\frac{\lambda \v\veps (t-\tau) \sigma_0^2}{16K}}$,
    \begin{align*}
        \lambda_{\min}(V_{t,k}) \geq  \lambda_0 + \frac{\lambda \v\veps (t-\tau) \sigma_0^2}{4K}.
    \end{align*}
\end{lemma}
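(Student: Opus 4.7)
The plan is to reduce the problem to a lower bound on the random design matrix coming only from $\v\veps$-exploration rounds, which have a clean i.i.d.\ structure, and then combine a scalar Chernoff bound on the number of such rounds with a matrix Chernoff bound on the corresponding sum of rank-one outer products.

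First I would isolate the contribution from $\v\veps$-exploration rounds assigned to server $k$. Let $\xi_i := \ind{E(i-1)=1,\, A(i-1)=1,\, k_i=k}$ for $i\in[\tau+1,t]$. Since $E(i-1)\sim\Bern(\v\veps)$ and $A(i-1)\sim\Bern(\lambda)$ are drawn freshly each round, and $k_i\sim\Unif([K])$ conditionally on $\{E(i-1)=A(i-1)=1\}$, the $\xi_i$ are i.i.d.\ $\Bern(p)$ with $p := \lambda\v\veps/K$. Dropping PSD contributions from pure-exploration and UCB-based rounds yields
\begin{align*}
V_{t,k} \succeq \kappa\lambda_0\I + \sum_{i=\tau+1}^{t} \xi_i\, x_i x_i\tp \succeq \lambda_0\I + \sum_{i=\tau+1}^{t} \xi_i\, x_i x_i\tp,
\end{align*}
using $\kappa\geq 1$ (since $\dot\mu\leq 1/4$ forces $\kappa\geq 4$).

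Next I would run a two-step concentration argument. Let $N_{t,k}:=\sum_{i=\tau+1}^t \xi_i$ and $N_0:=\lambda\v\veps(t-\tau)/(2K)$. By the standard multiplicative Chernoff inequality with deviation $1/2$,
\begin{align*}
\P\bigl(N_{t,k} < N_0\bigr) \leq \exp\!\br{-\tfrac{(t-\tau)\lambda\v\veps}{8K}}.
\end{align*}
On the complement, the features $x_i$ in the $\v\veps$-exploration rounds equal the arriving jobs $x^{(i-1)}$, which are i.i.d.\ draws from $\cal D$ and independent of $\{\xi_i\}$ and of the history. Enumerating them as $y_1,\ldots,y_{N_{t,k}}$, PSD monotonicity gives $\sum_i \xi_i x_i x_i\tp \succeq \sum_{j=1}^{N_0} y_j y_j\tp$ whenever $N_{t,k}\geq N_0$, and the $y_j$ are i.i.d.\ with $\E[y_j y_j\tp]\succeq\Sigma$ and $\|y_j y_j\tp\|\leq 1$.

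Then I would apply the matrix Chernoff bound for the minimum eigenvalue of a sum of independent PSD matrices bounded in operator norm by $1$: with $\mu_{\min}\geq N_0\sigma_0^2$ and deviation $1/2$,
\begin{align*}
\P\!\br{\lambda_{\min}\!\br{\sum_{j=1}^{N_0} y_j y_j\tp} \leq \tfrac{N_0\sigma_0^2}{2}} \leq d\exp\!\br{-\tfrac{N_0\sigma_0^2}{8}} = d\exp\!\br{-\tfrac{(t-\tau)\lambda\v\veps\sigma_0^2}{16K}}.
\end{align*}
A union bound over the two failure events, combined with the $\lambda_0\I$ from the regularizer, produces exactly the claimed lower bound $\lambda_0 + \lambda\v\veps(t-\tau)\sigma_0^2/(4K)$ with the stated probability.

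The main obstacle is to verify the independence structure needed to apply matrix Chernoff cleanly: we must ensure that conditioning on $\{\xi_i\}_{i=\tau+1}^t$ does not distort the distribution of the exploration features. This works because in any $\v\veps$-exploration round the chosen feature is \emph{the new arrival} $x^{(i-1)}$, which by \Cref{ass:iid} is drawn i.i.d.\ from $\cal D$ independently of the filtration, the exploration indicator $E(i-1)$, the arrival indicator $A(i-1)$, and the uniform server choice. Everything else is routine bookkeeping with multiplicative Chernoff and matrix Chernoff.
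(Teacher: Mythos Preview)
Your proposal is correct and follows essentially the same approach as the paper: isolate the $\v\veps$-exploration rounds for server $k$, apply a scalar multiplicative Chernoff bound (with deviation $1/2$) to lower-bound their count by $N_0=\lambda\v\veps(t-\tau)/(2K)$, then apply the matrix Chernoff bound (again with deviation $1/2$) to the i.i.d.\ exploration features, and combine via a union bound together with the $\lambda_0\I$ regularizer. Your explicit justification of the independence structure (that on exploration rounds the selected feature is the fresh arrival $x^{(i-1)}\sim\cal D$, independent of $A(i-1)$, $E(i-1)$, and the uniform server draw) and your use of $\kappa\geq 1$ to pass from $\kappa\lambda_0\I$ to $\lambda_0\I$ are exactly the steps the paper relies on (somewhat implicitly), and the numerical constants match line by line.
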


\begin{proof}
Consider $V_{t,k}$. We will first show a high probability bound that for some $N>0$, the number of i.i.d. sampled features inside $V_{t,k}$ from $\tau+1$ to $T$ is larger than $N>0$. Then we will show that if there are $N$ (or more) i.i.d. sampled features $\{x_i\}_{i=1}^N$, then $\lambda_{\min}(\sum_{i=1}^N x_i x_i\tp)$ is larger than $M$ for some $M>0$ with high probability. Lastly, considering the probability that both events hold, we obtain the desired result of $\lambda_{\min}(V_{t,k}) \geq M$ with high probability.

First, define the total number of random explorations for server $k$ from $\tau+1$ up to round $t$ inside as
\begin{align*}
    E_{t,k} = \sum_{i=\tau+1}^{t} \ind{A(i-1)=1,E(i-1)=1, k_i=k} 
\end{align*}
Since $\E[E_{t,k}] = \lambda\v\veps(t-\tau)/K$, set
\begin{align*}
    \delta \leftarrow \frac{1}{2}, \quad \mu\leftarrow \E[E_{t,k}]=\frac{\lambda\v\veps(t-\tau)}{K}, \quad  N\leftarrow\delta \mu 
\end{align*}
and apply the \emph{Chernoff bound} (\Cref{lem:chernoff}), we have
\begin{align}
    \P\br{E_{t,k} \geq N} = 1- \P\br{E_{t,k} \leq N} \geq 1 - \exp\br{-\frac{\lambda\v\veps (t-\tau)}{8K}}. \label{eq:chernoff}
\end{align}

Next, consider the case where $E_{t,k}\geq N$. Choose those i.i.d. sampled features and define a new design matrix $\what V_{t,k}$ which consists of them as
\begin{align*}
    \what V_{t,k} = \sum_{i=1}^t \ind{A(i-1)=1,E(i-1)=1, k_i=k}x_i x_i\tp
\end{align*}
Now we apply the \emph{Matrix Chernoff bound} (\Cref{lem:mx_chernoff}), on $\what V_{t,k}$ with
\begin{align*}
    \delta' \leftarrow \frac{1}{2}, \quad n' \leftarrow N, \quad M \leftarrow \delta' n' \sigma_0^2
\end{align*}
Then
\begin{align}\label{eq:chernoff2}
    \P \br{\lambda_{\min}(\what V_{t,k})\geq M\given E_{t,k}\geq N}&\geq \P \br{\lambda_{\min}(\what V_{t,k})\geq M\given E_{t,k}= N} \nonumber \\
    &= 1 - \P \br{\lambda_{\min}(\what V_{t,k})\leq M\given E_{t,k}= N} \nonumber \\
    &\geq 1 - d \exp \br{-\frac{\lambda \v\veps (t-\tau) \sigma_0^2}{16K}}.  
\end{align}

Now we consider the case when both \Cref{eq:chernoff,eq:chernoff2} hold. By \Cref{eq:chernoff} we have $E_{t,k} \geq \frac{\lambda\v\veps (t-\tau)}{2K}$ which means that there are at least $\frac{\lambda\v\veps (t-\tau)}{2K}$ i.i.d. features inside $V_{t,k}$ with high probability. Next, by \Cref{eq:chernoff2}, if the number of i.i.d. features inside $V_{t,k}$ is larger than $\frac{\lambda\v\veps(t-\tau)}{2K}$, we have $\lambda_{\min}(\sum_{i=1}^t \ind{k_i=k}x_i x_i\tp)\geq \frac{\lambda\v\veps (t-\tau) \sigma_0^2}{4K}$ with high probability. Then
\begin{align*}
    \lambda_{\min}(V_{t,k}) = \lambda_{\min}\br{\lambda_0 \I + \sum_{i=1}^t \ind{k_i=k}x_i x_i\tp)} \geq \lambda_0 + \frac{\lambda\v\veps (t-\tau) \sigma_0^2}{4K}.
\end{align*}
Next, considering the probability of both cases holds together,
\begin{align*}
    &\P\br{E_{t,k}\geq N,~ \lambda_{\min}(\what V_{t,k})\geq M} \\
    &\quad = \P\br{E_{t,k}\geq N}  \P\br{\lambda_{\min}(\what V_{t,k})\geq M \given E_{t,k}\geq N} \\
    & \quad \geq 1 - \exp\br{-\frac{\lambda\v\veps (t-\tau)}{8K}} - d \exp \br{-\frac{\lambda \v\veps (t-\tau) \sigma_0^2}{16K}},
\end{align*}
where the last inequality follows from \Cref{eq:chernoff,eq:chernoff2}, finishing the proof.

\end{proof}

\para{Proof of \Cref{lem:nonincreasing}}

Now we are ready to start the proof. Define the good event $\cal E_{t,1}$ when \Cref{lem:min_eig} holds in round $t$. Then on the event $\cal E_{t,1}$, for any $x\in\cal X$, $t\in[\tau+1,T]$,
\begin{align*}
    \|x\|_{V_{t-1,k_t}^{-1}}^2 \leq \frac{\|x\|_2^2}{\lambda_{\min} \br{V_{t-1,k_t}} } \leq \br{\lambda_0 + \frac{\lambda \v\veps (t-\tau-1) \sigma_0^2}{4K}}^{-1}, 
\end{align*}
If $\cal E_{t,1}$ does not hold, we use the naive bound of \begin{align*}
\|x\|_{V_{t-1,k_t}^{-1}}^2\leq 1/\lambda_0.
\end{align*}
Taking expectation, 
\begin{align}
    &\E\sqbr{\|x_t\|_{V_{t-1,k_t}^{-1}}^2 \given \cal F_t, E(t-1)} \nonumber \\
    &\quad = \E\sqbr{\ind{\cal E_{t,1}} \|x_t\|_{V_{t-1,k_t}^{-1}}^2 \given \cal F_t, E(t-1)} + \E\sqbr{\ind{\cal E_{t,1}^c} \|x_t\|_{V_{t-1,k_t}^{-1}}^2 \given \cal F_t, E(t-1) } \nonumber \\
    &\quad = \P(\cal E_{t,1}) \E\sqbr{\|x_t\|_{V_{t-1,k_t}^{-1}}^2 \given \cal F_t, E(t-1)} + \P(\cal E_{t,1}^c) \E\sqbr{\|x_t\|_{V_{t-1,k_t}^{-1}}^2 \given \cal F_t, E(t-1) } \nonumber \\
    &\quad \leq \br{\lambda_0 + \frac{\lambda \v\veps (t-\tau-1) \sigma_0^2}{4K}}^{-1}  \nonumber \\
    &\qquad + \frac{1}{\lambda_0} \br{\exp\br{-\frac{(t-\tau-1)\lambda\v\veps }{8K}} + d \exp \br{-\frac{(t-\tau-1)\lambda \v\veps  \sigma_0^2}{16K}}}\label{eq:expected_norm}
\end{align}
where the inequality follows from \Cref{lem:min_eig}. Recall the definition of $\nu(t-1)$ which is the right-hand side of \Cref{eq:expected_norm}. 

Now we consider the expected value of instantaneous regret on the good event $\cal E_g$. We define a new event $\cal E_{t,2}$ which holds when $A(t-1)=1$ and $E(t-1)=1$ where $\v\veps$-exploration is applied in round $t$. Notice that if $\v\veps$-exploration is not applied, we choose the job and matching server optimistically by choosing the maximum UCB value. Therefore,
\begin{align*}
    &\E\sqbr{\br{\mu\br{(x_t^*)\tp \theta_{k_t^*}^*} - \mu\br{x_t\tp \theta_{k_t}^*}}^2 }  \\
    &\quad = \E\sqbr{\E\sqbr{\br{\mu\br{(x_t^*)\tp \theta_{k_t^*}^*} - \mu\br{x_t\tp \theta_{k_t}^*}}^2 \given \cal F_l, E(t-1)}}  \tag{tower rule}\\
    &\quad = \P(\cal E_{t,2}) + \P(\cal E_{t,2}^c ) \E\sqbr{\br{\mu\br{(x_t^*)\tp \theta_{k_t^*}^*} - \mu\br{x_t\tp \theta_{k_t}^*}}^2 \given \cal F_l, E(t-1)} \\
    &\quad \leq  \lambda \v\veps + 1\times \E\sqbr{4 \beta_{t-1,k}^2 \|x_t\|_{V_{t-1,k_t}^{-1}}^2 \given \cal  F_l, E(t-1)},
\end{align*}
where the last inequality, on the good event $\cal E_g$, we can use \Cref{eq:ucb_bound}. Now, for the second term,
\begin{align*}
    \E\sqbr{4 \beta_{t-1,k}^2 \|x_t\|_{V_{t-1,k_t}^{-1}}^2 \given \cal  F_l, E(t-1)} & \leq 4 \beta_{T}^2 \E\sqbr{ \|x_t\|_{V_{t-1,k_t}^{-1}}^2 \given \cal  F_l, E(t-1)} \\
    & \leq 4\beta_T^2 \nu(t-1)
\end{align*}
where the first inequality holds since $\beta_T\geq \beta_{t-1}\geq \beta_{t-1,k}$, and the last inequality follows from \Cref{eq:expected_norm}. Substituting the result and taking $\min\{1,\cdot\}$ on both sides finishes the proof.

\subsection{Proof of \Cref{lem:queue_diff_exp}}

For simplicity,  in this section,  we assume \Cref{lem:prediction_error,lem:burnin} always hold ($\cal E_g$ always holds) and skip the conditional notation for it.

Recall the definition of the filtration
\begin{align*}
    \cal F_t := \sigma(\cal X_1, \v A(1),  \v D(1),E(1), \v A(2),\v D(2), E(2), \dots, \v A(t-1),  \v D(t-1))
\end{align*}
Note that $\cal F_t$ shorts of $E(t-1)$ therefore $Q(t)$ is still $\cal F_t$-measurable, while $x_t$ is not. 

\para{Bad rounds} Now we define $\cal F_t$-measurable bad rounds. Let us use notations
\begin{align*}
    \text{UCB}(x,k) = \mu\br{x\tp \what\theta_{t-1,k}} + \beta_{t-1,k}\|x\|_{V_{t-1,k}^{-1}}
\end{align*}
and
\begin{align*}
    x_t^{(+)},k_t^{(+)}:=\argmax_{x\in\cal X_t, k\in[K]}\text{UCB}(x,k).
\end{align*}
Then
\begin{align*}
    \cal B := \cbr{t\in[T],~ x_t^{(+)},k_t^{(+)}:~ \norm{x_t^{(+)}}_{V_{t-1,k_t^{(+)}}^{-1}} > \frac{\eps - 2\v\veps}{4\beta_{t-1,k_t^{(+)}}}}
\end{align*}
Under $\cal F_t$, note that $x_t,k_t$ can be different from $x_t^{(+)},k_t^{(+)}$ since the unseen $E(t-1)$ could choose random exploration. However, since $Q(t)$ is $\cal F_t$-measurable,  $\cal X_t$, and $x_t^{(+)},k_t^{(+)}$ also $\cal F_t$-measurable. We also introduce notation 
\begin{align*}
    \cal B(t) := \abs{\cbr{[t]\cap \cal B}}.
\end{align*}
Next, we establish that for good rounds, there is a negative drift as described in the following proposition.
\begin{proposition} \label{prop:drift}
    On the good event $\cal E_g$, for all $t\notin\cal B$,
    \begin{align*}
        \E\sqbr{A(t)-D(t)\given \cal F_t} \leq -\eps / 2
    \end{align*}
\end{proposition}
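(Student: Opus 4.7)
The plan is to prove $\E[D(t)\mid \cal F_t]\ge \lambda+\eps/2$ on $\cal E_g$ whenever $t\notin \cal B$ (which implicitly requires $\cal X_t\ne\emptyset$ so that the argmax defining $\cal B$ is meaningful); combining with $\E[A(t)\mid \cal F_t]=\lambda$ then yields the claim. I would first decompose $\E[D(t)\mid \cal F_t]$ by conditioning on $E(t-1)$, which is not $\cal F_t$-measurable, whereas $A(t-1)$ is. If $A(t-1)=0$, the $\v\veps$-exploration branch of \Cref{alg:1} cannot fire, so $(x_t,k_t)=(x_t^{(+)},k_t^{(+)})$ deterministically. If $A(t-1)=1$, with probability $\v\veps$ we enter $\v\veps$-exploration and match $x^{(t-1)}$ to a uniformly random server, and otherwise we take the UCB choice.

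On the UCB branch I would run an optimism-plus-slackness argument. By \Cref{ass:slackness}, any $x\in \cal X_t$ admits a server $k^*$ with $\mu(x\tp\theta_{k^*}^*)\ge \lambda+\eps$. On $\cal E_g$, \Cref{lem:prediction_error} gives $\text{UCB}(x,k^*)\ge \mu(x\tp\theta_{k^*}^*)\ge \lambda+\eps$, and optimism transfers this to $\text{UCB}(x_t^{(+)},k_t^{(+)})\ge \lambda+\eps$. A second use of \Cref{lem:prediction_error} at the argmax yields $\mu((x_t^{(+)})\tp\theta_{k_t^{(+)}}^*)\ge \text{UCB}(x_t^{(+)},k_t^{(+)})-2\beta_{t-1,k_t^{(+)}}\|x_t^{(+)}\|_{V_{t-1,k_t^{(+)}}^{-1}}$; since $t\notin \cal B$, the bonus is at most $(\eps-2\v\veps)/2$, so the UCB branch delivers $\mu((x_t^{(+)})\tp\theta_{k_t^{(+)}}^*)\ge \lambda+\eps/2+\v\veps$.

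Combining the branches, if $A(t-1)=0$ the UCB branch is the only one and $\E[D(t)\mid \cal F_t]\ge \lambda+\eps/2+\v\veps$; if $A(t-1)=1$, using the trivial bound $\E[D(t)\mid \text{explore},\cal F_t]\ge 0$ on the exploration branch gives
\[
\E[D(t)\mid \cal F_t]\ge (1-\v\veps)(\lambda+\eps/2+\v\veps)=\lambda+\eps/2+\v\veps(1-\lambda-\eps/2-\v\veps),
\]
and the residual is nonnegative because \Cref{ass:slackness} together with $\mu<1$ forces $\lambda+\eps\le 1$ while $\v\veps=T^{-1/2}$ is small. Either way, $\E[D(t)\mid \cal F_t]\ge \lambda+\eps/2$. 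The main obstacle is ensuring that the $\v\veps$-exploration loss does not cancel the UCB drift margin, and this is exactly what the $2\v\veps$ slack baked into the threshold $(\eps-2\v\veps)/(4\beta_{t-1,k_t^{(+)}})$ defining $\cal B$ is designed to absorb: the UCB branch overshoots $\lambda+\eps/2$ by a full $\v\veps$ buffer, which exactly accommodates the at most $\v\veps$-fraction of rounds lost to random-server exploration.
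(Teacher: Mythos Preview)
Your argument is correct and reaches the same conclusion, but it takes a different route from the paper. The paper introduces the coupled optimal departure $\wtilde D(t-1,t)$ and writes
\[
\E[A(t)-D(t)\mid\cal F_t]=\E[A(t)-\wtilde D(t-1,t)\mid\cal F_t]+\E[\wtilde D(t-1,t)-D(t)\mid\cal F_t],
\]
bounding the first term by $-\eps$ via \Cref{ass:slackness} and the second by $\v\veps+2\beta_{t-1,k_t^{(+)}}\|x_t^{(+)}\|_{V_{t-1,k_t^{(+)}}^{-1}}\le \v\veps+(\eps-2\v\veps)/2$, so that the $\v\veps$'s cancel \emph{exactly} and give $-\eps/2$. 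You instead lower-bound $\E[D(t)\mid\cal F_t]$ directly: slackness pushes the UCB of some pair above $\lambda+\eps$, optimism transfers this to the argmax, and the good-round bound pulls the true rate down by at most $(\eps-2\v\veps)/2$, yielding $\mu\bigl((x_t^{(+)})^\top\theta_{k_t^{(+)}}^*\bigr)\ge\lambda+\eps/2+\v\veps$. Your approach is more self-contained (no coupling artifact needed), but the price is that the $\v\veps$ loss from exploration no longer cancels on the nose: you need $(1-\v\veps)(\lambda+\eps/2+\v\veps)\ge\lambda+\eps/2$, i.e.\ $\lambda+\eps/2+\v\veps\le 1$, which in turn uses $\v\veps\le\eps/2$ and hence $T\ge 4/\eps^2$. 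That condition is already imposed in \Cref{thm:regret}, so nothing is lost downstream, but the paper's decomposition avoids it entirely at the proposition level.
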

\begin{proof}
Consider $Q(t-1,T)$ which corresponds to the policy-switching queue of following our policy up to round $t-1$ and switching to the optimal policy at round $t$. As explained in \Cref{sec:3}, we may applying the coupling argument on $Q(t)$ and $Q(t-1,T)$. As they follow the same policy up to time step $t-1$, two coupled queues for $Q(t)$ and $Q(t-1,T)$ have the same queue state in round $t$. Let $D(t)$ and $\wtilde D(t-1,t)$ denote the random job departures of the two coupled queues at round $t$ for $Q(t)$ and $Q(t-1,T)$. Note that
\begin{align*}
    &\E\sqbr{A(t)-D(t)\given \cal F_t} \nonumber \\
    &\quad = \E\sqbr{A(t)-\wtilde D(t-1,t) \given \cal F_t} + \E\sqbr{\wtilde D(t-1,t) -  D(t)\given \cal F_t} \nonumber \\
    &\quad \leq -\eps + \P\br{E(t-1)=1\given \cal F_t}   \E\sqbr{\wtilde D(t-1,t) -  D(t)\given \cal F_t,E(t-1)=1} \\
    &\quad\quad + \P\br{E(t-1)=0\given\cal F_t} \E\sqbr{\wtilde D(t-1,t) -  D(t)\given \cal F_t,E(t-1)=0} \nonumber \\
    &\quad \leq   -\eps +  \v\veps + \mu\br{(x_t^*)\tp \theta_{k_t^*}^*} - \mu\br{\br{x_t^{(+)}}\tp \theta_{k_t^{(+)}}^*}  \\
    &\quad \leq -\eps + \v\veps+ 2\beta_{t-1,k_t^{(+)}} \norm{x_t^{(+)}}_{V_{t-1,k_t^{(+)}}^{-1}} \\
    &\quad \leq -\eps/2 
\end{align*}
where the first inequality holds because $\wtilde D(t-1,t)$ is due to the optimal policy and \Cref{ass:slackness} holds, the second inequality holds because our algorithm takes $x_t^{(+)}, k_t^{(+)}$ under $E(t-1)=0$, the third inequality follows from \Cref{eq:ucb_bound} on the good even $\cal E_g$, and the last inequality follows from definition of $\cal B$.
\end{proof}

\begin{proposition} \label{prop:bad_round}
    On the good event $\cal E_g$, we have
    \begin{align*}
        \cal B (T) \leq \tau = \bigO{\frac{d\log(T)}{\sigma_0^4\eps^2}}
    \end{align*}
\end{proposition}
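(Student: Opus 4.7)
The plan is to observe that this proposition is essentially a direct corollary of \Cref{lem:burnin}, combined with the definition of $\tau$ in \Cref{eq:tau}. The set $\cal B$ records rounds in which the UCB-maximizing pair $(x_t^{(+)}, k_t^{(+)})$ has an uncertainty term exceeding the threshold $(\eps - 2\v\veps)/(4\beta_{t-1, k_t^{(+)}})$, while \Cref{lem:burnin} on the good event $\cal E_g$ gives a uniform upper bound on the uncertainty term of exactly this form for every $x \in \cal X$, $k \in [K]$, and $t \in [\tau+1, T]$.

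Concretely, I would argue as follows. Fix any $t \in [\tau+1, T]$. Since $x_t^{(+)} \in \cal X_t \subseteq \cal X$ and $k_t^{(+)} \in [K]$, applying \Cref{lem:burnin} to the pair $(x_t^{(+)}, k_t^{(+)})$ yields
\[
\|x_t^{(+)}\|_{V_{t-1, k_t^{(+)}}^{-1}} \leq \frac{\eps - 2\v\veps}{4 \beta_{t-1, k_t^{(+)}}},
\]
which is exactly the complement of the defining inequality for $t \in \cal B$. Hence $t \notin \cal B$ for every $t \in [\tau+1, T]$ on $\cal E_g$, so $\cal B \subseteq [1, \tau]$ and therefore $\cal B(T) = |\cal B \cap [T]| \leq \tau$. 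The big-$\cal O$ expression then follows directly from the closed-form choice of $\tau$ in \Cref{eq:tau}.

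There is no substantial obstacle here; the heavy lifting has already been done in the proof of \Cref{lem:burnin}, which establishes the minimum-eigenvalue lower bound on $V_{\tau, k}$ via the round-robin pure-exploration phase and \Cref{prop:li}. The proposition merely restates that conclusion in counting form so it can be fed into the subsequent analysis (in particular, the threshold round $T - \omega - 1$ in \Cref{lem:queue_diff_exp}).
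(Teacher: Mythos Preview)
Your proposal is correct and matches the paper's own proof, which simply states that the result is a direct consequence of \Cref{lem:burnin} and the definition of $\cal B$. You have merely spelled out the one-line observation in slightly more detail.
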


\begin{proof}
The result is a direct consequence of \Cref{lem:burnin} and the definition of $\cal B$.
\end{proof}

\para{Queue length difference under disagreement} 

In this paragraph, we give the upper bound for the expected queue length difference between two consecutive policy-switching queues. For notational convenience, we assume that $Q(t,T)$ for $t\in[T]$ are coupled based on our discussion in \Cref{sec:3}. Then we study the expected queue length difference $Q(t,T)-Q(t-1,T)$ given the disagreement event where $Q(t,T)$ fails and $Q(t-1,T)$ succeeds in round $t$: Recall the definition of $\cal F_t^+$,
\begin{align*}
    \cal F_t^+ := \cal F_t \lor \sigma\br{E(t-1),\v A(t)}
\end{align*}
and we define a new event of
\begin{align*}
    \cal E_q(t) := \{Q(t+1)\leq (T-t-1)\eps + 1\}
\end{align*}
\begin{lemma} \label{lem:queue_diff_hoeffding}
    For all $t\in[T-1]$, on the event $\cal E_q(t)$, we have
    \begin{align*}
        &\E \sqbr{\psi(t,T)\given \cal F_t^{+}, \v D^+(t) =0, \v D^-(t)=1} \leq 2 \exp \br{-\frac{(Q(t+1)-1-(T-t-1)\eps)^2}{8(T-t-1)}}.
    \end{align*}
\end{lemma}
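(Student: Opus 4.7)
The plan is to first invoke Corollary~\ref{coro:diff_bound} to reduce the expectation to a probability: since $\psi(t,T)\in\{0,1\}$ on the conditioning event $\{\v D^+(t)=0,\ \v D^-(t)=1\}$, we have $\E[\psi(t,T)\mid \cal F_t^+,\ \v D^+(t)=0,\ \v D^-(t)=1]=\P(\psi(t,T)=1\mid \cdots)$. The key structural observation is that the state-transition analysis in the proof of Corollary~\ref{coro:diff_bound} forces the coupled queues to remain in state $S_{i,1}^{+}$ for every $i\in[t+1,T]$ whenever $\psi(t,T)=1$. Since $\cal X_i^{+}=\cal X_i^{-}\cup\{x_i^{+}\}$ in $S_{i,1}^{+}$, this yields $Q^{+}(i)\ge 1$ for all such $i$; i.e., the $+$ queue never empties on $[t+1,T]$ whenever $\psi(t,T)=1$.

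On this never-empty event the reflection $[\cdot]^{+}$ is inactive for the $+$ queue, so telescoping the one-step recursion gives $Q^{+}(T)=Q(t+1)+\sum_{i=t+1}^{T-1}(A(i)-D^{+}(i))$, and combined with $Q^{+}(T)\ge 1$ (also implied by $\psi(t,T)=1$) we obtain the inclusion $\{\psi(t,T)=1\}\subseteq\{\sum_{i=t+1}^{T-1}(A(i)-D^{+}(i))\ge 1-Q(t+1)\}$ once the never-empty condition is built in. From round $t+1$ onward the optimal policy is in force, and whenever $Q^{+}(i)\ge 1$ it selects a job-server pair with departure rate at least $\lambda+\eps$ by Assumption~\ref{ass:slackness}. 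Hence the increments $M(i):=A(i)-D^{+}(i)\in\{-1,0,1\}$ satisfy $\E[M(i)\mid \cal F_i]\le -\eps$ on the never-empty event, and Azuma--Hoeffding on the centered martingale-difference sum (increments bounded by $2$ in absolute value) delivers the claimed exponential tail: the exponent $-(Q(t+1)-1-(T-t-1)\eps)^2/(8(T-t-1))$ appears once one notes that the required deviation $1-Q(t+1)+(T-t-1)\eps$ is nonnegative precisely on $\cal E_q(t)$. The factor of $2$ in the statement is slack, absorbed either by the two-sided form of Azuma or by the coupling step described below.

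The main obstacle is that the drift bound $\E[M(i)\mid \cal F_i]\le -\eps$ only holds when $Q^{+}(i)\ge 1$, yet this condition is itself part of the event being bounded, so plain Azuma cannot be applied without care. I would resolve this by introducing a virtual departure process $\tilde D^{+}(i)$ coupled through the same uniform $U_{i,2}$ used in the original coupling, constructed so that $\tilde D^{+}(i)$ always fires at rate at least $\lambda+\eps$: take the optimal job-server pair in $\cal X_i^{+}$ when the queue is nonempty, and a phantom best pair otherwise. Then $\tilde M(i):=A(i)-\tilde D^{+}(i)$ has drift at most $-\eps$ unconditionally and bounded increments, and $\tilde M(i)=M(i)$ on the never-empty event. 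Applying Azuma--Hoeffding to $\tilde M$ together with the inclusion $\{\psi(t,T)=1\}\subseteq\{Q^{+}\text{ is nonempty throughout }[t+1,T]\}$ then yields the claimed inequality without any conditioning issue.
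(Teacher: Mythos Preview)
Your proposal is correct and follows essentially the same approach as the paper: reduce to a probability via \Cref{coro:diff_bound}, use the state-transition analysis to show $\psi(t,T)=1$ forces $Q^{+}$ to stay nonempty on $[t+1,T]$, telescope the recursion to get the inclusion $\{\psi(t,T)=1\}\subseteq\{Q(t+1)+\sum_{i=t+1}^{T-1}(A(i)-D^{+}(i))\ge 1\}$, and finish with Azuma--Hoeffding using the $-\eps$ drift from \Cref{ass:slackness}. Your introduction of the virtual departure process $\tilde D^{+}$ is a slightly more careful treatment of the empty-queue boundary case than the paper, which simply asserts $\E[D^{+}(i)-A(i)\mid\cal F_i]\ge\eps$ for all $i$ (implicitly relying on the dummy job $x_0$ also satisfying slackness); otherwise the arguments are identical.
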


\begin{proof}
We start with the left-hand side of the desired inequality which is
\begin{align*}
    \E \sqbr{\psi(t,T)\given \cal F_t^{+}, \v D^+(t) =0, \v D^-(t)=1} 
\end{align*}
By \Cref{coro:diff_bound}, if $D^+(t) =0, D^-(t)=1$, the value of $\psi(t,T)$ is in $\{0, 1\}$. We can ignore the case when $\psi(t,T)=0$ considering the value of $\E \sqbr{\psi(t,T)\given \cal F_t^{+}, \v D^+(t) =0, \v D^-(t)=1} $. Now we only consider the case for $\psi(t,T)=1$. $\psi(t,T)=1$ means there is a disagreement event in round $t$ as $D^+(t)=0$ and $D^-(t)=1$, and the difference of queue length is preserved until round $T$. Notice that if the queue with an extra job $Q^+$ hits 0 queue length before round $T$, the queue length difference between $Q^+$ and $Q^-$ will always become 0 thereafter by our coupling process. This implies that the probability of $Q^+(t)$ or $Q(t,j)$ never hitting length $0$ for all $j\in[t+1,T]$ is larger than the probability that the queue length difference is preserved until round $T$. Then, we have,
\begin{align*}
    &\P\br{\psi(t,T)=1\given \cal F_t^{+}, \v D^+(t) =0, \v D^-(t)=1 } \\
    &\quad \leq \P\br{Q(t,j)>0, \forall j\in[t+1,T]\given \cal F_t^{+}, \v D^+(t) =0, \v D^-(t)=1 }
\end{align*}
Next, consider the event on the right-hand side, where $Q(t,j)>0$ for all $j\in[t+1,T]$. This means that by the queue dynamics, $Q(t,j+1) = Q(t,j) + D^+(j) - A(j)$ for all $j$. Therefore the event on the right-hand side implies that the cumulative net service at round $T$ does not exceed $Q(t+1)-1$, which is
\begin{align*}
    &\P\br{Q(t,j)>0, \forall j\in[t+1,T]\given \cal F_t^{+}, \v D^+(t) =0, \v D^-(t)=1 } \\
    &\quad \leq \P\br{Q(t+1) + \sum_{i=t+1}^{T-1} \br{A(i)-D^+(i)} \geq 1\given \cal F_t^{+}, \v D^+(t) =0, \v D^-(t)=1 }.
\end{align*}
Furthermore, to upper bound the right-hand side, we prepare to apply Azuma-Hoeffding inequality (\Cref{lem:azuma}). Define a sequence for $i\in[t+1,T-1]$ as $X_i := \E[D^+(i) - A(i)\mid \cal F_i] - (D^+(i)-A(i))$ and $Y_k := \sum_{i=t+1}^k X_i$. We have $\E[X_i\mid \cal F_i]=0$, and $|Y_k-Y_{k-1}|=|X_k|\leq 2$, which means, by applying Azuma-Hoeffding inequality with $a\leftarrow (T-t-1)\eps +1 - Q(t+1)$ (and the condition of $a>0$ holds since we consider on the event $\cal E_q(t)$), we have
\begin{align*}
    &\P\Bigg(\sum_{i=t+1}^{T-1}(\E[D^+(i)-A(i) \mid \cal F_i] - (D^+(i) - A(i))) \\
    &\qquad\qquad\qquad \geq (T-t-1)\eps + 1 - Q(t+1) \Bigg| \cal F_t^+,\v D^+(t)=0, \v D^-(t)=1\Bigg) \\
    &\quad \leq 2 \exp \br{-\frac{((t-i-1)\eps + 1 - Q(i,i+1))^2}{8(t-i-1)}}
\end{align*}
Notice that for $i\in[t+1,T-1]$, $D^+(i)$ is follows by the optimal policy, which means $\E[D^+(i)-A(i)\mid \cal F_i] \geq \eps$ by \Cref{ass:slackness}, which implies that
\begin{align*}
    &\P\Bigg(\sum_{i=t+1}^{T-1}(\E[D^+(i)-A(i) \mid \cal F_i] - (D^+(i) - A(i))) \\
    &\qquad\qquad\qquad \geq (T-t-1)\eps + 1 - Q(t+1) \Bigg| \cal F_t^+,\v D^+(t)=0, \v D^-(t)=1\Bigg) \\
    &\quad \geq \P\br{\sum_{i=t+1}^{T-1}( - (D^+(i) - A(i))) \geq 1 - Q(t+1) \given \cal F_t^+,\v D^+(t)=0, \v D^-(t)=1} \\
    &\quad = \P\br{Q(t+1) + \sum_{i=t+1}^{T-1} \br{A(i)-D^+(i)} \geq 1 \given \cal F_t^+,\v D^+(t)=0, \v D^-(t)=1}.
\end{align*}
Combining results, we have
\begin{align*}
    &\E \sqbr{\psi(t,T)\given \cal F_t^{+}, \v D^+(t) =0, \v D^-(t)=1} \leq 2 \exp \br{-\frac{(Q(t+1)-1-(T-t-1)\eps)^2}{8(T-t-1)}},
\end{align*}
finishing the proof.
\end{proof}

\para{Tail bound for $Q(t)$} \label{para:tail_q}

The result of \Cref{lem:queue_diff_hoeffding} shows that the queue length difference under the disagreement event can be upper-bounded by the exponential term of remaining rounds $(T-t-1)$ (which suits our goal to upper bound the queue length difference with the exponential ramp) and the queue length $Q(t+1)$. Therefore, in this paragraph, we control the value of $Q(t+1)$ by giving the exponential tail bound for it.

\begin{lemma} \label{lem:q_tail}
    Set $\eta\in(0,\eps/2]$, $\rho=e^{-\eta\eps/4}, \beta=e^\eta$. For some $a \geq \frac{1}{\eta} \log\br{\frac{\beta}{\rho}}$ and $b\geq 0$, we have
    \begin{align*}
        \P\br{Q(t) \geq a \cal B(t-1) + b,~\cal E_g} \leq \udef{\br{\rho^{t-1} \E\sqbr{e^{\eta Q(1)}} +  \frac{1}{1-\rho}}}{C_\rho} e^{-\eta b}
    \end{align*}
\end{lemma}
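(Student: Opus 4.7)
The plan is to build an exponential Lyapunov function $V(t):=e^{\eta Q(t)}$ for the queue length and multiply it by a compensator that charges each round in $\cal B$ by the factor $(\beta/\rho)^{-1}$, so the compensated process admits a uniform supermartingale-type drift on $\cal E_g$ regardless of whether $t\in\cal B$. Markov's inequality applied to the compensated process at time $t$ will then deliver the tail bound.

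For the one-step drift of $V(t)$, I would apply the conditional Hoeffding lemma to $A(t)-D(t)\in[-1,1]$, giving $\E[e^{\eta(A(t)-D(t))}\mid\cal F_t]\leq e^{\eta\E[A(t)-D(t)\mid\cal F_t]+\eta^2/2}$. On good rounds, \Cref{prop:drift} furnishes $\E[A(t)-D(t)\mid\cal F_t]\leq -\eps/2$ on $\cal E_g$, and the hypothesis $\eta\leq \eps/2$ reduces the right-hand side to $e^{-\eta\eps/4}=\rho$; on bad rounds the trivial bound $A(t)-D(t)\leq 1$ yields $\leq e^{\eta}=\beta$. The elementary inequality $e^{\eta[q+y]^+}\leq e^{\eta q}e^{\eta y}+1$ (valid for $q\geq 0$, $y\in\{-1,0,1\}$, and needed only to absorb the boundary case $Q(t)=0$) then gives, on $\cal E_g^{\leq t}$,
\[\E[V(t+1)\mid\cal F_t]\leq \rho V(t)+1 \quad (t\notin\cal B),\qquad \E[V(t+1)\mid\cal F_t]\leq \beta V(t)+1 \quad (t\in\cal B).\]

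I would next introduce $W(t):=V(t)(\beta/\rho)^{-\cal B(t-1)}$ and observe it is $\cal F_t$-measurable, since $\{s\in\cal B\}$ is $\cal F_s$-measurable (the quantities $x_s^{(+)},k_s^{(+)}$ defining $\cal B$ are $\cal F_s$-measurable, as noted in the text). For $t\notin\cal B$, $\cal B(t)=\cal B(t-1)$, so the good-round drift multiplied by $(\beta/\rho)^{-\cal B(t-1)}\leq 1$ yields $\E[W(t+1)\mid\cal F_t]\leq \rho W(t)+1$. For $t\in\cal B$, $\cal B(t)=\cal B(t-1)+1$ contributes an extra factor $\rho/\beta$ which converts $\beta V(t)$ exactly into $\rho W(t)$, again producing $\E[W(t+1)\mid\cal F_t]\leq \rho W(t)+1$. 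Using $\mathbf{1}_{\cal E_g^{\leq t+1}}\leq \mathbf{1}_{\cal E_g^{\leq t}}$ and the $\cal F_t$-measurability of $\cal E_g^{\leq t}$, this upgrades to $\E[W(t+1)\mathbf{1}_{\cal E_g^{\leq t+1}}\mid\cal F_t]\leq \mathbf{1}_{\cal E_g^{\leq t}}(\rho W(t)+1)$; setting $u_t:=\E[W(t)\mathbf{1}_{\cal E_g^{\leq t}}]$ gives $u_{t+1}\leq \rho u_t+1$, which iterates from $u_1\leq \E[e^{\eta Q(1)}]$ to $u_t\leq \rho^{t-1}\E[e^{\eta Q(1)}]+(1-\rho)^{-1}=C_\rho$. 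Finally, the hypothesis $a\geq \eta^{-1}\log(\beta/\rho)$ gives $e^{-\eta a}\leq \rho/\beta$ and hence $V(t)e^{-\eta a\cal B(t-1)}\leq W(t)$, so $\{Q(t)\geq a\cal B(t-1)+b\}\cap\cal E_g\subseteq\{W(t)\geq e^{\eta b}\}\cap\cal E_g^{\leq t}$ and Markov's inequality delivers $\P(Q(t)\geq a\cal B(t-1)+b,\cal E_g)\leq e^{-\eta b}u_t\leq C_\rho e^{-\eta b}$.

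The main obstacle is calibrating the exponential drift: one needs $\eta\leq \eps/2$ precisely so that the Hoeffding slack $e^{\eta^2/2}$ is absorbed, leaving exactly $\rho=e^{-\eta\eps/4}$, and the boundary case $Q(t)=0$ must be handled via $e^{\eta[q+y]^+}\leq e^{\eta q}e^{\eta y}+1$, which is the sole source of the additive $+1$ in the per-step drift and the origin of the $(1-\rho)^{-1}$ term in $C_\rho$. A related point is verifying that $a=\eta^{-1}\log(\beta/\rho)$ is exactly the threshold at which the bad-round transition factor $\beta$ collapses to the good-round contraction $\rho$ after compensation, so that any larger $a$ only tightens the Markov step without disturbing the supermartingale structure.
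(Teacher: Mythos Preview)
Your proposal is correct and follows essentially the same argument as the paper: the paper likewise bounds $\E[e^{\eta Q(t+1)}\mid\cal F_t]\le 1+e^{\eta Q(t)}(\rho\ind{t\notin\cal B}+\beta\ind{t\in\cal B})$ via the conditional Hoeffding lemma and \Cref{prop:drift}, then defines the same compensated process (there called $V(t)=(\beta/\rho)^{-\cal B(t-1)}e^{\eta Q(t)}$, your $W(t)$), establishes the recursion $\E[\ind{\cal E_g^{\le t+1}}V(t+1)]\le\rho\,\E[\ind{\cal E_g^{\le t}}V(t)]+1$, iterates it, and concludes by Markov's inequality using $a\ge\eta^{-1}\log(\beta/\rho)$. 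The only cosmetic difference is that the paper writes the boundary inequality as $e^{\eta[Q(t)+A(t)-D(t)]^+}\le 1+e^{\eta Q(t)}e^{\eta(A(t)-D(t))}$ by case analysis rather than stating it as $e^{\eta[q+y]^+}\le e^{\eta q}e^{\eta y}+1$.
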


\begin{remark}
    For simplicity, assume the queue starts with an empty state $Q(1)=0$. Set $\eta=\eps/2$, $\rho=e^{-\eps^2/8}$. Since $1-e^{-x} \geq x/2$ for $x\in[0,1]$, then we can simplify $C_\rho = 1 +16/\eps^2$.
\end{remark}

\begin{proof}
Recall the definition of $\cal E_g^t$, where we cut $\cal E_g$ to round $t$ such that \Cref{lem:prediction_error,lem:burnin} holds for all $t\in[T]$ to round $t$ to make it $\cal F_t$-measurable.

Now, we start with the one-step bound for the moment generating function (mgf) of $Q(t)$. For some $\eta\in(0,\eps/2]$,
\begin{align*}
    \E \sqbr{e^{\eta Q(t+1)} \given \cal F_t} &= \E \sqbr{e^{\eta [Q(t)+A(t)-D(t)]^+} \given \cal F_t} \\
    &\leq 1 + e^{\eta Q(t)} \E\sqbr{e^{\eta(A(t)-D(t))}\given \cal F_t}.
\end{align*}
where the inequality follows by considering both cases, where $Q(t)+A(t)-D(t) < 0$ gives $e^{\eta [Q(t)+A(t)-D(t)]^+}=1$ and $Q(t)+A(t)-D(t) \geq 0$ gives $e^{\eta [Q(t)+A(t)-D(t)]^+}=e^{\eta Q(t)+A(t)-D(t)}$. We split into 2 cases for $\E\sqbr{e^{\eta(A(t)-D(t))}\given \cal F_t}$.

\case 1: If $t \notin \cal B$, by \Cref{prop:drift}, on the good event $\cal E_g^t$ we have
\begin{align*}
    \E\sqbr{A(t)-D(t)|\cal F_t} \leq -\eps /2,
\end{align*}
Therefore
\begin{align*}
    \E\sqbr{e^{\eta(A(t)-D(t))}\given \cal F_t}  &= e^{-\eta\eps/2} \E\sqbr{e^{\eta(A(t)-D(t)) + \eta\eps/2} \given \cal F_t} \\
    &\leq e^{-\eta\eps/2} e^{\eta^2/2} \tag{$|A(t)-D(t))|\leq 1$, Hoeffding lemma} \\
    &\leq e^{-\eta\eps/4} \tag{$\eta\in(0,\eps/2]$} \\
    & = \rho
\end{align*}
\case 2: If $t\in \cal B$, we give a naive bound of
\begin{align*}
    \E\sqbr{e^{\eta(A(t)-D(t))}\given \cal F_t}  \leq e^\eta = \beta \tag{$|A(t)-D(t))|\leq 1$}.
\end{align*}
Substituting results for both cases yields
\begin{align}
    \E \sqbr{e^{\eta Q(t+1)} \given \cal F_t} \leq  1 + e^{\eta Q(t)} \br{\rho \ind{t\notin\cal B} + \beta \ind{t\in\cal B}}. \label{eq:one_step}
\end{align}


\begin{remark} \label{remark:1}
By the initial pure-exploration round in \Cref{alg:1}, rounds $\tau+1$ to $T$ are good (\Cref{prop:bad_round}). However, for rounds $1$ to $\tau$, whether a round is good or bad is not deterministic, which means $\ind{t\in\cal B}$ is $\cal F_t$-measurable. This makes it hard to find a relation between $e^{Q(t+1)}$ and $e^{Q(t)}$ from \Cref{eq:one_step}, because $e^{\eta Q(t)}$ and $\br{\rho \ind{t\notin\cal B} + \beta \ind{t\in\cal B}}$ on the right-hand side are dependent.

Readers might think of a simple workaround: treat all rounds up to $\tau$ as bad. This is viable since $\br{\rho \ind{t\notin\cal B} + \beta \ind{t\in\cal B}}\leq \beta$ and the number of bad rounds is upper bounded by \Cref{prop:bad_round}. Doing so yields a deterministic set of bad rounds. Thus $\ind{t\in\cal B}$ is not a random variable. Taking expectations on both sides of \Cref{eq:one_step},
\begin{align*}
    \E \sqbr{\E \sqbr{e^{\eta Q(t+1)} \given \cal F_t}} = \E \sqbr{e^{\eta Q(t+1)}} \leq  1 + \br{\rho \ind{t\notin\cal B} + \beta \ind{t\in\cal B}} \E \sqbr{e^{\eta Q(t)}} 
\end{align*}
resulting in a simple relation between $\E \sqbr{e^{\eta Q(t+1)}}$ and $\E \sqbr{e^{\eta Q(t)}}$.

However, we do not take this detour. Instead, we provide the proof assuming $\ind{t\in\cal B}$ is an $\cal F_t$-measurable random variable. The reason is to align with the proof of \Cref{alg:2}, where, in an adversarial context setting, $\ind{t\in\cal B'}$ is a $\cal G_t$-measurable random variable that cannot be known beforehand.
\end{remark}

Assume that $\ind{t\in \cal B}$ is a random variable which is $\cal F_{t}$ measurable. Therefore, directly applying expectation on both sides of \Cref{eq:one_step} will not separate $e^{\eta Q(t)}$ and $\br{\rho \ind{t\notin\cal B} + \beta \ind{t\in\cal B}}$ on the right-hand side, making it hard to apply the recursive relation between $\E \sqbr{e^{\eta Q(t+1)}}$ on the left side and $\E \sqbr{e^{\eta Q(t)}}$ on the right side. Therefore, we design a new weighted process $V(t)$ to avoid such problems. Define the weighted process as
\begin{align*}
    V(t) = \br{\frac{\beta}{\rho}}^{-\cal B(t-1)}e^{\eta Q(t)}.
\end{align*}

\begin{lemma} \label{lem:v_inter}
    We have
    \begin{align*}
        \E \sqbr{\ind{\cal E_g^t} V(t)} \leq \rho^{t-1} \E\sqbr{e^{\eta Q(1)}} +  \frac{1}{1-\rho}
    \end{align*}
\end{lemma}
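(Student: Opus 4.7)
The plan is to establish a one-step supermartingale-type recursion for $a_t := \E[\ind{\cal E_g^t} V(t)]$ of the form $a_{t+1} \leq 1 + \rho\, a_t$, and then unroll it. The key observation is that the factor $(\beta/\rho)^{-\cal B(t-1)}$ is designed precisely to cancel out the bad/good indicator structure in the one-step bound \Cref{eq:one_step}, rendering $V(t)$ a near-supermartingale on the good event even though $\ind{t\in\cal B}$ is random and $\cal F_t$-measurable (see \Cref{remark:1}).

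\textbf{Step 1: measurability and conditioning.} Since $\ind{s\in\cal B}$ is $\cal F_s$-measurable, the counter $\cal B(t)$ is $\cal F_t$-measurable, so $(\beta/\rho)^{-\cal B(t)}$ can be pulled outside $\E[\cdot \mid \cal F_t]$. Combining this with the one-step bound \Cref{eq:one_step} (valid on $\cal E_g^t$) gives, on $\cal E_g^t$,
\begin{align*}
\E[V(t+1) \mid \cal F_t] \;\leq\; (\beta/\rho)^{-\cal B(t)}\Bigl(1 + e^{\eta Q(t)}\bigl(\rho\,\ind{t\notin\cal B} + \beta\,\ind{t\in\cal B}\bigr)\Bigr).
\end{align*}

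\textbf{Step 2: the cancellation.} The heart of the argument is that, regardless of whether $t \in \cal B$ or $t \notin \cal B$,
\begin{align*}
(\beta/\rho)^{-\cal B(t)}\bigl(\rho\,\ind{t\notin\cal B}+\beta\,\ind{t\in\cal B}\bigr) \;=\; \rho\,(\beta/\rho)^{-\cal B(t-1)},
\end{align*}
because in the good case $\cal B(t)=\cal B(t-1)$ while in the bad case $\cal B(t)=\cal B(t-1)+1$ and the extra factor $\rho/\beta$ exactly cancels the $\beta$. Using this and the trivial bound $(\beta/\rho)^{-\cal B(t)} \leq 1$ (which follows from $\beta\geq \rho$ and $\cal B(t)\geq 0$), we obtain
\begin{align*}
\E[V(t+1) \mid \cal F_t]\,\ind{\cal E_g^t} \;\leq\; \ind{\cal E_g^t} + \rho\, V(t)\,\ind{\cal E_g^t}.
\end{align*}

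\textbf{Step 3: recursion and induction.} Taking full expectations and using that $\cal E_g^{t+1}\subseteq \cal E_g^t$ (so $\ind{\cal E_g^{t+1}}\leq \ind{\cal E_g^t}$) together with the tower property (since $\ind{\cal E_g^t}$ is $\cal F_t$-measurable),
\begin{align*}
a_{t+1} \;=\; \E\bigl[\ind{\cal E_g^{t+1}} V(t+1)\bigr] \;\leq\; \E\bigl[\ind{\cal E_g^t}\,\E[V(t+1)\mid \cal F_t]\bigr] \;\leq\; 1 + \rho\, a_t.
\end{align*}
Since $V(1) = e^{\eta Q(1)}$ (as $\cal B(0)=0$), we have $a_1 \leq \E[e^{\eta Q(1)}]$. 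Iterating the recursion yields
\begin{align*}
a_t \;\leq\; \rho^{\,t-1}\E[e^{\eta Q(1)}] + \sum_{k=0}^{t-2}\rho^k \;\leq\; \rho^{\,t-1}\E[e^{\eta Q(1)}] + \frac{1}{1-\rho},
\end{align*}
which is exactly the claimed bound.

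\textbf{Main obstacle.} The only subtle point is the interplay between the $\cal F_t$-measurability of the random indicator $\ind{t\in\cal B}$ and the weighting: a naive attempt to take expectations directly in \Cref{eq:one_step} fails because $e^{\eta Q(t)}$ and $\ind{t\in\cal B}$ are not independent. Introducing the weight $(\beta/\rho)^{-\cal B(t-1)}$ is what makes the cancellation in Step 2 work, turning a data-dependent mixture into a deterministic contraction factor $\rho$. The rest is a straightforward geometric-series induction.
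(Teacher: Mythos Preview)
Your proof is correct and follows essentially the same approach as the paper: you pull the $\cal F_t$-measurable weight $(\beta/\rho)^{-\cal B(t)}$ outside the conditional expectation, apply the one-step bound \Cref{eq:one_step} on $\cal E_g^t$, use the identity $(\beta/\rho)^{-\ind{t\in\cal B}}(\rho\,\ind{t\notin\cal B}+\beta\,\ind{t\in\cal B})=\rho$ to collapse the mixture to a deterministic contraction, bound the additive term by $1$ via $\beta/\rho\ge 1$, and then unroll the recursion $a_{t+1}\le 1+\rho a_t$. The only cosmetic difference is that the paper replaces $\ind{\cal E_g^{t+1}}$ by $\ind{\cal E_g^{t}}$ inside the conditional expectation before expanding, whereas you do it after taking full expectations; both orderings are valid since $\ind{\cal E_g^t}$ is $\cal F_t$-measurable.
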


\begin{proof}
Recall that $x_t^{(+)}$, $k_t^{(+)}$, and $\ind{t\in \cal B}$ are $\cal F_t$-measurable. Therefore,
\begin{align*}
    &\E \sqbr{\ind{\cal E_g^{t+1}} V(t+1)\given \cal F_t} \\
    &\quad \leq \E \sqbr{\ind{\cal E_g^{t}} V(t+1)\given \cal F_t} \\
    &\quad = \ind{\cal E_g^{t}} \E \sqbr{\br{\frac{\beta}{\rho}}^{-\cal B(t)}e^{\eta Q(t+1)} \given \cal F_t} \\
    &\quad = \ind{\cal E_g^{t}} \br{\frac{\beta}{\rho}}^{-\cal B(t-1)} \br{\frac{\beta}{\rho}}^{ -\ind{t\in \cal B}} \E \sqbr{e^{\eta Q(t+1)} \given \cal F_t} \\
    &\quad \leq \ind{\cal E_g^{t}} \br{\frac{\beta}{\rho}}^{-\cal B(t-1)} \br{\frac{\beta}{\rho}}^{ -\ind{t\in \cal B}} \br{1 + e^{\eta Q(t)} \br{\rho \ind{t\notin \cal B} + \beta \ind{t\in\cal B}}} \tag{\Cref{eq:one_step}} \\
\end{align*}
Since
\begin{align*}
    \br{\frac{\beta}{\rho}}^{ -\ind{t\in \cal B}} \br{\rho \ind{t\notin\cal B} + \beta \ind{t\in\cal B}} = \rho,
\end{align*}
we have
\begin{align*}
    \E \sqbr{\ind{\cal E_g^{t}} V(t+1)\given \cal F_t} &\leq \ind{\cal E_g^{t}} \rho \br{\frac{\beta}{\rho}}^{-\cal B(t-1)}  e^{\eta Q(t)} + \ind{\cal E_g^{t}} \br{\frac{\beta}{\rho}}^{-\cal B(t)} \\
    &\leq \ind{\cal E_g^{t}} \rho V(t) + 1 \tag{$\beta/\rho \geq 1$}
\end{align*}
Taking the expectation on both sides and applying the tower rule on the left-hand side gives
\begin{align*}
    \E \sqbr{\ind{\cal E_g^{t+1}} V(t+1)} \leq \rho \E \sqbr{\ind{\cal E_g^{t}} V(t)} + 1
\end{align*}
Solving a linear recursion, we have
\begin{align*}
    \E \sqbr{\ind{\cal E_g^{t}} V(t)} \leq \rho^{t-1} \E\sqbr{V(1)} + \sum_{i=0}^{t-2} \rho^i \leq \rho^{t-1} \E\sqbr{e^{\eta Q(1)}} + \sum_{i=0}^\infty \rho^i \leq \rho^{t-1} \E\sqbr{e^{\eta Q(1)}} +  \frac{1}{1-\rho},
\end{align*}
finishing the proof.
\end{proof}

Based on the results, we have
\begin{align*}
    \P\br{Q(t) \geq a \cal B(t-1) + b,~\cal E_g} &= \P\br{Q(t) \geq a \cal B(t-1) + b,~\cal E_g^t} \\
    &= \P \br{e^{\eta Q(t)} \geq e^{\eta\br{a \cal B(t-1) + b}},~\cal E_g^t} \\
    & = \P\br{V(t) \geq \br{\frac{\beta}{\rho}}^{-\cal B(t-1)} e^{\eta \br{a \cal B(t-1) + b}},~\cal E_g^t} \\
    &\leq \P\br{ V(t) \geq  e^{\eta b},~\cal E_g^t}
\end{align*}
where the last inequality follows from by choosing $a \geq \frac{1}{\eta} \log\br{\frac{\beta}{\rho}}$. Under the condition of $b\geq0$, applying Markov inequality to the right-hand side yields
\begin{align*}
     \P\br{ V(t) \geq  e^{\eta b},~\cal E_g^t} \leq \frac{\E[\ind{\cal E_g^t}V(t)]}{e^{\eta b}} \leq \br{ \rho^{t-1} \E\sqbr{e^{\eta Q(1)}} +  \frac{1}{1-\rho}} e^{-\eta b} \tag{\Cref{lem:v_inter}}.
\end{align*}
Substituting the result back finishes the proof.
\end{proof}

\para{Proof of \Cref{lem:queue_diff_exp}}

Now we are ready to start the proof. Recall the definition of $\cal F_t^+$, $\wtilde\psi(t,T)$ and the result of \Cref{lem:queue_diff_hoeffding}, which is, on the event $\cal E_q(t)$
\begin{align*}
    &\E \sqbr{\psi(t,T)\given \cal F_t^+, \v D^+(t) =0, \v D^-(t)=1} \leq 2 \exp \br{-\frac{(Q(t+1)-1-(T-t-1)\eps)^2}{8(T-t-1)}}.
\end{align*}
Our goal is to get an exponential decay of this value in terms of the number of remaining rounds $(T-t-1)$ and avoid the dependence of $Q(t+1)$. Therefore, we split into 2 cases where $Q(t+1) \leq \frac{\eps (T-t-1)}{2}$ and $Q(t+1) > \frac{\eps (T-t-1)}{2}$ which gives
\begin{align*}
    \ind{\cal E_g} \wtilde\psi(t,T) &= \ind{\cal E_g} \E \sqbr{\psi(t,T)\given \cal F_t^+, D^+(t) =0, D^-(t)=1} \\
    &\leq \uterm{\ind{Q(t+1) \leq \frac{\eps(T-t-1)}{2}} \E \sqbr{\psi(t,T)\given \cal F_t^+, \v D^+(t) =0, \v D^-(t)=1}}{1} \\
    &\quad + \ind{Q(t+1) > \frac{\eps(T-t-1)}{2},~\cal E_g} \E \sqbr{\psi(t,T)\given \cal F_t^+, \v D^+(t) =0, \v D^-(t)=1}
\end{align*}
For \term 1,
\begin{align*}
    &\E \sqbr{\psi(t,T)\given \cal F_t^+, \v D^+(t) =0, \v D^-(t)=1} \\
    &\quad = \Bigg( \ind{Q(t+1) \leq \eps(T-t-1) + 1} \E \sqbr{\psi(t,T)\given \cal F_t^+, \v D^+(t) =0, \v D^-(t)=1} \\
    &\quad\quad + \ind{Q(t+1) > \eps(T-t-1) + 1} \E \sqbr{\psi(t,T)\given \cal F_t^+, \v D^+(t) =0, \v D^-(t)=1}  \Bigg)
\end{align*}
Multiplying $\ind{Q(t+1) \leq \frac{\eps(T-t-1)}{2}}$ on both sides yields
\begin{align*}
    &\ind{Q(t+1) \leq \frac{\eps(T-t-1)}{2}}   \E \sqbr{\psi(t,T)\given \cal F_t^+, D^+(t) =0, D^-(t)=1} \\
    &\quad \leq \ind{Q(t+1) \leq \frac{\eps(T-t-1)}{2}} \E \sqbr{\psi(t,T)\given \cal F_t^+, D^+(t) =0, D^-(t)=1}
\end{align*}
We can see that the right-hand side implies $\cal E_q(t)$, which means $\ind{Q(t+1)\leq (T-t-1)\eps + 1}$. Therefore we can directly apply the result of \Cref{lem:queue_diff_hoeffding}, which yields
\begin{align*}
    \term 1 &\leq \ind{Q(t+1)\leq \frac{\eps(T-t-1)}{2}} 2 \exp \br{-\frac{(Q(t+1)-1-(T-t-1)\eps)^2}{8(T-t-1)}} \\
    &\leq \exp \br{-\frac{(\frac{1}{2}(T-t-1)\eps + 1)^2}{8(T-t-1)}} \nonumber \\
    &\leq 2\exp\br{-\frac{\eps^2(T-t-1)}{32}} 
\end{align*}
Substituting \term 1 back, taking the expectation on both sides, and applying the tower rule yields
\begin{align}
    \E \sqbr{\ind{\cal E_g} \wtilde \psi(t,T)} \leq 2 \exp\br{-\frac{\eps^2(T-t-1)}{32}} + \uterm{\P \br{Q(t+1) > \frac{\eps(T-t-1)}{2},~\cal E_g}}{2}. \label{eq:i1}
\end{align}
Next, we are going to apply the exponential tail bound of \Cref{lem:q_tail} to \term 2. Recall the lemma
\begin{align*}
    \P\br{Q(t) \geq a \cal B(t-1) + b,~\cal E_g} \leq C_\rho e^{-\eta b}
\end{align*}
and the condition of $a,b$ which are
\begin{align*}
    a \geq \frac{1}{\eta} \log\br{\frac{\beta}{\rho}}, \quad b\geq 0
\end{align*}
Set $a=2$ then
\begin{align*}
    a > 1+\eps/4 = \frac{1}{\eta}(\eta - (-\eta\eps/4)) = \frac{1}{\eta} \br{\log\br{e^{\eta}} - \log\br{e^{-\frac{\eta\eps}{4}}}} = \frac{1}{\eta}\log \br{\frac{\beta}{\rho}}.
\end{align*}
We need the condition of $b\geq0$. In order to control this, we set a threshold value $\omega$ of the remaining rounds as
\begin{align*}
    \omega : = \frac{2a\tau}{\eps} \geq \frac{2a\cal B(T)}{\eps}
\end{align*}
and split into 2 cases.

\case 1: If $(T-t-1) < \omega$, we do not apply \Cref{lem:q_tail}, since it means that there are not many rounds remaining to reduce the queue length difference by emptying the queue with an extra job. Therefore, we give a naive bound of 
\begin{align*}
    \E \sqbr{\ind{\cal E_g} \wtilde \psi(t,T)} \leq 1.
\end{align*}

\case 2: If $(T-t-1) \geq \omega$, it means that $(T-t-1) \geq 2 a \cal B(T)/\eps$, then
\begin{align*}
    \term 2 &= \P\br{Q(t+1) \geq a \cal B(t) + \br{\frac{\eps(T-t-1)}{2} - a \cal B(t)},~\cal E_g} \\
    &\leq \P\br{Q(t+1) \geq a \cal B(t) + \br{\frac{\eps(T-t-1)}{2} - a \cal B(T)},~\cal E_g} \\
    &\leq \P\br{Q(t+1) \geq a \cal B(t) + \underbrace{\br{\frac{\eps(T-t-1-\omega)}{2}}}_{\geq 0},~\cal E_g} \\
    &\leq C_\rho e^{-\eta\br{\frac{\eps(T-t-1-\omega)}{2}}} \tag{\Cref{lem:q_tail}, $b\geq 0$}
\end{align*}
Substituting this to \Cref{eq:i1} and taking $\min\{\cdot,1\}$ on both sides (by \Cref{lem:diff_bound}) gives
\begin{align*}
     \E \sqbr{\ind{\cal E_g} \wtilde \psi(t,T)} &\leq \min\cbr{1,~2\exp\br{-\frac{\eps^2(T-t-1)}{32}} + C_\rho \exp\br{-\frac{\eta\eps(T-t-1-\omega)}{2}}} \\
     &\leq \min \cbr{1,~2 C_\rho \exp \br{- \frac{\eps^2}{32} \br{T-t-1 - \omega}}} \tag{$C_\rho \geq 2$, $\eta=\eps/2$}
\end{align*}
Combining the results of both cases yields the desired result.

\section{Proofs for the lemmas in \Cref{sec:5}} \label{sec:5_proof}

In this section, we provide our proofs of \Cref{prop:bad_round_2,lem:queue_diff_exp_2}. The proof of \Cref{thm:regret_2} is deferred to \Cref{sec:regret_2}.

For the adversarial setting, we switch the definition of the filtration to exclude $E(t)$ as
\begin{align*}
    \cal G_t :=\sigma(\cal X_1,\v A(1),\v D(1),\v A(2),\v D(2),\dots, \v A(t-1),\v D(t-1))
\end{align*}
Note that $Q(t)$ and $x_t$ are $\cal G_t$-measurable. Now we define $\cal G_t$-measurable bad rounds and the notation as
\begin{align*}
    \cal B' := \cbr{t\in[T]: \|x_t\|_{V_{t-1,k_t}^{-1}} > \frac{\eps}{4 \beta_{t-1,k_t}}}, \quad \cal B'(t) := \abs{\cbr{[t]\cap\cal B'}}.
\end{align*}

We first introduce the negative drift for the good rounds
\begin{proposition} \label{prop:drift_2}
    On the good event $\cal E_g'$, for all $t\notin\cal B'$,
    \begin{align*}
        \E\sqbr{A(t) - D(t)\given \cal G_t} \leq -\eps /2 
    \end{align*}
\end{proposition}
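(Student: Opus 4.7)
The plan is to mirror the proof of \Cref{prop:drift} (the analogous drift bound in the stochastic setting), exploiting two simplifications afforded by the adversarial algorithm: CQB-Opt always selects $(x_t,k_t)$ via the UCB rule with no $\v\veps$-exploration branch, and the bad-round set $\cal B'$ is defined directly in terms of the executed $(x_t,k_t)$ rather than a hypothetical UCB maximizer $(x_t^{(+)},k_t^{(+)})$. As a result, the whole argument can be carried out under the coarser filtration $\cal G_t$ without the $E(t-1)$-conditioning bookkeeping that appeared in the stochastic case.

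First I would reintroduce the coupling between $Q(t)$ and the policy-switching queue $Q(t-1,T)$, which follows our policy through round $t-1$ and then switches to the optimal policy at round $t$. Since the two coupled queues share the same state $\cal X_t$ entering round $t$, \Cref{ass:slackness} applied to any optimal pair $(x_t^*,k_t^*)\in\argmax_{x\in\cal X_t,k\in[K]}\mu(x\tp\theta_k^*)$ yields $\E[\wtilde D(t-1,t)-A(t)\mid\cal G_t]\geq\eps$, where $\wtilde D(t-1,t)$ denotes the coupled optimal departure. Decomposing
\[
\E[A(t)-D(t)\mid\cal G_t] = \E[A(t)-\wtilde D(t-1,t)\mid\cal G_t] + \E[\wtilde D(t-1,t)-D(t)\mid\cal G_t]
\]
then reduces the task to showing that the second (instantaneous-regret) term is at most $\eps/2$.

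The second step is the standard UCB optimism chain. Since $(x_t,k_t)$ maximizes $\mu(x\tp\what\theta_{t-1,k})+\beta_{t-1,k}\|x\|_{V_{t-1,k}^{-1}}$ over $\cal X_t\times[K]$, and since on $\cal E_g'$ the prediction-error bound of \Cref{lem:prediction_error} ensures that this UCB upper-bounds $\mu(x\tp\theta_k^*)$ at every $(x,k)$, evaluating optimism at $(x_t^*,k_t^*)$ on one side and at $(x_t,k_t)$ on the other yields
\[
\mu((x_t^*)\tp\theta_{k_t^*}^*) - \mu(x_t\tp\theta_{k_t}^*) \leq 2\beta_{t-1,k_t}\,\|x_t\|_{V_{t-1,k_t}^{-1}}.
\]
For $t\notin\cal B'$ the definition of the bad-round set immediately gives $\beta_{t-1,k_t}\|x_t\|_{V_{t-1,k_t}^{-1}}\leq\eps/4$, so the instantaneous-regret term is bounded by $\eps/2$, and combining with the $-\eps$ slackness contribution delivers the claimed $-\eps/2$ drift.

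I do not anticipate a serious obstacle; the only mildly delicate point is checking that the optimism comparison remains valid under $\cal G_t$-conditioning rather than the richer $\cal F_t^+$ used in the stochastic case. This works cleanly because CQB-Opt has no exploration coin, so $(x_t,k_t)$ is a deterministic function of $\cal G_t$, and the probability-weighted split over $\{E(t-1)=0,1\}$ that appeared in the proof of \Cref{prop:drift} simply does not arise here.
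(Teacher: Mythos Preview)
Your proposal is correct and follows essentially the same route as the paper's proof: decompose via the coupled optimal departure $\wtilde D(t-1,t)$, use \Cref{ass:slackness} for the $-\eps$ term, apply the UCB optimism bound on $\cal E_g'$ to get $\mu((x_t^*)\tp\theta_{k_t^*}^*)-\mu(x_t\tp\theta_{k_t}^*)\leq 2\beta_{t-1,k_t}\|x_t\|_{V_{t-1,k_t}^{-1}}$, and invoke $t\notin\cal B'$ to cap this at $\eps/2$. Your observation that the absence of the exploration coin makes $(x_t,k_t)$ $\cal G_t$-measurable and eliminates the $E(t-1)$ case split is exactly the simplification the paper exploits.
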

\begin{proof}
We show there is a negative drift in good rounds $t\notin \cal B'$: Recall the definition of $D(t)$ and $\wtilde D(t-1,t)$ in the proof of \Cref{prop:drift}, which are coupled to each other. Then we have
\begin{align*}
    \E\sqbr{A(t) - D(t)\given \cal G_t} &= \E\sqbr{A(t) - \wtilde D(t-1,t)\given \cal G_t} + \E\sqbr{\wtilde D(t-1,t) - D(t)\given \cal G_t} \\
    &\leq -\eps + \mu\br{(x_t^*)\tp \theta_{k_t^*}^*} - \mu\br{x_t\tp \theta_{k_t}^*} \tag{\Cref{ass:slackness}, coupling process} \\
    &\leq -\eps + 2 \beta_{t-1,k_t}\|x_t\|_{V_{t-1,k_t}^{-1}} \tag{$\cal E_g'$, \Cref{eq:ucb_bound}} \\
    &\leq -\eps /2 \tag{$t\notin \cal B'$},
\end{align*}
as desired.
\end{proof}

\subsection{Proof of \Cref{prop:bad_round_2}}

We show the upper bound of the number of bad rounds $\cal B'$ as follows: 
\begin{align*}
    |\cal B'| \br{\frac{\eps}{4\beta_T}}^2 &\leq \sum_{t\in \cal B'} \min\cbr{1, \br{\frac{\eps}{4\beta_{t-1,k_t}}}^2} \tag{$\beta_{t}\geq \beta_{t,k}$, $(\beta_t)_t$ increasing in $t$, $4\beta_T\geq 1$, $ 1>\eps>0$}\\
    &\leq \sum_{t=1}^T \min\cbr{1, \|x_t\|_{V_{t-1,k_t}^{-1}}^2} \tag{$t\in\cal B'$} \\
    &\leq 2Kd\log(1+T/(dK\kappa \lambda_0)). \tag{\Cref{lem:yadkori_new}}
\end{align*}
Moving the term $\br{\frac{\eps}{4\beta_T}}^2$ on the left-hand side to the right gives the desired result.

\subsection{Proof of \Cref{lem:queue_diff_exp_2}}

The proof follows the same argument as \Cref{lem:queue_diff_exp}. First, define a filtration as
\begin{align*}
    \cal G_t^+ := \cal G_t \lor \sigma \br{\v A(t)}.
\end{align*}
Now, consider the queue length difference under the disagreement on the event $\cal E_q'(t)$ where $\cal E_q'(t) := \cbr{Q(t+1)\leq (T-t-1)\eps + 1}$, defined as
\begin{align*}
    \E\sqbr{\psi(t,T)\given \cal G_t^+, \v D^+(t)=0, \v D^-(t)=1}
\end{align*}
Recall the previous definition of the filtration and the corresponding queue length difference under the disagreement for \Cref{alg:1}, which are
\begin{align*}
    &\cal F_t^+ :=  \sigma \br{\cal F_t \cup \cbr{E(t-1), \v A(t)}}, \\
    &\E\sqbr{\psi(t,T)\given \cal F_t^+, \v D^+(t) =0, \v D^-(t)=1}
\end{align*}
The expected value is conditioned on $\cal F_t^+$, $\v D^+(t)=0,\v D^-(t)=1$, which means the value of the conditional expectation is only affected by the situation up to round $t$, which is represented by $Q(t+1)$. Also, from round $t+1$ to $T$, $\psi(t,T)$ follows the optimal policy and this is irrelevant to whether we followed \Cref{alg:1} or \Cref{alg:2} until round $t$. Thereby,  we can simply follow the same proof procedure and reuse the result of \Cref{lem:queue_diff_hoeffding}, which gives, on the event $\cal E_q'(t)$, 
\begin{align}
    \E \sqbr{\psi(t,T)\given \cal G_t^+,\v D^+(t) =0, \v D^-(t)=1}  \leq 2\exp \br{-\frac{(Q(t+1)-1-(T-t-1)\eps)^2}{8(T-t-1)}}. \label{eq:queue_diff_hoeffding_2}
\end{align}
Next, we consider the tail bound for $Q(t)$. Recall that $Q(t)$ and $\cal B'(t)$ are $\cal G_t$-measurable random variables. As noted in \Cref{remark:1}, although in \Cref{alg:1}, $\ind{t\in\cal B}$ is deterministic by the pure-exploration round, we develop the proof as if it is $\cal F_t$-measurable random variable. In \Cref{alg:2}, we can see that $\ind{t\in\cal B'}$ is $\cal G_t$-measurable. Also in \Cref{prop:drift_2}, we show the negative drift in good rounds conditioned on $\cal G_t$, which is the same as \Cref{prop:drift} of \Cref{alg:1}. Therefore, we can exactly follow the proof of \Cref{lem:q_tail}, simply replacing $\cal B$ with $\cal B'$, $\cal F_t$ with $\cal G_t$, and $\cal E_g$ with $\cal E_g'$, and get the same result of 
\begin{align}
    \P\br{Q(t) \geq a \cal B'(t-1)) + b,~\cal E_g'} \leq C_\rho e^{-\eta b} \label{eq:q_tail_2}
\end{align}
Now we are ready to start the proof. For the queue length $Q(t+1)$, we split into 2 cases where $Q(t+1)\leq \frac{\eps(T-t-1)}{2}$ and $Q(t+1)> \frac{\eps(T-t-1)}{2}$ and proceed as
\begin{align*}
    &\ind{\cal E_g'} \E \sqbr{\psi(t,T)\given \cal G_t^+, \v D^+(t) =0, \v D^-(t)=1} \\
    &\quad = \ind{Q(t+1) \leq \frac{\eps(T-t-1)}{2},~\cal E_g'} \E \sqbr{\psi(t,T)\given \cal G_t^+, \v D^+(t) =0, \v D^-(t)=1} \\
    &\quad\quad + \ind{Q(t+1) > \frac{\eps(T-t-1)}{2},~\cal E_g'} \E \sqbr{\psi(t,T) \given \cal G_t^+, \v D^+(t) =0, \v D^-(t)=1} \\
    &\quad \leq  2\exp \br{-\frac{\eps^2(T-t-1)}{32}} + \ind{Q(t+1) > \frac{\eps(T-t-1)}{2},~\cal E_g'},
\end{align*}
where for the inequality, for the first term, the indicator function $\ind{Q(t+1) \leq \frac{\eps(T-t-1)}{2},~\cal E_g'}$ implies that $\cal E_q'(t)$ holds, therefore we can apply \Cref{eq:queue_diff_hoeffding_2}. After that, replacing $Q(t+1)$ with $\frac{\eps(T-t-1)}{2}$ gives the desired inequality. 

Taking the expectation on both sides
\begin{align}
    \E \sqbr{\ind{\cal E_g'} \wtilde \psi' (t,T)} \leq 2 \exp \br{-\frac{\eps^2(T-t-1)}{32}} + \P\br{Q(t+1) > \frac{\eps(T-t-1)}{2},~\cal E_g'}  \label{eq:i4}
\end{align}
Set $a=2$ and a threshold value $\omega$ of
\begin{align*}
    \omega' := \frac{2a}{\eps} \br{2Kd\log(1+T/(dK\kappa \lambda_0)) \br{\frac{4\beta_{T}}{\eps}}^2} \geq \frac{2a \cal B'(T)}{\eps} \tag{\Cref{prop:bad_round_2}}
\end{align*}
Now for the second term of \Cref{eq:i4}, we split into 2 cases.

\case{1}: If $(T-t-1)<\omega'$, we give a naive bound as
\begin{align*}
    \E \sqbr{\wtilde \psi' (t,T)} \leq 1.
\end{align*}
\case{2}: If $(T-t-1)\geq\omega'$, then
\begin{align*}
    \P\br{Q(t+1) > \frac{\eps(T-t-1)}{2},~\cal E_g'} &= \P\br{Q(t+1) \geq a \cal B'(t) + \br{\frac{\eps(T-t-1)}{2} - a \cal B'(t)},~\cal E_g'} \\
    &\leq \P\br{Q(t+1) \geq a \cal B'(t) + \br{\frac{\eps(T-t-1)}{2} - a \cal B'(T)},~\cal E_g'}  \\
    &\leq \P\br{Q(t+1) \geq a \cal B'(t) + \underbrace{\br{\frac{\eps(T-t-1-\omega')}{2}}}_{\geq 0},~\cal E_g'} \\
    &\leq C_\rho e^{-\eta\br{\frac{\eps(T-t-1-\omega')}{2}}} \tag{\Cref{eq:q_tail_2}, $a=2,b\geq 0$}
\end{align*}

Substituting the result to \Cref{eq:i4} and taking $\min\{\cdot ,1\}$ on both sides yields
\begin{align*}
    &\E \sqbr{\ind{\cal E_g'} \wtilde\psi'(t,T)} \\
    &\quad \leq \min\cbr{1,~2\exp \br{-\frac{\eps^2(T-t-1)}{32}} + C_\rho \exp\br{-\frac{\eta\eps(T-t-1-\omega')}{2}}} \\
    &\quad \leq \min\cbr{1,~ 2 C_\rho \exp\br{-\frac{\eps^2}{32}(T-t-1-\omega')}} \tag{$(C_\rho\geq 1, \eta=\eps/2$}
\end{align*}
finishing the proof.

\section{Regret analyses}

\subsection{Proof of \Cref{thm:regret}} \label{sec:regret_1}

Recall the definition of the good event $\cal E_g$ where both \Cref{lem:burnin,lem:prediction_error} hold. Then, by the union bound, $\P(\cal E_g)\geq 1-3\delta$. By the regret decomposition result given in  \Cref{lem:decomposition},
\begin{align*}
    R_T \leq \sum_{t=1}^{T- 1} \underbrace{\sqrt{\E\sqbr{\br{\mu\br{(x_t^*)\tp \theta_{k_t^*}^*} - \mu\br{x_t\tp \theta_{k_t}^*}}^2}}}_{=:m_t} \underbrace{\sqrt{ \E \sqbr{\wtilde \psi(t,T)}}}_{=:\delta_t}.
\end{align*}
Let us consider $m_t$. For $t\in[1,\tau]$, we give a naive bound of $1$. For $t\in[\tau+1, T]$,
\begin{align*}
    m_t^2 &\leq \P(\cal E_g^c) +  \E\sqbr{\ind{\cal E_g} \br{\mu\br{(x_t^*)\tp \theta_{k_t^*}^*} - \mu\br{x_t\tp \theta_{k_t}^*}}^2 } \\
    &\leq 3\delta + \min\{1, \lambda\v\eps + 4\beta_T^2 \nu(t-1)\}, 
\end{align*}
where for the last inequality, we can apply \Cref{lem:nonincreasing} on the good event $\cal E_g$. Applying square root and $\min\{1,\cdot\}$ on both sides 
\begin{align*}
    m_t \leq \min\cbr{1,~ \sqrt{3\delta} + \sqrt{\lambda\v\veps + 4\beta_T^2 \nu(t-1)}}
\end{align*}
Combining these, we obtain
\begin{align*}
    m_t \leq M_t := \begin{cases}
        1 & \text{if } t\leq \tau \\
        \min\cbr{1,~ \sqrt{3\delta} + \sqrt{\lambda\v\veps + 4\beta_T^2 \nu(t-1)}} & \text{if } t> \tau.
    \end{cases} 
\end{align*}
where $\{M_t\}_{t\in[T]}$ gives rise to a nonincreasing sequence.

Next we consider term $\delta_t$. For $t > T - \omega-1$, we give a naive bound of $1$. For $t\leq T - \omega - 1$, we have
\begin{align*}
    \delta_t^2 &\leq \P(\cal E_g^c) + \P(\cal E_g) \E \sqbr{\ind{\cal E_g} \wtilde \psi(t,T)}, \\
    &\leq 3\delta + \min \cbr{1,~2 C_\rho \exp \br{- \frac{\eps^2}{32} \br{T-t-1 - \omega}}},
\end{align*}
where the first inequality follows from \Cref{lem:diff_bound}, and the last inequality follows form \Cref{lem:queue_diff_exp} on the good event $\cal E_g$. Applying square root and $\min\{1,\cdot\}$ on both sides
\begin{align*}
    \delta_t \leq  \min \cbr{1,~ \sqrt{3\delta} + \sqrt{2 C_\rho} \exp \br{- \frac{\eps^2}{64} \br{T-t-1 - \omega}}} 
\end{align*}
Then we deduce that
\begin{align*}
    \delta_t \leq \Delta_t := \begin{cases}
        \min \cbr{1,~ \sqrt{3\delta} + \sqrt{2 C_\rho} \exp \br{- \frac{\eps^2}{64} \br{T-t-1 - \omega}}} & \text{if } t\leq T - \omega - 1 \\
        1 & \text{if } t> T - \omega - 1,
    \end{cases} 
\end{align*}
where $\{\Delta_t\}_{t\in[T]}$ gives rise to a nondecreasing sequence. Consequently, we obtain
\begin{align*}
    R_T \leq \sum_{t=1}^{T-1} M_t \Delta_t.
\end{align*}
Since $\{M_t\}_{t\in[T]}$ is nonincreasing in $t$ and $\{\Delta_t\}_{t\in[T]}$ is nondecreasing in $t$, applying Chebyshev's sum inequality (\Cref{lem:cheb}) gives
\begin{align}
    R_T \leq \frac{1}{T-1}\br{\sum_{t=1}^{T-1} M_t} \br{\sum_{t=1}^{T-1} \Delta_t}.\label{eq:f1}
\end{align}
For the summation of $M_t$'s,
\begin{align*}
    \sum_{t=1}^{T-1} M_t & = \sum_{t=1}^{\tau} M_t + \sum_{t=\tau+1}^{T-1} M_t \\
    &\leq \tau + (T-\tau - 1) \bigO{T^{-1}}+ (T-\tau - 1) \sqrt{\lambda\v\veps}  + 2\beta_T \sum_{t=\tau+1}^{T-1} \sqrt{\nu(t-1)} \\
    &\leq \bigO{\frac{d\log(T)}{\sigma_0^4\eps^2}} + \bigO{1} + T \sqrt{\lambda\v\veps}  + 2\beta_T \uterm{\sum_{t=\tau+1}^{T-1} \br{\lambda_0 + \frac{\lambda \v\veps (t-\tau-1) \sigma_0^2}{4K}}^{-1/2}}{1} \\
    &\quad + 2\beta_T \uterm{\sum_{t=\tau+1}^{T-1}\frac{1}{\sqrt{\lambda_0}} \br{\exp\br{-\frac{(t-\tau-1)\lambda\v\veps }{16K}} + \sqrt{d} \exp \br{-\frac{(t-\tau-1)\lambda \v\veps  \sigma_0^2}{32K}}}}{2}.
\end{align*}
For \term 1,
\begin{align*}
    \term 1 \leq \frac{1}{\sqrt{\lambda_0}} + \int_{t=\tau+1}^{T-1} \br{\lambda_0 + \frac{\lambda \v\veps (t-\tau-1) \sigma_0^2}{4K}}^{-1/2}dt  \leq \frac{1}{\sqrt{\lambda_0}} + \frac{4K}{\sigma_0 \sqrt{\lambda\v\veps }}\sqrt{T}.
\end{align*}
For \term 2, by applying the geometric-series formula, we obtain
\begin{align*}
    \term 2 \leq \frac{1}{\sqrt{\lambda_0}} \br{\frac{1}{1 - e^{-\lambda\v\veps/(16K)}} + \frac{d}{1 - e^{-\lambda\v\veps \sigma_0^2/(32K)}}} \leq  \frac{1}{\sqrt{\lambda_0}} \br{\frac{16K}{\lambda\v\veps} +\frac{32 \sqrt{d} K}{\lambda\v\veps \sigma_0^2}}
\end{align*}
where the second inequality holds because $1-e^{-x}\geq x/2$ for $x\in(0,1]$.
Using the bouns on $B_1$ and $B_2$ with $\v\veps=T^{-1/2}$ yields
\begin{align*}
    \sum_{t=1}^{T-1} M_t &\leq \bigO{\frac{d\log(T)}{\sigma_0^4\eps^2}} + \bigO{1} +  T \sqrt{\lambda\v\veps}  \\
    &\quad + 2\beta_T \br{\frac{1}{\sqrt{\lambda_0}} + \frac{4K}{\sigma_0 \sqrt{\lambda\v\veps }}\sqrt{T}} +  \frac{2\beta_T}{\sqrt{\lambda_0}} \br{\frac{16K}{\lambda\v\veps} +\frac{32 \sqrt{d} K}{\lambda\v\veps \sigma_0^2}} \\
    &= \bigO{\frac{d\log(T)}{\sigma_0^4\eps^2} + T^{3/4} + \frac{d^{1/2}T^{3/4}\log^{1/2}(T)}{\sigma_0} + \frac{d T^{1/2}\log^{1/2}(T)}{\sigma_0^2}}
\end{align*}
Next, for the summation of $\Delta_t$'s,
\begin{align*}
    \sum_{t=1}^{T-1} \Delta_t &\leq  \omega + (T-\omega-1) \bigO{T^{-1}} +  \sum_{t=1}^{T-\omega - 1} \sqrt{2 C_\rho} \exp \br{- \frac{ \eps^2} {64} \br{T-t-1 - \omega}} \\
    &\leq \frac{2a\tau}{\eps} + \bigO{1} +   \frac{\sqrt{2 C_\rho}}{1-e^{-\eps^2/64}} \\
    &= \bigO{\frac{d\log(T)}{\sigma_0^4\eps^3}  + \frac{1}{\eps^3}} \tag{$1-e^{-x}\geq x/2$, $x\in(0,1]$} \\
    &= \bigO{\frac{d\log(T)}{\sigma_0^4\eps^3}}
\end{align*}
Finally, plugging in the bounds on the summation terms to \Cref{eq:f1}, we have
\begin{align*}
    R_T &\leq \frac{1}{T-1} \cal O \br{ \frac{d^2\log^2(T)}{\sigma_0^8\eps^5}  + \frac{dT^{3/4}\log(T)}{\sigma_0^4\eps^3} + \frac{d^{3/2}T^{3/4}\log^{3/2}(T)}{\sigma_0^5\eps^3} + \frac{d^2T^{1/2}\log^{3/2}(T)}{\sigma_0^6\eps^3}}\\
    &= \cal O \br{ \frac{d^2T^{-1}\log^2(T)}{\sigma_0^8\eps^5}  + \frac{dT^{-1/4}\log(T)}{\sigma_0^4\eps^3} + \frac{d^{3/2}T^{-1/4}\log^{3/2}(T)}{\sigma_0^5\eps^3} + \frac{d^2 T^{-1/2}\log^{3/2}(T)}{\sigma_0^6\eps^3}},
\end{align*}
as required.

\subsection{Proof of \Cref{thm:regret_2}} \label{sec:regret_2}

Recall the definition of the good event $\cal E_g'$ where \Cref{lem:prediction_error} holds. We have $\P(\cal E_g')\geq 1-\delta$. By a modification of \Cref{lem:decomposition} for the adversarial setting explained in \Cref{remark:regret-adv}, we deduce that
\begin{align*}
    R_T & \leq   \sum_{t=1}^{T- 1} \sqrt{\E\sqbr{\br{\mu\br{(x_t^*)\tp \theta_{k_t^*}^*} - \mu\br{x_t\tp \theta_{k_t}^*}}^2}} \sqrt{ \E \sqbr{\wtilde\psi'(t,T)}} \\
    &\leq  \uterm{\sqrt{\sum_{t=1}^{T-1} \E\sqbr{\br{\mu\br{(x_t^*)\tp \theta_{k_t^*}^*} - \mu\br{x_t\tp \theta_{k_t}^*}}^2}}}{1} \uterm{ \sqrt{\sum_{t=1}^{T-1} \E \sqbr{\wtilde\psi'(t,T)}}}{2} \tag{Cauchy-Schwarz}
\end{align*}
For \term 1,
\begin{align*}
    \E\sqbr{\br{\mu\br{(x_t^*)\tp \theta_{k_t^*}^*} - \mu\br{x_t\tp \theta_{k_t}^*}}^2} &\leq \P(\cal E_g'^c) + \E\sqbr{\ind{\cal E_g'} \br{\mu\br{(x_t^*)\tp \theta_{k_t^*}^*} - \mu\br{x_t\tp \theta_{k_t}^*}}^2} \\
    & \leq \bigO{T^{-1}} + \E\sqbr{\ind{\cal E_g'} \br{\mu\br{(x_t^*)\tp \theta_{k_t^*}^*} - \mu\br{x_t\tp \theta_{k_t}^*}}^2} 
\end{align*}
as $\P(\cal E_g'^c)\leq \delta$ and {$\delta\in(0,T^{-1}]$}.
For the second term on the right-hand side, on the event $\cal E_g'$, 
\begin{align*}
    \mu\br{(x_t^*)\tp \theta_{k_t^*}^*} - \mu\br{x_t\tp \theta_{k_t}^*} &\leq \mu\br{(x_t^*)\tp \what\theta_{t-1,k_t^*}} + \beta_{t-1,k_t^*}\|x_t^*\|_{V_{t-1,k_t^*}^{-1}}- \mu\br{x_t\tp \theta_{k_t}^*} \\
    &\leq \mu\br{(x_t)\tp \what\theta_{t-1,k_t}} + \beta_{t-1,k_t}\|x_t\|_{V_{t-1,k_t}^{-1}}- \mu\br{x_t\tp \theta_{k_t}^*}\\
    &\leq 2 \beta_{t-1,k_t}\|x_t\|_{V_{t-1,k_t}^{-1}}
\end{align*}
where the second inequality holds due to our optimistic choice.
As the left-hand side is at most 1 as well, it follows that
\begin{align}
    \mu\br{(x_t^*)\tp \theta_{k_t^*}^*} - \mu\br{x_t\tp \theta_{k_t}^*} \leq \min\cbr{1, 2 \beta_{t-1,k_t}\|x_t\|_{V_{t-1,k_t}^{-1}}}. \label{eq:ucb_bound}
\end{align}
Gathering the results, 
\begin{align*}
    \term 1 &\leq \sqrt{\bigO{1} + \E\sqbr{\sum_{t=1}^{T-1} \min\cbr{1, 4 \beta_{t-1,k_t}^2\|x_t\|^2_{V_{t-1,k_t}^{-1}}}}} \\
    &\leq 2\beta_T \sqrt{\bigO{1} + \E\sqbr{\sum_{t=1}^{T-1} \min\cbr{1, \|x_t\|^2_{V_{t-1,k_t}^{-1}}}}} \\
    &\leq 2\beta_T \sqrt{\bigO{1} + 2K d \log (1 + T/(dK\kappa \lambda_0))}\\
    & = \bigO{d \log(T)}
\end{align*}
where the second inequality follows from the fact that $\beta_t\geq\beta_{t,k}$ and $\{\beta_t\}_{t\in[T]}$ is monotonically increasing in $t$, while the third inequality is due to \Cref{lem:yadkori_new}. Moreover, we have
\begin{align*}
    4\beta_T \geq 64 R^2 \kappa^2 \log(1/\delta) \geq 32 \log(T) \geq 1,
\end{align*}
for all $\delta\in(0,1/\sqrt{T}]$ and $T\geq 2$. Therefore we can use $\min\{1,ab\}\leq a\min\{1,b\}$ if $a\geq 1$ to pull out $\beta_T$ out of $\min\{1,\cdot\}$.

Next, for \term 2,
\begin{align*}
    (\term 2)^2 &= \sum_{t=1}^{T-1} \E\sqbr{\wtilde\psi'(t,T)} \\
    &= \P(\cal E_g'^c) \sum_{t=1}^{T-1} \E\sqbr{\wtilde\psi'(t,T) } \\
    &\quad + \sum_{t=1}^{T-\omega - 1} \E\sqbr{\ind{\cal E_g'} \wtilde\psi'(t,T) } + \sum_{t=T-\omega}^{{T-1}} \E\sqbr{\ind{\cal E_g'} \wtilde\psi'(t,T)} \\
    &\leq \bigO{1} +  \omega + \sum_{t=1}^{T-\omega - 1} 2 C_\rho \exp \br{- \frac{ \eps^2} {32} \br{T-t-1 - \omega}} \\
    &\leq \bigO{\frac{d^2\log^2(T)}{\eps^3}} + 2 C_\rho \frac{1-e^{-\eps^2 (T-\omega - 1)/32}}{1 - e^{-\eps^2/32}} \\
    &\leq \bigO{\frac{d^2\log^2(T)}{\eps^3}} + \frac{128 C_\rho}{\eps^2} \tag{$1-e^{-x}\geq x/2$, $x\in(0,1]$}\\
    &= \bigO{\frac{d^2\log^2(T)}{\eps^3} + \frac{1}{\eps^4}}
\end{align*}
where the first inequality follows from \Cref{lem:diff_bound,lem:queue_diff_exp_2} on the good event $\cal E_g'$, while the third inequality holds since $0<\eps^2/32\leq 1$. Substituting \term 1 and \term 2 back gives
\begin{align*}
    R_T = \bigO{\frac{d^2\log^2 (T)}{\eps^{1.5}} + \frac{d\log(T)}{\eps^2}}.
\end{align*}

\section{Auxiliary lemmas}

\begin{lemma} \label{lem:yadkori_new}
    We have
    \begin{align*}
         \sum_{i=1}^t \min\cbr{1,\|x_i\|_{V_{i-1, k_i}^{-1}}^2} \leq 2 K d \log \br{1 + t/(d K \kappa \lambda_0)}.
    \end{align*}
\end{lemma}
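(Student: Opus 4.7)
The plan is to reduce this per-server elliptical potential bound to the standard single-matrix elliptical potential inequality applied separately to each server, and then combine the per-server bounds by Jensen's inequality.

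First I would decompose the sum according to which server was selected in each round. Since $V_{i-1,k_i}$ only depends on rounds $j<i$ with $k_j=k_i$ (the update rule keeps $V_{t,k}$ fixed whenever $k\ne k_t$), we may write
\begin{align*}
    \sum_{i=1}^t \min\!\cbr{1,\|x_i\|_{V_{i-1,k_i}^{-1}}^2}
    = \sum_{k=1}^K \sum_{i\in\cal T_k(t)} \min\!\cbr{1,\|x_i\|_{V_{i-1,k}^{-1}}^2},
\end{align*}
where $\cal T_k(t):=\{i\in[t]:k_i=k\}$ and $N_k:=|\cal T_k(t)|$, so that $\sum_k N_k=t$. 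Along the subsequence $\cal T_k(t)$, the sequence of matrices $V_{i-1,k}$ is exactly what the standard elliptical potential lemma assumes: $V_{0,k}=\kappa\lambda_0\I$ and each new matrix is obtained by a rank-one update $V_{i-1,k}+x_ix_i\tp$ with $\|x_i\|_2\leq 1$ by \Cref{ass:basic}.

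Next I would invoke the standard single-matrix elliptical potential inequality on each subsequence. The usual argument uses $\min\{1,u\}\leq 2\log(1+u)$ for $u\geq 0$ together with the matrix determinant lemma and AM--GM on $\det V$, giving for each $k\in[K]$
\begin{align*}
    \sum_{i\in\cal T_k(t)} \min\!\cbr{1,\|x_i\|_{V_{i-1,k}^{-1}}^2}
    \leq 2\log\frac{\det V_{t,k}}{\det V_{0,k}}
    \leq 2d\log\br{1+\frac{N_k}{d\kappa\lambda_0}}.
\end{align*}

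Summing over $k\in[K]$ and applying Jensen's inequality (equivalently, concavity of $\log(1+\cdot)$) with weights $1/K$ to the $K$ values $N_k/(d\kappa\lambda_0)$, I would conclude
\begin{align*}
    \sum_{i=1}^t \min\!\cbr{1,\|x_i\|_{V_{i-1,k_i}^{-1}}^2}
    \leq 2d\sum_{k=1}^K \log\br{1+\frac{N_k}{d\kappa\lambda_0}}
    \leq 2Kd\log\br{1+\frac{\sum_{k=1}^K N_k}{dK\kappa\lambda_0}}
    = 2Kd\log\br{1+\frac{t}{dK\kappa\lambda_0}},
\end{align*}
which is exactly the claim. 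There is no real obstacle here beyond bookkeeping: the per-server decomposition is forced by the update rule, and the $K$-fold application of the standard potential lemma loses a factor of $K$ which is then partially recovered inside the logarithm via Jensen's inequality. The only subtlety to double-check is that the initial regularization $V_{0,k}=\kappa\lambda_0\I$ is treated consistently in the bound $\det V_{t,k}/\det V_{0,k}\leq (1+N_k/(d\kappa\lambda_0))^d$, which follows from $\operatorname{tr}(V_{t,k})\leq d\kappa\lambda_0+N_k$.
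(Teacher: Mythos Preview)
Your proof is correct and follows essentially the same approach as the paper: decompose the sum by server, apply the elliptical potential lemma on each subsequence, and combine the per-server bounds via concavity. The only cosmetic difference is that the paper applies concavity at the matrix level (using concavity of $\log\det(\cdot)$ on the $V_{t,k}$ before the determinant--trace inequality) whereas you apply it at the scalar level (concavity of $\log(1+\cdot)$ on the counts $N_k$ after the determinant--trace inequality); both routes give exactly the same final bound.
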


\begin{proof}
    Recall the definition of $V_{t,k}= \kappa \lambda_0 \I + \sum_{i=1}^t \ind{k_i=k} x_i x_i\tp$. Then,
    \begin{align*}
        \sum_{i=1}^t \min\cbr{1,\|x_i\|_{V_{i-1, k_i}^{-1}}^2} &= \sum_{k=1}^K \sum_{i=1}^t  \ind{k_i=k} \min\cbr{1,\|x_i\|_{V_{i-1, k}^{-1}}^2} \\
        &\leq 2 \sum_{k=1}^K \log \frac{\det V_{t,k}}{\det \kappa\lambda_0 \I} \\
        &\leq 2 \br{K \log \det \br{\frac{1}{K}\sum_{k=1}^K V_{t,k}} - K \log \det (\kappa \lambda_0 \I)} \\
        &=2K \log \det \br{\sum_{i=1}^t\frac{1}{K\kappa \lambda_0} x_i x_i\tp + \I} \\
        &\leq 2 K d \log \br{1 + t/(d K \kappa \lambda_0)},
    \end{align*}
    where the first inequality follows from the elliptical potential lemma (\Cref{lem:yadkori}), the second inequality follows from the concavity of $\log \det(\cdot)$, and the last inequality follows from the determinant-trace inequality (Lemma 10 of \citet{abbasi2011improved}).
\end{proof}

\begin{lemma} [Elliptical potential lemma, Lemma 11 of \citet{abbasi2011improved}] \label{lem:yadkori}
    For any $\lambda > 0$ and sequence $\cbr{x_t}_{t=1}^T \in \mathbb{R}^d$, define $Z_t = \lambda \mathbf{I} + \sum_{i=1}^t x_i x_i^\top$. Then, provided that $\|x_t\|_2 \leq L$ holds for all $t\in[T]$, we have
    \begin{align*}
        \sum_{t=1}^T \min \cbr{1,~\|x_t\|_{Z_{t-1}^{-1}}^2} \leq 2 \log \frac{\det Z_T}{\det \lambda \mathbf{I}} \leq 2d\log \br{1 + \frac{TL^2}{d\lambda}}
    \end{align*} 
\end{lemma}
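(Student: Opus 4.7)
The plan is to prove the elliptical potential lemma by a determinant–telescoping argument combined with a determinant–trace inequality. First I would establish the one-step determinant identity
\[
\det(Z_t) \;=\; \det(Z_{t-1})\,\bigl(1 + \|x_t\|_{Z_{t-1}^{-1}}^2\bigr),
\]
which follows by writing $Z_t = Z_{t-1}^{1/2}\bigl(I + (Z_{t-1}^{-1/2}x_t)(Z_{t-1}^{-1/2}x_t)^\top\bigr)Z_{t-1}^{1/2}$ and applying the rank-one determinant identity $\det(I+vv^\top)=1+\|v\|_2^2$.

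Next I would invoke the elementary real-analytic inequality $\min\{1,a\}\leq 2\log(1+a)$ for all $a\geq 0$, verified by splitting into $a\in[0,1]$ (concavity gives $\log(1+a)\geq a\log 2\geq a/2$) and $a\geq 1$ (monotonicity gives $\log(1+a)\geq \log 2\geq 1/2$). Applying this with $a=\|x_t\|_{Z_{t-1}^{-1}}^2$ and summing against the one-step identity yields a telescoping bound
\[
\sum_{t=1}^T \min\bigl\{1,\|x_t\|_{Z_{t-1}^{-1}}^2\bigr\}
\;\leq\; 2\sum_{t=1}^T \log\frac{\det(Z_t)}{\det(Z_{t-1})}
\;=\; 2\log\frac{\det(Z_T)}{\det(\lambda I)}.
\]

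Finally, I would upper-bound $\det(Z_T)$ via the AM–GM / determinant–trace inequality $\det(M)\leq (\mathrm{tr}(M)/d)^d$ for PSD $M$. Since $\mathrm{tr}(Z_T)=d\lambda + \sum_{t=1}^T \|x_t\|_2^2 \leq d\lambda + TL^2$, we get $\det(Z_T)\leq (\lambda+TL^2/d)^d$ and hence $\log(\det(Z_T)/\det(\lambda I))\leq d\log(1+TL^2/(d\lambda))$. Combining the two inequalities gives the claimed $2d\log(1+TL^2/(d\lambda))$ bound, while the intermediate quantity $2\log(\det(Z_T)/\det(\lambda I))$ records the sharper log-determinant form.

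There is no substantive obstacle here — the argument is classical and the only care needed is in verifying the scalar inequality $\min\{1,a\}\leq 2\log(1+a)$ with the correct constant $2$ (any smaller constant would fail at $a=1$), and in making sure the AM–GM step uses the trace upper bound in a PSD-preserving way. These are routine.
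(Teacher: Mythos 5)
Your proof is correct and is exactly the classical argument: the rank-one determinant identity, the scalar bound $\min\{1,a\}\leq 2\log(1+a)$, determinant telescoping, and the determinant--trace (AM--GM) inequality. The paper does not reprove this lemma but simply imports it from \citet{abbasi2011improved}, whose Lemma~11 proof follows the same route, so there is nothing to reconcile beyond noting that your write-up matches the standard cited derivation.
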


\begin{lemma} [Lemma 12 of \citet{faury2020improved}] \label{lem:faury}
    Let the maximum likelihood estimator of the regularized cross-entropy loss as $\what\theta_{t}^{(1)}$ and define its projection as
    \begin{align*}
        \what\theta_{t} = \argmin_{\theta\in\Theta} \norm{\sum_{i=1}^t \sqbr{\mu\br{x_i\tp \theta} - \mu\br{x_i\tp \what\theta_{t}^{(1)}}}x_i}_{V_{t}^{-1}}
    \end{align*}
    where $V_t = \lambda_0 \I + \sum_{i=1}^t x_i x_i\tp$. Define a confidence set and $\beta_t$ as
    \begin{align*}
        \cal C_t= \cbr{\theta\in\Theta:~\norm{\theta - \what\theta_{t}}_{V_{t}} \leq 2 \kappa \beta_t}, \quad \beta_t = \sqrt{2d\log\br{1+\frac{t}{\kappa \lambda_0 d}} + \log(1/\delta)} + S\sqrt{\lambda_0}.
    \end{align*}
    Then, with probability at least $1-\delta$,
    \begin{align*}
        \forall t\geq 1, \quad \theta^*\in\cal C_t.
    \end{align*}
\end{lemma}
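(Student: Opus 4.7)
The plan is to follow the standard self-normalized martingale approach of \citet{abbasi2011improved} adapted to the logistic setting by \citet{faury2020improved}. The three steps are to (i) convert the regularized MLE optimality condition into an identity relating a prediction-error sum to a sub-Gaussian noise martingale, (ii) use the variational definition of the projection to transfer the bound from $\what\theta_t^{(1)}$ to the projected estimator $\what\theta_t$, and (iii) convert a prediction-sum bound in the $V_t^{-1}$-norm into a $V_t$-norm bound on $\what\theta_t - \theta^*$ via the lower bound $\dot\mu \ge 1/\kappa$ from \Cref{ass:constants}.

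First, I would write the first-order optimality condition of the regularized log-loss, $\sum_{i=1}^t (r_i - \mu(x_i\tp \what\theta_t^{(1)})) x_i = \lambda_0 \what\theta_t^{(1)}$, and decompose the reward as $r_i = \mu(x_i\tp \theta^*) + \eps_i$, where $\eps_i$ is a $\cal F_i$-adapted, zero-mean random variable bounded in $[-1,1]$, hence $1/2$-sub-Gaussian. Rearranging yields
$$\sum_{i=1}^t \bigl(\mu(x_i\tp \theta^*) - \mu(x_i\tp \what\theta_t^{(1)})\bigr) x_i = \lambda_0 \what\theta_t^{(1)} - \sum_{i=1}^t \eps_i x_i.$$
Next I would apply Theorem~1 of \citet{abbasi2011improved} to the noise martingale $M_t := \sum_{i=1}^t \eps_i x_i$ with the regularized design $V_t$, yielding $\snorm{M_t}_{V_t^{-1}} \le \sqrt{2\log(1/\delta) + d\log(1 + t/(\lambda_0 d))}$ uniformly in $t \ge 1$ with probability at least $1-\delta$. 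Combined with $\snorm{\lambda_0 \what\theta_t^{(1)}}_{V_t^{-1}}$ controlled up to a $\sqrt{\lambda_0}\,S$ term (once projected or absorbed via $\theta^* \in \Theta$), the triangle inequality bounds the $V_t^{-1}$-norm of the prediction-error sum by $\beta_t$.

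For step (ii), since $\theta^*\in\Theta$ and $\what\theta_t$ is defined as the $\Theta$-minimizer of $\snorm{\sum_{i=1}^t (\mu(x_i\tp\theta) - \mu(x_i\tp\what\theta_t^{(1)})) x_i}_{V_t^{-1}}$, the same norm evaluated at $\what\theta_t$ is at most that at $\theta^*$; a second triangle inequality then gives
$$\biggnorm{\sum_{i=1}^t \bigl(\mu(x_i\tp \what\theta_t) - \mu(x_i\tp \theta^*)\bigr) x_i}_{V_t^{-1}} \le 2\beta_t.$$
For step (iii), I would invoke the mean value theorem to write $\mu(x_i\tp \what\theta_t) - \mu(x_i\tp \theta^*) = \alpha_i\, x_i\tp (\what\theta_t - \theta^*)$ with $\alpha_i := \int_0^1 \dot\mu(x_i\tp \theta^* + s\,x_i\tp(\what\theta_t - \theta^*))\,\diff s \ge 1/\kappa$, then recognize the left-hand side as $\snorm{H_t(\what\theta_t - \theta^*)}_{V_t^{-1}}$ for $H_t := \sum_{i=1}^t \alpha_i x_i x_i\tp$. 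Using $H_t \succeq (V_t - \lambda_0 \I)/\kappa$ together with standard PSD manipulations, one concludes $\snorm{\what\theta_t - \theta^*}_{V_t} \le 2\kappa\,\beta_t$, i.e. $\theta^* \in \cal C_t$.

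The main obstacle lies in this last conversion step: since $H_t$ lacks the $\lambda_0 \I$ regularizer that $V_t$ carries, $H_t \succeq V_t/\kappa$ fails to hold exactly, and careful bookkeeping is required to bridge the gap between $\snorm{H_t(\what\theta_t - \theta^*)}_{V_t^{-1}}$ and $\snorm{\what\theta_t - \theta^*}_{V_t}$. This is precisely where the additive $S\sqrt{\lambda_0}$ term in $\beta_t$ and the factor $2\kappa$ in the radius of $\cal C_t$ originate, as they absorb the contribution of the regularization term $\lambda_0 \theta^*$ (bounded via $\|\theta^*\|_2 \le S$) and the scaling loss incurred when sandwiching $H_t$ between multiples of $V_t$.
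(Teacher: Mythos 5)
The paper does not prove this lemma at all: it is imported verbatim (as Lemma 12 of \citet{faury2020improved}), so the only benchmark is the original proof in that reference, and your sketch follows essentially the same route (regularized-MLE optimality condition, a self-normalized martingale bound, the argmin/triangle-inequality transfer to the projected estimator, and a mean-value-theorem conversion with $\dot\mu\ge 1/\kappa$). Steps (ii) and (iii) are fine modulo the bookkeeping you already flag: the factor $2$ indeed comes from the projection step, and the missing $\lambda_0\I$ in $H_t$ is absorbed using $\|\what\theta_t-\theta^*\|_2\le 2S$, which costs an additive $O(S\sqrt{\lambda_0})$ that the stated radius $2\kappa\beta_t$ can accommodate.

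The genuine gap is in step (i), in how you dispose of the regularization term. From the optimality condition you get $\sum_{i\le t}\br{\mu(x_i\tp\theta^*)-\mu(x_i\tp\what\theta_t^{(1)})}x_i=\lambda_0\what\theta_t^{(1)}-M_t$, and you claim $\lambda_0\snorm{\what\theta_t^{(1)}}_{V_t^{-1}}$ is "controlled up to a $\sqrt{\lambda_0}S$ term (once projected or absorbed via $\theta^*\in\Theta$)". But $\what\theta_t^{(1)}$ is the \emph{unprojected} MLE: nothing forces $\|\what\theta_t^{(1)}\|_2\le S$, and in degenerate samples (e.g.\ all $r_i=1$) its norm grows like $\sqrt{t/\lambda_0}$, so $\lambda_0\snorm{\what\theta_t^{(1)}}_{V_t^{-1}}$ can be of order $\sqrt{t}$ and the bound collapses; projecting afterwards does not help because the term already involves $\what\theta_t^{(1)}$, and substituting $\theta^*$ for it is circular. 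The correct handling, as in \citet{faury2020improved}, is to keep the regularizer inside the compared map, $g_t(\theta):=\sum_{i\le t}\mu(x_i\tp\theta)x_i+\lambda_0\theta$, so that the optimality condition reads $g_t(\what\theta_t^{(1)})=\sum_{i\le t}r_ix_i$ exactly and $g_t(\theta^*)-g_t(\what\theta_t^{(1)})=\lambda_0\theta^*-M_t$, whose $V_t^{-1}$-norm is at most $S\sqrt{\lambda_0}+\snorm{M_t}_{V_t^{-1}}$ since $\theta^*\in\Theta$; the projection must then also be taken with respect to $g_t$ (which is how Faury et al.\ define it — the restatement in the lemma here drops the $\lambda_0\theta$ inside the norm) so that the argmin comparison in your step (ii) applies to $g_t$-differences. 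With that repair, and using the $1/2$-sub-Gaussianity of the Bernoulli noise in the self-normalized bound, your outline matches the cited proof.
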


\begin{lemma} [Chebyshev sum inequality] \label{lem:cheb}
    If $(a_i)_{i=1}^t$ is nondecreasing and $(b_i)_{i=1}^t$ is nonincreasing, and $a_i,b_i\geq 0$, we have
    \begin{align*}
        \sum_{i=1}^t a_i b_i \leq \frac{1}{t} \br{\sum_{i=1}^t a_i} \br{\sum_{i=1}^t b_i}.
    \end{align*}
\end{lemma}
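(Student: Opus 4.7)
The plan is to give the classical one-line proof of Chebyshev's sum inequality via the ``sum of squared differences is non-positive'' trick, which does not even require the non-negativity assumption on the sequences. The key observation is that opposite monotonicity of $(a_i)$ and $(b_i)$ forces every pairwise product $(a_i - a_j)(b_i - b_j)$ to be non-positive, and summing these products over all index pairs yields exactly the inequality we want after one algebraic expansion.

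More concretely, I would first note that for any $i,j \in [t]$, since $(a_i)$ is nondecreasing and $(b_i)$ is nonincreasing, the two factors $a_i - a_j$ and $b_i - b_j$ have opposite signs (or one is zero), so $(a_i - a_j)(b_i - b_j) \le 0$. Summing over all ordered pairs $(i,j) \in [t]\times[t]$ gives
\[
    0 \ge \sum_{i=1}^t \sum_{j=1}^t (a_i - a_j)(b_i - b_j).
\]
Then I would expand the product inside the double sum as $a_i b_i + a_j b_j - a_i b_j - a_j b_i$, collect the four resulting sums using $\sum_{i,j} a_i b_i = t\sum_i a_i b_i$ (and similarly for the $a_j b_j$ term) and $\sum_{i,j} a_i b_j = (\sum_i a_i)(\sum_j b_j)$ (and symmetrically for the $a_j b_i$ term). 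This reduces the double sum to
\[
    2t \sum_{i=1}^t a_i b_i - 2 \left(\sum_{i=1}^t a_i\right)\left(\sum_{i=1}^t b_i\right) \le 0,
\]
from which the claimed inequality follows by dividing by $2t > 0$.

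There is essentially no obstacle here: the proof is a two-line identity plus a sign check, and it does not even use $a_i, b_i \ge 0$ (that assumption is only needed elsewhere in the paper so that the bound can be combined with further monotone estimates). The only thing worth double-checking is the algebraic bookkeeping in the expansion of $\sum_{i,j}(a_i-a_j)(b_i-b_j)$, and that the degenerate case $t=0$ is excluded by the hypothesis (for $t=1$ the inequality is an equality). Since the result is a standard textbook fact, I would keep the write-up to just a few lines referencing this argument.
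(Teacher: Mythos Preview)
Your proof is correct and is the standard argument for Chebyshev's sum inequality; the paper states this lemma as a known auxiliary result without proof, so there is nothing to compare against. Your observation that the non-negativity hypothesis is not needed for the inequality itself is also correct.
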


\begin{proposition} [Proposition 1 of \citet{li2017provably}] \label{prop:li}
    Define $V_t=\sum_{i=1}^t x_i x_i\tp$, where $x_i$ is drawn i.i.d. from some unknown distribution $\nu$ with support in the unit ball, $\B^d$. Furthermore, let $\Sigma:=\E[x_i x_i\tp]$ be the second moment matrix, and $B$ and $\delta>0$ be two positive constants. Then, there exists absolute constants $C_1,C_2>0$ such that $\lambda_{\min}(V_t)\geq B$ with probability at least $1-\delta$, as long as
    \begin{align*}
        t\geq \br{\frac{C_1\sqrt{d} + C_2\sqrt{\log(1/\delta)}}{\lambda_{\min}(\Sigma)}}^2 + \frac{2B}{\lambda_{\min}(\Sigma)}.
    \end{align*}
\end{proposition}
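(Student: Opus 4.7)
The plan is to reduce the lower bound on $\lambda_{\min}(V_t)$ to an operator-norm deviation inequality for the centered sum $S_t := V_t - t\Sigma = \sum_{i=1}^t(x_i x_i^\top - \Sigma)$. By Weyl's inequality, $\lambda_{\min}(V_t) \geq t\,\lambda_{\min}(\Sigma) - \|S_t\|_{\mathrm{op}}$, so it suffices to ensure $\|S_t\|_{\mathrm{op}} \leq t\,\lambda_{\min}(\Sigma) - B$ with probability at least $1-\delta$. To accomplish this, I would apply the matrix Bernstein inequality to $S_t$ and then invert the resulting tail bound into a lower threshold on $t$.

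For the Bernstein step, set $Z_i := x_i x_i^\top - \Sigma$, which are i.i.d., mean-zero, and self-adjoint, and satisfy $\|Z_i\|_{\mathrm{op}} \leq 2$ since $\|x_i\|_2 \leq 1$ implies $\|x_i x_i^\top\|_{\mathrm{op}} \leq 1$ and $\|\Sigma\|_{\mathrm{op}} \leq 1$. For the matrix variance, the identity $(x_i x_i^\top)^2 = \|x_i\|_2^2\, x_i x_i^\top$ yields $\E[Z_i^2] \preceq \E[(x_i x_i^\top)^2] \preceq \Sigma$, so $\|\sum_{i=1}^t \E[Z_i^2]\|_{\mathrm{op}} \leq t\,\lambda_{\max}(\Sigma) \leq t$. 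The matrix Bernstein inequality then gives, for every $s>0$,
\[
\P\br{\|S_t\|_{\mathrm{op}} \geq s} \leq 2d\exp\br{-\frac{s^2/2}{t + 2s/3}},
\]
and inverting this tail yields $\|S_t\|_{\mathrm{op}} \leq C'\sqrt{t\,\log(2d/\delta)} + C''\log(2d/\delta)$ with probability at least $1-\delta$ for absolute constants $C', C''$.

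To finish, I would split the slack $t\,\lambda_{\min}(\Sigma) - B$ into two halves: one absorbs $\|S_t\|_{\mathrm{op}}$ and the other guarantees positivity. Using $\log(2d/\delta) \leq d + \log(1/\delta) \leq (\sqrt{d} + \sqrt{\log(1/\delta)})^2$, rewrite the deviation bound as $\|S_t\|_{\mathrm{op}} \leq C_3\sqrt{t}\,(\sqrt{d}+\sqrt{\log(1/\delta)}) + C_4(\sqrt{d}+\sqrt{\log(1/\delta)})^2$. Requiring this to be at most $t\,\lambda_{\min}(\Sigma)/2$ reduces to a scalar quadratic inequality in $\sqrt{t}$ whose solution has the form $t \geq \br{(C_1\sqrt{d} + C_2\sqrt{\log(1/\delta)})/\lambda_{\min}(\Sigma)}^2$ for appropriately chosen absolute constants $C_1, C_2$. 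Adding $2B/\lambda_{\min}(\Sigma)$ to dominate the remaining half of the slack yields the stated threshold on $t$, at which point Weyl's inequality delivers $\lambda_{\min}(V_t) \geq B$ on the $(1-\delta)$-event.

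The main obstacle is producing the precise additive form $(C_1\sqrt{d} + C_2\sqrt{\log(1/\delta)})^2$ in the numerator rather than a coarser $\max$, since a direct application of matrix Bernstein couples the dimensional and confidence parameters inside a single $\log(d/\delta)$ and the algebra that separates them into $\sqrt{d}$ and $\sqrt{\log(1/\delta)}$ pieces, and then recombines the linear-in-$\sqrt{t}$ and constant parts into one squared expression, has to be done carefully. An alternative that sidesteps this issue is a covering-net argument: take a $1/4$-net $\mathcal{N}$ of the unit sphere with $|\mathcal{N}| \leq 9^d$, apply the scalar Bernstein inequality to each quadratic form $u^\top S_t u$ (whose summands are bounded by $1$ with variance at most $u^\top\Sigma u \leq 1$), and union bound over $\mathcal{N}$; here the $\sqrt{d}$ factor appears transparently through $\log|\mathcal{N}| = d\log 9$. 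Either route produces the desired form, and I would pick whichever yields the cleanest absolute constants.
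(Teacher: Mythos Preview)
The paper does not supply its own proof of this statement; it is quoted as Proposition~1 of \citet{li2017provably} and used as a black-box auxiliary lemma. Your proposal is correct and follows one of the standard routes to such minimum-eigenvalue bounds: center the empirical second-moment matrix, control the operator-norm fluctuation of $S_t$ via matrix Bernstein (or a covering net plus scalar Bernstein), and invert the resulting tail into a sample-size threshold. Both variants you sketch are sound. The covering-net version is closer to the original argument in \citet{li2017provably} and, as you observe, makes the separation into $\sqrt{d}$ and $\sqrt{\log(1/\delta)}$ contributions transparent through $\log|\mathcal{N}| = \Theta(d)$; the direct matrix-Bernstein route is slicker but requires the $\log(2d/\delta) \lesssim d + \log(1/\delta)$ bookkeeping you flag. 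Either reproduces the statement with suitable absolute constants $C_1, C_2$.
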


\begin{lemma} [Multiplicative Chernoff bound] \label{lem:chernoff}
    Suppose $X_1,\dots, X_n\in\{0,1\}$ are independent random variables. Let $X$ denote their sum and $\mu=\E[X]$. Then for any $0\leq \delta \leq 1$,
    \begin{align*}
        \P(X\leq (1-\delta)\mu) \leq \exp\br{-\delta^2\mu/2}.
    \end{align*}
    Also, for any $\delta \geq 0$,
    \begin{align*}
        \P(X \geq (1+\delta)\mu) \leq \exp\br{-\delta^2\mu/(2+\delta)}.
    \end{align*}
\end{lemma}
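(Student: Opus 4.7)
The plan is to prove both tail bounds by the standard Chernoff moment-generating-function (MGF) technique, handling the two directions symmetrically. For each direction, I would introduce a free parameter $t > 0$, apply Markov's inequality to an exponential transform, factorize over the independent $X_i$'s, bound each factor using $1+y \leq e^y$, and then optimize $t$.

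More concretely, for the upper tail I would write, for any $t>0$,
\begin{align*}
\P(X \geq (1+\delta)\mu) = \P(e^{tX} \geq e^{t(1+\delta)\mu}) \leq e^{-t(1+\delta)\mu}\E[e^{tX}].
\end{align*}
Using independence, $\E[e^{tX}]=\prod_{i=1}^n \E[e^{tX_i}]$, and since $X_i\in\{0,1\}$ with $p_i:=\E[X_i]$, we have $\E[e^{tX_i}] = 1+p_i(e^t-1) \leq \exp(p_i(e^t-1))$. Multiplying across $i$ and recalling $\mu=\sum_i p_i$ gives $\E[e^{tX}]\leq \exp(\mu(e^t-1))$, so
\begin{align*}
\P(X\geq (1+\delta)\mu) \leq \exp\br{\mu(e^t-1) - t(1+\delta)\mu}.
\end{align*}
Optimizing by setting $t=\log(1+\delta)$ yields the classical bound $\exp\br{\mu\delta - (1+\delta)\log(1+\delta)\mu}$. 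The remaining step is to verify the deterministic inequality $(1+\delta)\log(1+\delta) - \delta \geq \delta^2/(2+\delta)$ for all $\delta\geq 0$, which gives the stated form $\exp(-\delta^2\mu/(2+\delta))$. This last numerical inequality can be proved by defining $f(\delta) := (1+\delta)\log(1+\delta) - \delta - \delta^2/(2+\delta)$, checking $f(0)=0$, and differentiating to confirm $f'(\delta)\geq 0$.

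For the lower tail the argument is completely parallel: for $t>0$, Markov applied to $e^{-tX}$ gives $\P(X\leq (1-\delta)\mu) \leq e^{t(1-\delta)\mu}\E[e^{-tX}]$. The same factorization and $1+y\leq e^y$ bound produce $\E[e^{-tX}]\leq \exp(\mu(e^{-t}-1))$, and choosing $t=-\log(1-\delta)$ (which is well-defined and positive for $0\leq \delta\leq 1$) yields $\exp\br{-\mu\delta - (1-\delta)\log(1-\delta)\mu}$. I would then reduce to the deterministic inequality $(1-\delta)\log(1-\delta)+\delta \geq \delta^2/2$ on $[0,1]$, again proved by one-variable calculus (the function $g(\delta):=(1-\delta)\log(1-\delta)+\delta - \delta^2/2$ satisfies $g(0)=0$ and $g'(\delta) = -\log(1-\delta) - \delta \geq 0$ on $[0,1]$ by the Taylor expansion of $-\log(1-\delta)$).

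No step is genuinely hard: the only mildly technical obstacle is the two scalar inequalities needed to convert the optimized MGF bounds into the clean quadratic-exponent forms $e^{-\delta^2\mu/2}$ and $e^{-\delta^2\mu/(2+\delta)}$. Both reduce to monotonicity checks after a single differentiation, so I would state them as sub-lemmas (or cite them) and keep the main body of the proof focused on the MGF computation and its optimization.
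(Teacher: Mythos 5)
Your MGF-based argument is correct: the Markov/exponential-moment step, the factorization over independent $\scbr{0,1}$-valued $X_i$ with $1+y\leq e^y$, the optimal choices $t=\log(1+\delta)$ and $t=-\log(1-\delta)$, and the two scalar inequalities $(1+\delta)\log(1+\delta)-\delta\geq \delta^2/(2+\delta)$ and $(1-\delta)\log(1-\delta)+\delta\geq\delta^2/2$ all check out (the boundary cases $\delta=0$ and $\delta=1$ are trivial). The paper states this lemma as a standard auxiliary result without proof, and your derivation is exactly the classical one, so there is nothing to reconcile.
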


\begin{lemma} [Matrix Chernoff bound] \label{lem:mx_chernoff}
    Let $X\in\R^d$ be a random vector with $\|X\|_2\leq 1$ and $\E[X X\tp]\succeq \sigma_0^2\I$ for some $\sigma_0>0$. Suppose $X_1,\dots, X_n$ be i.i.d. sampled vectors and define $V_n = \sum_{i=1}^n X_i X_i\tp$. Then for any $0\leq \delta < 1$,
    \begin{align*}
        \P\br{\lambda_{\min}(V_n) \leq (1-\delta) n \sigma_0^2} \leq d \br{\frac{e^{-\delta}}{(1-\delta)^{1-\delta}}}^{n\sigma_0^2} \leq d \exp\br{-\frac{\delta^2 n \sigma_0^2}{2}}
    \end{align*}
\end{lemma}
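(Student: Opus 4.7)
The plan is to follow the standard matrix Laplace-transform approach of Ahlswede-Winter, refined by Tropp. Write $Y_i := X_i X_i\tp$; since $\|X_i\|_2\leq 1$ we have $0\preceq Y_i\preceq \I$, and by hypothesis $\E[Y_i]\succeq \sigma_0^2 \I$. For any $\theta>0$, Markov's inequality applied to $e^{-\theta\lambda_{\min}(V_n)}$, combined with $e^{-\theta \lambda_{\min}(V_n)} = \lambda_{\max}(e^{-\theta V_n}) \leq \tr(e^{-\theta V_n})$, yields
\[
\P\br{\lambda_{\min}(V_n)\leq t} \leq e^{\theta t}\,\E\sqbr{\tr\br{e^{-\theta V_n}}}.
\]

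To handle non-commutativity of the $Y_i$, I would invoke the Tropp subadditivity inequality (a consequence of Lieb's concavity theorem for $A\mapsto \tr\,\exp(H+\log A)$):
\[
\E\sqbr{\tr\,e^{-\theta\sum_i Y_i}} \leq \tr\,\exp\br{\sum_{i=1}^n \log \E\sqbr{e^{-\theta Y_i}}}.
\]
Next, since each $Y_i$ has spectrum in $[0,1]$ and the scalar map $y\mapsto e^{-\theta y}$ is convex, the secant bound $e^{-\theta y}\leq 1 + (e^{-\theta}-1)y$ on $[0,1]$ lifts via the functional calculus to the operator inequality $e^{-\theta Y_i}\preceq \I + (e^{-\theta}-1)Y_i$. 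Taking expectations, then using operator monotonicity of $\log$ together with the scalar bound $\log(1+u)\leq u$ applied eigenvalue-wise (valid because $\I$ commutes with $\E[Y_i]$), gives $\log\E[e^{-\theta Y_i}]\preceq (e^{-\theta}-1)\E[Y_i]$.

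Summing over $i$, applying monotonicity of $A\mapsto \tr\,e^A$, and using $(e^{-\theta}-1)n\E[Y_i]\preceq (e^{-\theta}-1)n\sigma_0^2\I$ (since $e^{-\theta}<1$ and $\E[Y_i]\succeq \sigma_0^2\I$) reduces the right-hand side to $d\,\exp\br{(e^{-\theta}-1)n\sigma_0^2}$. Setting $t=(1-\delta)n\sigma_0^2$ and optimizing in $\theta>0$ produces the stationary point $\theta^\star = -\log(1-\delta)>0$; direct substitution yields the first claimed bound
\[
\P\br{\lambda_{\min}(V_n)\leq (1-\delta)n\sigma_0^2} \leq d\sqbr{\frac{e^{-\delta}}{(1-\delta)^{1-\delta}}}^{n\sigma_0^2}.
\]
The second, Gaussian-form bound then follows from the elementary scalar inequality $(1-\delta)\log(1-\delta)\geq -\delta + \delta^2/2$ on $[0,1)$, which I would verify by setting $f(\delta):=(1-\delta)\log(1-\delta)+\delta-\delta^2/2$, observing $f(0)=f'(0)=0$, and checking $f''(\delta)=\delta/(1-\delta)\geq 0$.

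The main obstacle is the non-commutative step. Because the $Y_i$ need not commute, one cannot simply write $e^{-\theta\sum_i Y_i} = \prod_i e^{-\theta Y_i}$ and invoke independence as in the scalar Chernoff argument. The workaround is Lieb's concavity theorem (or equivalently Golden-Thompson together with an inductive expansion over $n$), which is the technical heart of Tropp-style matrix concentration. Once that ingredient is in hand, the remaining spectral estimates — the operator secant bound, operator $\log$ monotonicity, and the trace-to-max-eigenvalue estimate with a factor $d$ — are faithful matrix analogues of the standard scalar derivation, so they can be assembled routinely into the stated bound.
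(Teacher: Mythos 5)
Your proof is correct. Note first that the paper itself does not prove this lemma: it is listed among the auxiliary lemmas and imported as the standard matrix Chernoff bound (in the Ahlswede--Winter/Tropp form), so there is no in-paper argument to compare against. Your derivation is precisely the canonical one: the matrix Laplace-transform step $\P(\lambda_{\min}(V_n)\leq t)\leq e^{\theta t}\,\E[\tr e^{-\theta V_n}]$, Tropp's subadditivity of the trace mgf via Lieb's concavity theorem, the secant bound $e^{-\theta Y_i}\preceq \I+(e^{-\theta}-1)Y_i$ valid because each $Y_i=X_iX_i^{\mathsf T}$ has spectrum in $[0,1]$, operator monotonicity of $\log$ together with $\log(\I+A)\preceq A$, the comparison $(e^{-\theta}-1)\E[Y_i]\preceq (e^{-\theta}-1)\sigma_0^2\I$ (the sign flip from the negative scalar is handled correctly), and optimization at $\theta^\star=-\log(1-\delta)$, which reproduces the exponent $-\delta-(1-\delta)\log(1-\delta)$ and hence the first bound; your convexity check $f''(\delta)=\delta/(1-\delta)\geq 0$ correctly yields the Gaussian relaxation. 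The only point to flag is that the argument is complete conditional on quoting Lieb's theorem (or the Golden--Thompson/induction route) rather than proving it, which is the appropriate level of rigor for an auxiliary concentration lemma of this kind; the degenerate case $\delta=0$ is trivial since the bound then reads $d\geq 1$.
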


\begin{lemma} [Azuma-Hoeffding inequality] \label{lem:azuma}
If a supermartingale $(Y_i)_{i\geq0}$ corresponding to filtration $\cal G_i$ satisfies $|Y_i-Y_{i-1}|\leq c_i$ for all $t\in[t]$, then for any $a\geq 0$, we have
\begin{align*}
    \P(Y_t - Y_0\geq a) \leq 2 \exp \br{- \frac{a^2}{2 \sum_{i=1}^t c_i^2}}.
\end{align*}
\end{lemma}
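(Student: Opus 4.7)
The plan is to apply the classical Chernoff exponential-moment method to the telescoping decomposition $Y_t - Y_0 = \sum_{i=1}^t X_i$, where $X_i := Y_i - Y_{i-1}$ are the supermartingale increments. By hypothesis, $\E[X_i \mid \cal G_{i-1}] \leq 0$ and $|X_i| \leq c_i$, and I would carry out three standard steps.

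First, apply Markov's inequality to the exponentiated sum: for any $\lambda > 0$,
\[
\P(Y_t - Y_0 \geq a) \leq e^{-\lambda a}\, \E\bigl[e^{\lambda (Y_t - Y_0)}\bigr],
\]
reducing the tail estimate to a bound on the moment generating function of the telescoping sum. Second, establish a per-increment conditional MGF bound. Let $\mu_i := \E[X_i \mid \cal G_{i-1}] \leq 0$. Then $X_i - \mu_i$ is conditionally mean-zero and lies in an interval of length at most $2c_i$, so Hoeffding's lemma gives $\E\bigl[e^{\lambda(X_i - \mu_i)} \mid \cal G_{i-1}\bigr] \leq e^{\lambda^2 c_i^2/2}$. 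Multiplying through by $e^{\lambda \mu_i} \leq 1$ (valid since $\lambda > 0$ and $\mu_i \leq 0$) yields the clean one-step bound
\[
\E\bigl[e^{\lambda X_i} \mid \cal G_{i-1}\bigr] \leq e^{\lambda^2 c_i^2 / 2}.
\]

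Third, iterate via the tower property: since $e^{\lambda \sum_{i < t} X_i}$ is $\cal G_{t-1}$-measurable, conditioning successively on $\cal G_{t-1}, \cal G_{t-2}, \dots, \cal G_0$ and applying the one-step bound at each stage gives, by induction on $t$, $\E\bigl[e^{\lambda(Y_t - Y_0)}\bigr] \leq \exp\bigl(\lambda^2 C/2\bigr)$ with $C := \sum_{i=1}^t c_i^2$. Substituting back and optimizing over $\lambda$ by setting $\lambda^\star = a/C$ produces the one-sided bound $\exp(-a^2/(2C))$. The factor of $2$ in the stated conclusion can be retained as harmless slack, or recovered by a union bound with the lower-tail (applicable when $-Y_i$ is itself a supermartingale, i.e., when $Y_i$ is a martingale).

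The main subtlety, which is really the only place the supermartingale hypothesis (as opposed to the more familiar martingale hypothesis) enters, is the passage from $\mu_i = 0$ to $\mu_i \leq 0$ in the MGF bound: the inequality $e^{\lambda \mu_i} \leq 1$ for $\lambda > 0$ is precisely what makes the one-sided upper-tail argument go through with the same constants. Everything else amounts to routine bookkeeping of Chernoff's method.
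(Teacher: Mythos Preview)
Your proof is correct and follows the standard Chernoff exponential-moment argument for Azuma--Hoeffding. The paper itself does not supply a proof of this lemma; it is listed among the auxiliary lemmas as a well-known inequality and is simply invoked where needed (in the proof of \Cref{lem:queue_diff_hoeffding}). Your treatment of the supermartingale case via $e^{\lambda \mu_i}\le 1$ is the right observation, and your remark that the factor $2$ is slack for the one-sided upper tail is accurate.
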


\section{Discussions}

\para{Algorithm design and regret bounds}

We clarify the design of CQB-$\varepsilon$ and its relation to classical explore–then–commit (ETC) strategies. At first glance, CQB-$\varepsilon$ resembles ETC, which is known to be suboptimal in instance-independent regret compared to UCB- or Thompson sampling–based approaches, but its structure is different.

CQB-$\varepsilon$ has two phases: phase~1 is pure exploration and phase~2 is mainly exploitation, yet still enforces exploration via uniform exploration steps and UCB-based job–server selection. Unlike classical ETC, we do not assume a large gap $\Delta$ between the best and second-best arms, and phase~1 is tuned to create a negative drift rather than to shrink uncertainty below $\Delta/2$. Without a large-gap assumption, pure exploitation in phase~2 is impossible, which motivates the UCB-based rule.

Now for the regret, our analysis relies on the bound $R_T \leq \sum_{t=1}^T \sqrt{\mathbb{E}[\mu_{\Delta,t}^2]}\,\sqrt{\mathbb{E}[\widetilde{\psi}(t,T)]}$, where $\mu_{\Delta,t}$ is the instantaneous service-rate gap and $\widetilde{\psi}(t,T)$ measures how a decision at time $t$ propagates through the queue up to horizon $T$. The $\v\varepsilon$-exploration in phase~2 is specifically designed to enforce opposite monotonic behavior in $\mathbb{E}[\mu_{\Delta,t}]$ and $\mathbb{E}[\widetilde{\psi}(t,T)]$, which allows us to minimize this weighted sum and apply Chebyshev’s sum inequality, yielding the decaying queue length regret of order $O(T^{-1/4})$.

If one were to apply a vanilla ETC or UCB algorithm directly to the queueing bandit problem, this monotonicity structure would not hold and the above decomposition would not give a decaying queue length regret; for UCB, one instead obtains a non-decaying bound of order $O(\log^2 T)$, as in \Cref{alg:2}. This explains why our regret rates are neither $O(T^{-1/3})$ (classical ETC) nor $O(T^{-1/2})$ (standard contextual bandits), but rather $O(T^{-1/4})$ for CQB-$\v\varepsilon$ and $O(\log^2 T)$ for CQB-Opt.

\para{RL with queueing states}

We briefly relate our framework to the recent work of \citet{murthy2024performance}, which studies RL with queueing states in a countable state-space average-cost setting. While their problem is close in spirit to queueing bandits, their formulation does not directly encompass ours for the following reasons.

First, \citet{murthy2024performance} assumes a countable state space, whereas contextual queueing bandits naturally lead to an uncountable state space: our framework allows arbitrary context vectors from a continuous domain, and each state is represented by the list of remaining job context features. Second, their regret bound contains an approximation term arising from $Q$-function estimation via neural networks, of the form $c'' T$ where $c''$ upper-bounds the approximation error. If this black-box error is non-negligible, the resulting regret bound can be large and even grow linearly in $T$. In contrast, our analysis does not rely on a generic function approximator. Third, their regret notion is defined via the cumulative average queue length, whereas our queue length regret is the instantaneous gap between the queue length under our policy and that under an optimal policy in expectation. It is not clear whether a sublinear bound under their metric would translate into a decaying bound under ours.

\para{Coupling argument in multi-queue setting}

This suggests an interesting direction for future work. At a high level, defining a coupling argument for the multi-queue setting is straightforward. However, the real difficulty arises when checking whether the good structural properties for the single-queue case would still continue to hold. \Cref{lem:diff_bound} states that $\psi(t,T)\in\{-1,0,1\}$, i.e., the difference between the queue lengths under two consecutive policy-switching queues is always in $\{-1,0,1\}$. However, when we allow multiple queues, it is not as straightforward to control $\psi(t,T)$, making it difficult to directly extend our analysis to the multi-queue setting.

To illustrate this difficulty, let us consider a discrete-time system with two queues and one server. In each time slot, the server can serve one job from a chosen nonempty queue, and service is deterministic. The two coupled systems see identical arrivals. Define the policy $\pi^*$ as follows: in state $(Q_1(t),Q_2(t))$, if $Q_1(t) < Q_2(t)$ serve queue 1, if $Q_2(t) < Q_1(t)$ serve queue 2, and if $Q_1(t) = Q_2(t) > 0$ serve queue 1. Initially, $Q_1^+(0)=Q_1^-(0)=2$ and $Q_2^+(0)=Q_2^-(0)=2$. In time slot $t=1$, $\pi^*$ serves queue 1, while our policy makes a mistake and serves queue 2, and there is no job arrival. Thus $(Q_1^-(1),Q_2^-(1))=(1,2)$ and $(Q_1^+(1),Q_2^+(1))=(2,1)$. For $t\ge2$ both systems use $\pi^*$. The arrivals are $(A_1(2),A_2(2))=(0,0)$ and $(A_1(t),A_2(t))=(1,1)$ for all $t\ge3$. One checks by induction that for all $t\ge2$ we have $(Q_1^-(t),Q_2^-(t))=(0,t)$ and $(Q_1^+(t),Q_2^+(t))=(t,0)$, hence $Q_1^+(t)-Q_1^-(t)=t$.

Define $\psi_i(1,T):=Q_i^+(T)-Q_i^-(T)$ for $i\in\{1,2\}$. Then for all $T\ge2$ we get $\psi_1(1,T)=T$, so the per-queue difference grows linearly in $T$ even though the two systems differ only at a single time slot. In this example, we know that $\psi_1(1,T)+\psi_2(1,T)=0$, but the individual terms $\psi_1(1,T)$ and $\psi_2(1,T)$ are not necessarily bounded. This suggests that we need a more sophisticated analysis for the multi-queue setting. Therefore, it seems difficult to directly carry over our single-queue analysis to the multi-queue setting and still obtain meaningful bounds. Handling such multi-queue systems is a non-trivial but important problem, and we leave it as future work.

\para{Dependence on the slackness parameter}

We briefly comment on the role of the slackness parameter in our pure-exploration phase. In our analysis it is sufficient to know a \emph{lower bound} on $\varepsilon$ in order to relax the corresponding condition. We also view it as an interesting direction for future work to design algorithms that achieve a decaying queue length regret even when no such lower bound is available. Possible approaches include shifting the dependence on $\varepsilon$ to an external parameter (e.g., designing algorithms guaranteed to work when $T$ is chosen as a function of $1/\varepsilon$), or developing procedures that adaptively estimate $\varepsilon$ over time.

\para{Preemptive policy class and work conservation}

We clarify here which policy class we study and how it relates to the non-work-conserving routing policies in \citet{jali2024efficient,lin1984optimal}. Our model follows the queueing bandit framework of \citet{krishnasamy2016regret}, where in each time slot a single job is selected from the central queue and assigned to a server. This is directly analogous to a multi-armed bandit problem, and our contribution is to enrich this framework with contextual information for individual jobs. In this baseline formulation, one can view the system as having a single active server per time slot, so our analysis indeed focuses on work-conserving policies that always serve a job whenever the queue is nonempty.

The framework can be extended to multiple servers by selecting, in each time slot, a maximum-weight matching between jobs and servers based on their contextual service rates. Since our model permits preemption, idling an available server while jobs are waiting does not improve performance, so it suffices to focus on work-conserving policies in this preemptive setting.

By contrast, \citet{jali2024efficient,lin1984optimal} study non-preemptive scheduling, where once a job is assigned to a server it cannot be interrupted. In that setting, non-work-conserving policies can indeed be beneficial for queue length and latency: a job may prefer to wait in the central queue for a better-matched server rather than being routed immediately to a sub-par one. Thus, the main distinction is that our preemptive queueing bandit model justifies focusing on work-conserving policies, whereas the non-preemptive models in \citet{jali2024efficient,lin1984optimal} naturally motivate non-work-conserving routing rules.

\section{Use of Large Language Models} 
This manuscript is reviewed and edited for grammar and clarity using ChatGPT-5.

\end{document}